\def\eqref#1{equation~\ref{#1}}
\def\1{\bm{1}}
\DeclareMathAlphabet{\mathsfit}{\encodingdefault}{\sfdefault}{m}{sl}
\SetMathAlphabet{\mathsfit}{bold}{\encodingdefault}{\sfdefault}{bx}{n}
\DeclareMathOperator*{\argmax}{arg\,max}
\title{SRSA: Skill Retrieval and Adaptation for Robotic Assembly Tasks}
\author{Yijie Guo$^{1}$, Bingjie Tang$^{2}$, Iretiayo Akinola$^{1}$, 
 Dieter Fox$^{1,3}$, Abhishek Gupta$^{1,3}$ \& Yashraj Narang$^{1}$ \\
$^{1}$NVIDIA Corporation, $^{2}$University of Southern California, $^{3}$University of Washington\\
}
\begin{document}

\maketitle

\vspace*{-0.2in}
\begin{abstract}
\vspace*{-0.1in}
Enabling robots to learn novel tasks in a data-efficient manner is a long-standing challenge.
Common strategies involve carefully leveraging prior experiences, especially 
transition data collected on related tasks.
Although much progress has been made for general pick-and-place manipulation,
far fewer studies have investigated contact-rich assembly tasks, where precise control is essential.
We introduce \textbf{SRSA} (\textbf{S}kill \textbf{R}etrieval and \textbf{S}kill \textbf{A}daptation), a novel framework designed to address this problem by utilizing a pre-existing skill library containing policies for diverse assembly tasks.
The challenge lies in identifying which skill from the library is most relevant for fine-tuning on a new task. 
Our key hypothesis is that skills showing higher zero-shot success rates on a new task are better suited for rapid and effective fine-tuning on that task.
To this end, we propose to predict the transfer success for all skills in the skill library on a novel task, and then use this prediction to guide the skill retrieval process.
We establish a framework that jointly captures features of object geometry, physical dynamics, and expert actions to represent the tasks, allowing us to efficiently learn the transfer success predictor.
Extensive experiments demonstrate that SRSA significantly outperforms the leading baseline. When retrieving and fine-tuning skills on unseen tasks, SRSA achieves a 19\% relative improvement in success rate, exhibits 2.6x lower standard deviation across random seeds, and requires 2.4x fewer transition samples to reach a satisfactory success rate, compared to the baseline.
In a continual learning setup, SRSA efficiently learns policies for new tasks and incorporates them into the skill library, enhancing future policy learning. Furthermore, policies trained with SRSA in simulation achieve a 90\% mean success rate when deployed in the real world.
Please visit our project webpage \url{https://srsa2024.github.io/}.
\end{abstract}

\vspace*{-0.2in}
\section{Introduction}
\vspace*{-0.1in}

Humans excel at solving new tasks with few demonstrations or trial-and-error interactions. In robot learning, a key challenge is to similarly enable robots to learn control policies from sensory input in a data-efficient manner. Achieving data-efficient learning is crucial for deploying robots in diverse real-world environments, such as the household and industry. A compelling approach to efficient policy learning is the development of a foundation model or generalist policy that spans multiple tasks, as the model or policy can offer long-term efficiency gains by providing a strong base for adaptation to novel tasks. Significant advancements have been made in manipulation tasks, particularly in visual pre-training \citep{parisi2022unsurprising,nair2022r3m}, multi-task policy learning \citep{shridhar2022cliport, goyal2024rvt}, and policy generalization \citep{jang2022bc,ebert2021bridge}. 

Despite this progress, efficiently solving new tasks in contact-rich environments, such as robotic assembly, remains underexplored. Robotic assembly plays a critical role in industries like automotive, aerospace, and electronics, but learning assembly policies is uniquely difficult. These tasks require contact-rich interactions with high levels of precision and accuracy, compounded by the physical complexity of the environments, part variability, and strict reliability standards.  Much of the existing research focuses on training specialist (i.e., single-task) policies for individual assembly tasks \citep{spector2021insertionnet, spector2022insertionnet, tang2023industreal}. 
Building on the strengths of these specialist approaches, we propose a novel method for tackling new assembly tasks. Our approach leverages a skill library -- a collection of diverse specialist policies and associated information (such as object geometry and task-relevant trajectories) for various assembly tasks. These policies and data, regardless of the training strategies or learning approaches used to develop them, can be harnessed to efficiently solve previously-unseen assembly challenges.

\begin{figure}[t!]
\centering
\vspace*{-0.4in}
\includegraphics[width=0.95\linewidth]{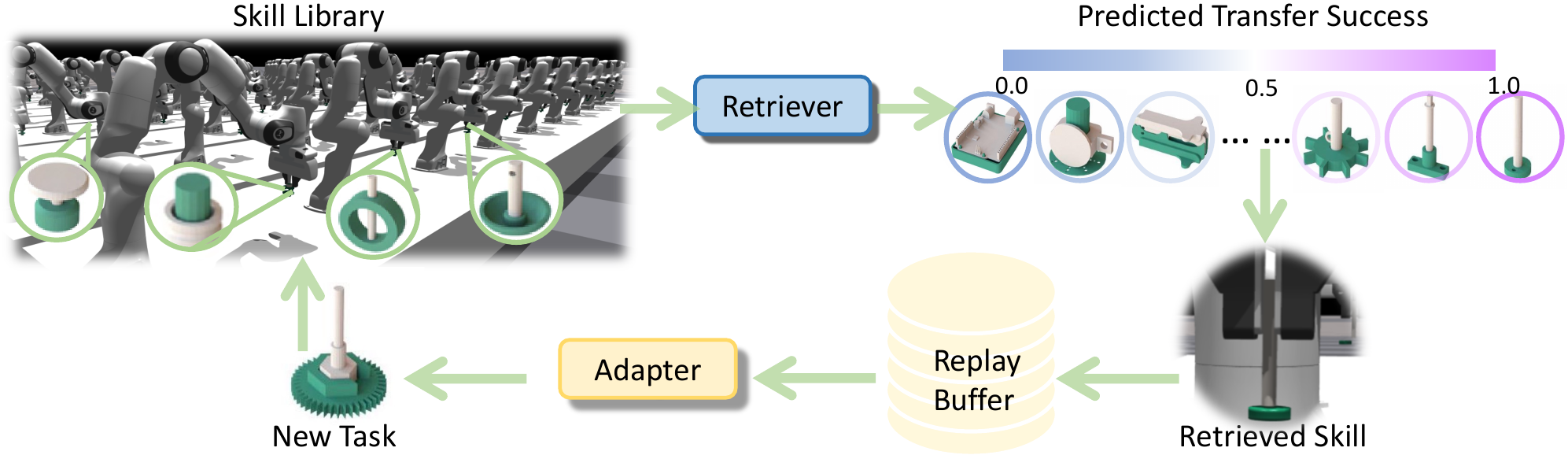}
\vspace*{-0.1in}
\caption{\textbf{Overview of SRSA.} 
We address assembly tasks, where the goal is to use a robot arm to insert diverse \textit{plugs} (i.e., the white parts) into or onto corresponding \textit{sockets} (i.e., the green parts). Specifically, we propose to predict the transfer success of applying prior skills (i.e., policies) to a new task, retrieve the skill with the highest predicted success rate, and fine-tune it on the new task. During fine-tuning, we accelerate and stabilize adaptation by incorporating imitation learning of high-rewarding transitions from the agent's own replay buffer. %
}
\label{fig:overview}
\vspace*{-0.2in}
\end{figure}

To utilize prior task experiences, previous work on general pick-and-place tasks has explored methods such as imitating state-action pairs from expert demonstrations \citep{du2023behavior,lin2024flowretrieval,kuang2024ram} and encoding sub-task skills as macro-action choices \citep{lynch2020learning, pertsch2021accelerating, nasiriany2022learning}. Unlike these approaches, which focus on reusing data or sub-task skills, our approach centers on adapting \textit{policies} from previous tasks to solve novel tasks.
These policies encapsulate essential task-solving knowledge in a generative form, making them a valuable starting point for further refinement.
Despite having access to a library of policies, identifying the most relevant ones for fine-tuning on new tasks is still an open question, and the success of fine-tuning hinges on making the right selection.
In this paper, we introduce \textbf{SRSA} (\textbf{S}kill \textbf{R}etrieval and \textbf{S}kill \textbf{A}daptation), a novel framework designed to retrieve policies for similar tasks and adapt them to new tasks, as illustrated in Fig.~\ref{fig:overview}. The key contributions of this paper are as follows:

\textbf{(1) Skill Retrieval Method}: We propose a skill retrieval method that simultaneously and explicitly learns embeddings for three fundamental components of assembly tasks: part geometry, interaction dynamics, and expert action choices.
We subsequently introduce a novel objective that leverages these embeddings to predict transfer success between any source policy and target task, implicitly capturing additional critical factors for policy transfer. This approach enables the effective retrieval of relevant skills, resulting in higher zero-shot transfer success when applied to new tasks.

\textbf{(2) Skill Adaptation Method}: We propose a skill adaptation method that fine-tunes retrieved skills on new tasks while incorporating a self-imitation learning method \citep{oh2018self} to enhance performance and stability during fine-tuning.
In a simulation-based, dense-reward setting explored in the leading assembly baseline \citep{tang2024automate}, SRSA achieves a relative improvement of 19\% in success rate with 2.4x faster training and 2.6x lower standard deviation across random seeds.
In simulation-based, sparse-reward settings without demonstrations or curricula (closely aligning with real-world fine-tuning scenarios), SRSA outperforms the baseline with a relative improvement of 135\% in success rate.
Furthermore, we demonstrate that policies fine-tuned in simulation can be directly transferred to real-world robots, achieving a 90\% average success rate without the need for additional training.
This capability of effectively fine-tuning policies in simulation on novel tasks, and transferring these policies to the real world in zero-shot,  highlights the potential for deploying high-performance solutions in real-world assembly tasks.

\textbf{(3) Continual Learning with SRSA}: Instead of training numerous specialist (i.e., single-task) policies from scratch, we propose gradually expanding a small set of initial skills via retrieval and adaptation to cover a broader range of tasks.
This strategy improves sample efficiency by over 80\% compared to \citep{tang2024automate} and stays consistently efficient as the skill library and target tasks evolve. Thus, SRSA provides an efficient solution for accumulating a large-scale collection of skills.

\vspace*{-0.1in}
\section{Related Work}
\vspace*{-0.1in}
\textbf{Robotic Assembly Tasks} Robotic assembly is a critical manufacturing process in the automotive, aerospace, electronics, and medical device industries, but \textit{adaptive} robotic assembly (e.g., robustness to part types, initial part poses, perceptual noise, control error, and environmental perturbations) is largely unsolved.
Research \citep{beltran-hernandez_variable_2020,luo2021robust,narang2022factory,tang2023industreal,zhang2023efficient, noseworthy2024forge} on adaptive assembly has seen significant growth in recent years. Despite advancements in datasets and real-world benchmarks for assembling small, realistic parts \citep{kimble2020benchmarking,kimble2022performance,willis2022joinable,tian2022assemble}, the exploration of policy learning across a wide variety of parts remains relatively limited. 
Many recent efforts in robotic assembly have concentrated on perception \citep{fu20226d,wen2022you} or planning \citep{tian2022assemble,tian2024asap}, rather than learning policies that are robust to disturbances and noise. Additionally, the policy-learning efforts that have addressed the widest range of assemblies have typically been restricted to $<$30 parts \citep{spector2021insertionnet, spector2022insertionnet,zhao2022offline}.
The largest study, AutoMate \citep{tang2024automate}, introduced a diverse dataset featuring 100 assembly tasks with simulation environments and 3D-printable parts, and explores policy learning across these tasks. However, its approach primarily focuses on learning specialist (i.e., single-task) policies \textit{from scratch} without leveraging prior experience or knowledge from related tasks. In contrast, our goal is to solve novel assembly tasks by leveraging skills from previously-solved assembly tasks.

\textbf{Retrieval-based Policy Learning}
Many studies have explored techniques for utilizing datasets from other tasks for pretraining, such as visual pretraining \citep{parisi2022unsurprising,nair2022r3m,xiao2022masked} and multi-task imitation learning \citep{jang2022bc,ebert2021bridge,shridhar2022cliport}. Recently, in robotic manipulation, 
some works have investigated how to selectively incorporate offline data from other tasks during policy learning, i.e., retrieving prior data according to expert demonstrations on the target task \citep{nasiriany2022learning, belkhale2024rt,shao2021concept2robot,zha2024distilling}.
For instance, \citet{du2023behavior} selects pertinent state-action pairs
based on visual and action similarity from offline, unlabeled datasets and jointly trains a policy using a small amount of expert demonstrations and the queried data via imitation learning. \citet{lin2024flowretrieval}, on the other hand, emphasizes motion similarity rather than semantic similarity by retrieving state-action pairs based on optical flow representations, followed by few-shot imitation learning with expert demonstrations and the retrieved data. \citet{kuang2024ram} takes a different approach by extracting a unified affordance representation from diverse data sources and hierarchically retrieving and transferring 2D affordance information based on language instructions to perform zero-shot robotic manipulation. 
These works primarily study \textit{data retrieval} for general pick-and-place manipulation tasks.
\citep{zhu2024retrieval} introduces a policy retriever for pick-and-place tasks, which selects policy candidates from a memory bank to align closely with the current input, based on the cosine similarity between instruction and observation features.
In contrast to these works, we focus on challenging contact-rich manipulation tasks, especially investigating transfer success predictor for \textit{policy retrieval}.

\textbf{Embedding Learning for Task and Skills}
\textit{Task} embedding learning has been extensively explored in meta-reinforcement learning and multi-task reinforcement learning problems, where shared knowledge across tasks can significantly enhance learning efficiency for new tasks.
Most previous approaches focus on capturing task features related to visual appearance in 2D images
or dynamics in transitions \citep{james2018task,rakelly2019efficient,lee2020context}. 
Contrastive learning is often employed to bring similar tasks closer together in the embedding space while pushing dissimilar tasks farther apart \citep{james2018task}.
\textit{Skill} embedding learning, on the other hand, leverages unstructured prior experiences (i.e., temporally extended actions that encapsulate useful behaviors) and repurposes them to solve downstream tasks.
Existing methods typically train a high-level policy where the action space consists of the extracted skills \citep{pertsch2021accelerating,nasiriany2022learning,hausman2018learning,sharma2019dynamics,lynch2020learning}.
Although most previous approaches use skills to solve subtasks and combine sequences of skills for long-horizon tasks, we focus on selecting and adapting a single relevant skill for a new task; our tasks of interest are assembly tasks, which are typically short-horizon but difficult to train due to exploration challenges and precise control requirements. 
Additionally, we integrate multiple embedding-learning approaches by \textit{jointly} capturing three fundamental components of assembly tasks: part geometry, interaction dynamics, and expert actions. We consolidate these perspectives for more robust task representation.

\vspace*{-0.2in}
\section{Problem Setup}
\vspace*{-0.2in}
\label{sec:problem}
In this work, we consider the problem setting of solving a new target task leveraging pre-existing skills from a skill library.
This library contains policies, each designed to solve a specific previously-encountered task. Our approach is motivated by situations \citep{rusu2016progressive, tirinzoni2019transfer, huang2021adarl} where an agent
can draw on knowledge from previously-learned policies to adapt quickly to a new task at hand.
Similar to the multi-task reinforcement learning (RL) formulation \citep{borsa2016learning, sodhani2021multi,calandriello2014sparse}, we consider a task space $\mathcal{T}$ where each task $T\in \mathcal{T}$ is defined as a Markov decision process (MDP) $\smash{(\mathcal{S}, \mathcal{A}, p, r, \gamma, \rho)}$. In this formulation,  $\mathcal{S}$ represents the state space, $\mathcal{A}$ the action space, $p (s_{t+1}|s_t, a_t)$ the transition dynamics, $r(s_t, a_t)$ the reward function, $\gamma \in [0, 1)$ the discount factor, and $\rho$ the initial state distribution.

\begin{figure}[t!]
\vspace*{-0.4in}
\centering
\includegraphics[width=0.92\linewidth]{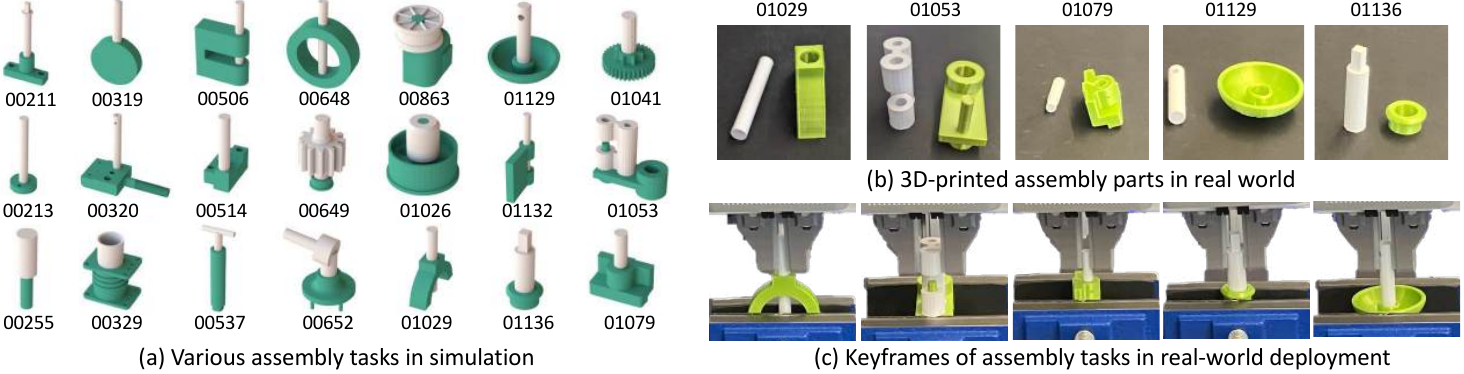}
\caption{\textbf{Illustration of assembly tasks in AutoMate and SRSA}. (a) Samples of assembly tasks in the AutoMate benchmark. (b) 3D-printed parts of corresponding real-world assembly tasks in SRSA. (c) Keyframes from video recordings of our real-world deployments of performant policies.}. 
\label{fig:assembly}
\vspace*{-0.3in}
\end{figure}

Our study focuses on two-part assembly tasks, as depicted in Fig.~\ref{fig:assembly}.
Following the setup of AutoMate \citep{tang2024automate}, each environment includes a Franka robot, a \textit{plug} (i.e., a part to be inserted), and a \textit{socket} (i.e., the part that mates with the given plug). 
In the initial state, we randomize the robot's joint configuration and socket pose, as well as the pose of the plug within the robot's gripper. 
The goal of each task is to insert a plug into its corresponding socket. (See Appendix~\ref{app:setup})

The state space $\mathcal{S}$ consists of the robot arm's joint angles and velocities, the end-effector pose and its linear/angular velocities, the current plug pose, and the end-effector goal pose.
The action space $\mathcal{A}$ consists of incremental pose targets for a task-space impedance controller. As described in \citep{tang2024automate}, although assembly trajectories are infeasible to procedurally generate, \textit{disassembly paths} can be easily generated, serving as reverse demonstrations that can be used by an RL agent.
Specifically, the RL reward function is composed of terms that penalize the distance to the goal, penalize simulation error, reward task difficulty in a curriculum, and imitate the reversed disassembly paths.
The assembly tasks all share the same state space $\mathcal{S}$ and action space $\mathcal{A}$, but vary in part geometries, transition dynamics $p$, and initial state distribution $\rho$.

Given a target task $\smash{T\in \mathcal{T}}$, we assume access to a prior task set $\smash{\mathcal{T}_{prior}=\{T_1, T_2,  \cdots, T_n\} \subseteq \mathcal{T}}$.
With policy space $\smash{\Pi: \mathcal{S}\rightarrow \mathcal{A}}$, the \textit{skill library} contains policies $\smash{\Pi_{prior}=\{\pi_1, \pi_2, \cdots, \pi_n\} \subseteq \Pi}$ that solve each of the prior tasks, respectively.
To solve a target task, the goal of RL is to find a policy $\pi(a_t|s_t)$ that produces an action for each state to maximize the expected return. We propose to first retrieve a skill (i.e., policy) for the most relevant prior task (Sec.~\ref{sec:retrieval}), and then rapidly and effectively adapt to the target task by fine-tuning the retrieved skill (Sec.~\ref{sec:adaptation}).

\vspace*{-0.1in}
\section{Method}
\vspace*{-0.1in}

\subsection{Skill Retrieval}
\vspace*{-0.1in}

\label{sec:retrieval}
To effectively retrieve the skills from $\Pi_{prior}$ that are useful for a new target task $T$, we require a means to estimate the potential of applying a source policy $\pi_{src} \in \Pi_{prior}$ to the task $T$. Concretely, we aim to obtain a function $\smash{F:\Pi \times \mathcal{T} \rightarrow \mathbb{R}}$, which takes as input a source policy and a target task, and produces a scalar score measuring how well the source policy can be adapted to the target task.

According to the simulation lemma \citep{agarwal2019reinforcement}, the difference in expected value when applying the same policy to different tasks partially depends on the difference in their transition dynamics and initial state distributions.
We execute a source policy $\smash{\pi_{src}}$ on both target task $\smash{T_{trg}}$ and its original source task $\smash{T_{src}}$. Let $\smash{r_{src, trg}}$ denote the zero-shot transfer success of $\smash{\pi_{src}}$ on $\smash{T_{trg}}$ and $\smash{r_{src, src}}$ its success rate on $\smash{T_{src}}$. These success rates reflect the expected value of $\smash{\pi_{src}}$ on $\smash{T_{trg}}$ and $\smash{T_{src}}$, respectively.
Notably, if $\smash{r_{src, trg}}$ is similar to $\smash{r_{src, src}}$, it suggests that the transition dynamics and initial state distributions of two tasks may be closely aligned. 
Since $\smash{\pi_{src}}$ is already an expert on $\smash{T_{src}}$ with a high success rate $\smash{r_{src,src}}$, a high zero-shot transfer success rate $\smash{r_{src, trg}}$ indicates strong task similarity.
Thus, we use the high transfer success rate as a heuristic indicator of similar dynamics and initial state distributions between source and target tasks. Details are in Appendix~\ref{app:theory}.

Subsequently, we hypothesize that fine-tuning a source policy on a target task with similar dynamics will be efficient, as it only requires adaptation to small differences in dynamics.
Therefore, we propose using zero-shot transfer success as a metric to gauge the potential to efficiently adapt a source policy to a target task.
To identify a source policy with high zero-shot transfer success on a given target task, we propose to learn a function $F$ to predict zero-shot transfer success for any pair of source policy $\pi_{src}$ and target task $T_{trg}$.
The prediction $\smash{F(\pi_{src}, T_{trg})}$ serves as an indicator of whether $\pi_{src}$ is a strong candidate to initiate fine-tuning for the target task $T_{trg}$.
Below, we describe data collection (Sec.~\ref{sec:data}), featurization (Sec.~\ref{sec:embedding_learning}), training (Sec.~\ref{sec:transfer_pred}) and inference (Sec.~\ref{sec:inference}) for the transfer success predictor $F$.

\begin{figure}[t!]
\centering
\vspace*{-0.3in}
\begin{subfigure}{.5\textwidth}
  \centering
  \includegraphics[width=0.9\linewidth]{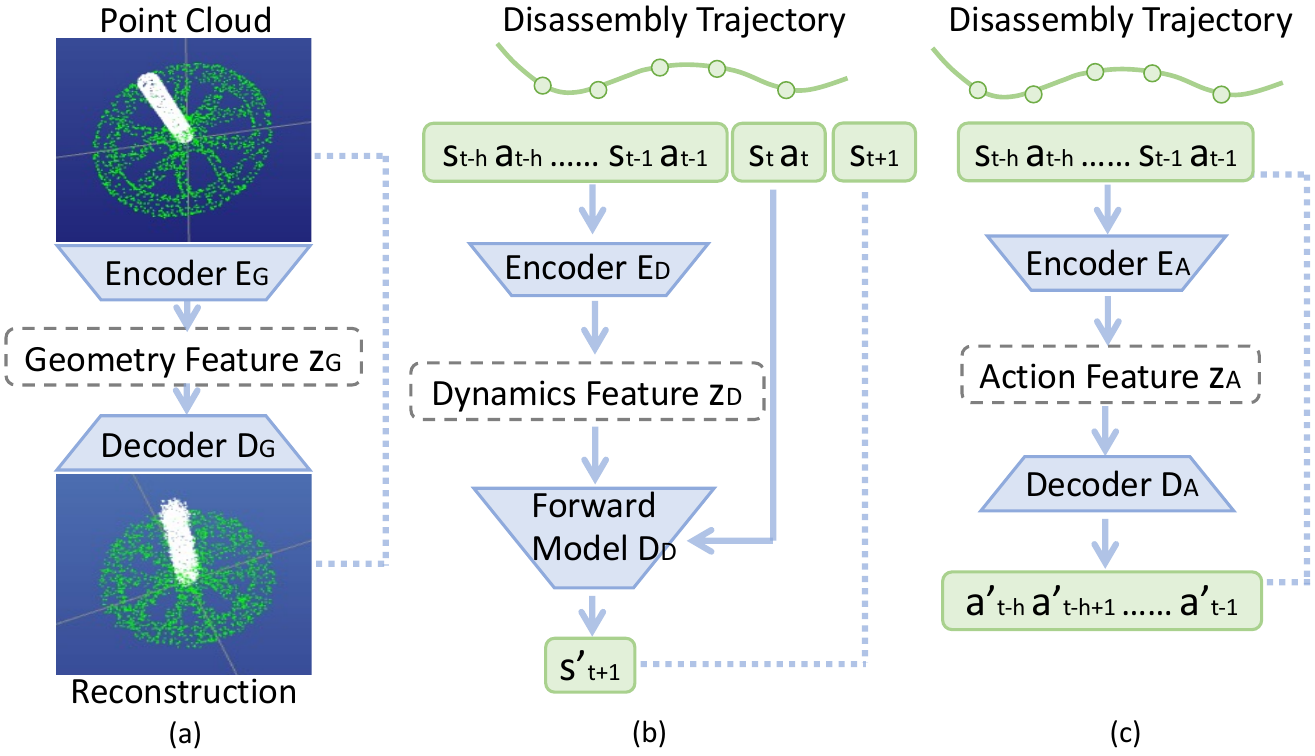}
\end{subfigure}%
\begin{subfigure}{.47\textwidth}
  \centering
  \includegraphics[width=0.9\linewidth]{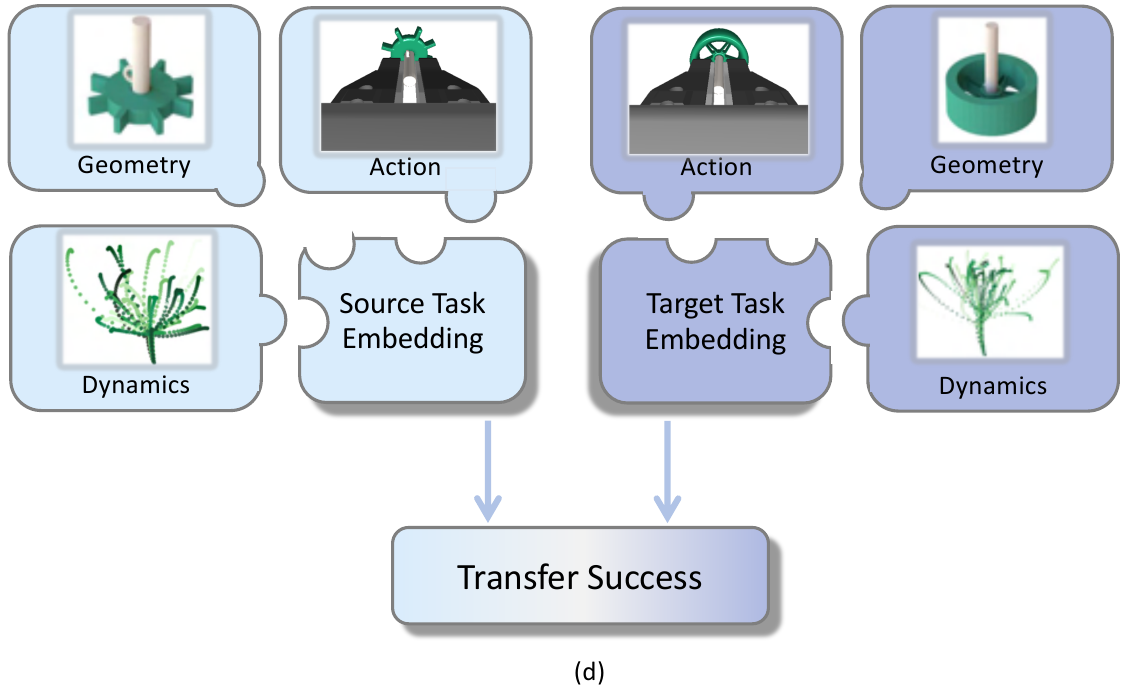}
\end{subfigure}%
\caption{\textbf{Illustration of skill retrieval approach.} We decompose skill retrieval into task feature learning(a-c) and transfer success prediction(d). (a) Geometry features are learned from point-cloud input using a PointNet autoencoder. (b) Dynamics features are learned from transition segments using a state-prediction objective. (c) Expert-action features are learned from transition segments using an action-reconstruction objective.
(d) The zero-shot transfer success rate (of applying a source policy to a target task) is predicted using these task features from the source and target tasks.}
\label{fig:method_retrievel}
\vspace*{-0.3in}
\end{figure}

\vspace*{-0.2in}
\subsubsection{Dataset Formulation}
\vspace*{-0.05in}
\label{sec:data}
In order to train the prediction function $F$, we construct a dataset of tuples $\smash{(\pi_{src}, T_{trg}, r_{src, trg})}$.
We treat any two tasks from the prior task set $\smash{\mathcal{T}_{prior}}$ as a source-target task pair.
For each pair $\smash{(\pi_{src}, T_{trg})}$, we evaluate the source policy $\smash{\pi_{src}}$ on the target task $\smash{T_{trg}}$ to obtain the zero-shot transfer success rate $\smash{r_{src,trg}}$.
In cases where multiple distinct policies exist for the same source task, each solving it in a different manner, policy-specific features would be necessary to capture nuances between different policies.
However, in our setting, each policy in the skill library is trained as an expert for a specific source task, with a one-to-one mapping between policies and their corresponding training tasks.
Consequently, we use the features of the source task $\smash{T_{src}}$ as a proxy to represent the source policy $\smash{\pi_{src}}$.
This process enables us to collect a training dataset of tuples $\smash{(T_{src}, T_{trg}, r_{src,trg})}$ from the prior skill library. 

\vspace*{-0.05in}
\subsubsection{Learning Task Features}
\vspace*{-0.05in}
\label{sec:embedding_learning}
Given the limited number of $\smash{(T_{src}, T_{trg})}$ pairs (specifically, during training, we have $n\times n$ pairs for a total of $n$ tasks in $\smash{\mathcal{T}_{prior}}$), we need a strong featurization of both the source policy and target task for efficient learning of $F$.
For assembly tasks, each task differs along three fundamental axes: part geometry, interaction dynamics, and expert actions that solve the task.
Thus, we propose a framework that jointly captures features of geometry, dynamics, and expert actions to represent the tasks, allowing us to efficiently learn the transfer success predictor $F$ (Fig.~\ref{fig:method_retrievel}). 

When learning geometry features, we assume access to object meshes for both seen and novel tasks.
This assumption is well-grounded in industry, where CAD models are widely available, allowing us to
learn embeddings of 3D geometry.
However, learning features for dynamics and expert actions poses a unique challenge.
For new assembly tasks, we assume that expert demonstrations are \textit{not} available, as these are typically tedious to obtain and often suboptimal for assembly tasks.
This deficit prevents us from easily computing dynamics or action embeddings.

We draw insight from \citep{tian2022assemble,tang2024automate}, which noted that, although procedurally generating assembly demonstrations for new tasks is intractable (narrow-passage problem), \textit{disassembly paths} can be trivially generated by employing a compliant low-level controller to lift an inserted plug from its socket and move it to a randomized pose.
We propose learning features for dynamics and expert actions using these \textit{disassembly paths} and hypothesize that such features are useful for predicting transfer success for assembly. We later empirically support this hypothesis.

From each task, we randomly sample a certain number of points from parts' meshes as the point cloud $\smash{P}$ and also randomly sample the transition segments $\smash{\tau}$ from the disassembly trajectories.
Using point clouds $\smash{P}$ or transition sequences $\smash{\tau}$, we learn encoders $\smash{E_G}$, $\smash{E_D}$, and $\smash{E_A}$ to capture features $\smash{z_G}$ (representing geometry), $\smash{z_D}$ (representing forward dynamics) and $\smash{z_A}$ (representing expert actions).
We also train decoders $\smash{D_G}$, $\smash{D_D}$, and $\smash{D_A}$ conditioned on these features to predict the point cloud for geometry, the next state for dynamics, and the action sequence for expert action choices. In Appendix~\ref{app:implementation}, we explain the implementation details for learning these features.

\vspace*{-0.05in}
\subsubsection{Learning Transfer Success Predictor}
\vspace*{-0.05in}
\label{sec:transfer_pred}
We consolidate task features of source $\smash{T_{src}}$ and target tasks $\smash{T_{trg}}$ to develop the transfer success predictor $\smash{F}$. 
We feed the sampled point cloud $\smash{P_{src}}$, $\smash{P_{trg}}$ and transition segments $\smash{\tau_{src}}$, $\smash{\tau_{trg}}$ from $\smash{T_{src}}$ and $\smash{T_{trg}}$, into the pre-trained and frozen encoders $\smash{E_G}$, $\smash{E_D}$ and $\smash{E_A}$.
The geometry, dynamics, and expert action features are concatenated together to form task features $\smash{z_{src}}$ and $\smash{z_{trg}}$.
We then pass the concatenated task features through an MLP to predict the transfer success $\smash{r_{src,trg}}$, as illustrated in Fig.~\ref{fig:method_retrievel}(d).
Formally, we train the function $\smash{F}$ to minimize the objective function (Eq.~\ref{eq:transfer_pred}):

\vspace*{-0.2in}
\begin{align}
    \mathcal{L} & \smash{= \| F(\pi_{src}, T_{trg}) - r_{src,trg} \|^2 = \| MLP(z_{src}, z_{trg}) - r_{src,trg} \|^2} \nonumber \\
    &\smash{= \|MLP(E_G(P_{src}), E_D(\tau_{src}), E_A(\tau_{src}), E_G(P_{trg}), E_D(\tau_{trg}), E_A(\tau_{trg})) - r_{src,trg}\|^2}
\label{eq:transfer_pred}
\end{align}
\vspace*{-0.2in}

\vspace*{-0.05in}
\subsubsection{Inferring Transfer Success for Retrieval}
\vspace*{-0.05in}
\label{sec:inference}
At test time, we use the well-trained function $F$ to predict the transfer success of applying any prior policy to a new task $\smash{T_{trg}}$ as $\smash{F(\pi_{src}, T_{trg})}$.
As described in Sec.~\ref{sec:embedding_learning}, for each task, we can randomly sample a certain number of points from parts' meshes as point clouds and randomly sample transition segments from the disassembly trajectories.
For each pair of source and target tasks, we sample the input data for $m$ times and average the output of $F$ to obtain a more robust prediction of transfer success.
Specifically, we sample point clouds $\smash{P_1, P_2, \cdots, P_m}$ and transition segments $\smash{\tau_1, \tau_2, \cdots, \tau_m}$, and then compute the averaged prediction for these samples, i.e., $\scriptstyle{F(\pi_{src}, T_{trg}) = \frac{1}{m} \sum_{i=1}^m MLP(E_G(P_{src, i}), E_D(\tau_{src, i}), E_A(\tau_{src, i}), E_G(P_{trg, i}), E_D(\tau_{trg, i}), E_A(\tau_{trg, i}))}$.
In this way, we infer the predicted transfer success $\smash{F(\pi_{src}, T_{trg})}$ for any source policies $\pi_{src}$ in the prior skill library $\smash{\Pi_{prior}=\{\pi_1, \pi_2, \cdots, \pi_n\}}$. 

Although the well-trained $F$ provides transfer success prediction as an effective guidance for retrieval, its predictions may not always be perfectly accurate. To mitigate this, we retrieve the top-$k$ source skills ranked by the predictor $F$. Among these $k$ candidates, we identify the most relevant skill by evaluating their zero-shot transfer success on the target task, and ultimately select the skill with the best transfer performance. This technique is grounded in the same intuition as introduced in Sec.~\ref{sec:retrieval}: zero-shot transfer success serves as a reliable metric for skill relevance. In the experiments in Sec.~\ref{sec:exp_finetuning}, we set $k$ to 5. Details are in Appendix~\ref{app:top-k}.

\vspace*{-0.05in}
\subsection{Skill Adaptation}
\vspace*{-0.05in}

\label{sec:adaptation}
As mentioned in Sec.~\ref{sec:problem}, our ultimate goal is to solve the new task as an RL problem. The retrieved skill is used to initialize the policy network $\smash{\pi_\theta(a_t|s_t)}$, and we subsequently use proximal policy optimization (PPO) \citep{schulman2017proximal} to fine-tune the policy on the target task. 
Our initialization provides a strong start for policy learning, as initial trials with the retrieved skills can achieve a reasonable success rate.
Inspired by self-imitation learning \citep{oh2018self}, we fully exploit these positive experiences gained during the initial phase of fine-tuning. We maintain a replay buffer $\smash{\mathcal{D}=\{(s_t, a_t, R_t)\}}$ to store the transitions encountered throughout the training, where $\smash{R_t=\sum_{k=t}^T \gamma^{k-t} r_k}$ is the discounted sum of rewards.
We prioritize the state-action pairs $(s_t, a_t)$ based on $R_t$ and imitate those pairs with high rewards.
The objective function is defined in Eq.~\ref{eq:sil}: 

\vspace*{-0.2in}
\begin{equation}
\mathcal{L}^{sil}=\mathbb{E}_{(s,a,R)\in \mathcal{D}}[\mathcal{L}^{sil}_{policy}+\beta \mathcal{L}^{sil}_{value}]
\label{eq:sil}
\end{equation}
\vspace*{-0.2in}

where $\smash{\mathcal{L}^{sil}_{policy}=-\log \pi_{\theta}(a|s)(R-V_\psi(s))_{+}}$, $\smash{\mathcal{L}^{sil}_{value}=\frac{1}{2}\|(R-V_{\psi}(s))_{+}\|^2}$, $\smash{(\cdot)_{+} =\max(\cdot, 0)}$, and $\pi_{\theta}$ and $V_{\psi}$ are the policy and value function (see the details in Appendix~\ref{app:method}).

As training progresses, the agent collects higher rewards on the target task, leading to an expanding replay buffer filled with improved experiences. As analyzed in \citep{tang2020self}, this self-imitation mechanism accelerates the agent's convergence to encountered high-reward behavior, even though it may introduce some bias into the policy.
In our case, the behavior derived from the retrieved skill is advantageous for the target task. We find that self-imitation learning significantly enhances and stabilizes policy fine-tuning, proving especially beneficial in sparse-reward scenarios.

\vspace*{-0.05in}
\subsection{Continual Learning with Skill-Library Expansion}
\vspace*{-0.05in}

Continual learning investigates learning various tasks in a sequential manner. The primary objective is to overcome the forgetting of previously learned tasks and to leverage earlier knowledge to achieve better performance and/or faster convergence
on incoming tasks \citep{ ring1994continual, xu2018reinforced, abel2024definition}.
We integrate SRSA in the continual-learning setup and gradually expand the skill library.
Specifically, we begin with an initial skill library $\smash{\Pi_{prior}}$ corresponding to prior tasks $\smash{\mathcal{T}_{prior}}$. When faced with a new batch of tasks $\smash{T^{j}=\{T_1, T_2, \cdots, T_k\}}$, we apply SRSA to retrieve and fine-tune policies for each new task $T_i$.
The learned policies are then incorporated as $\smash{\mathcal{T}_{prior} = \mathcal{T}_{prior}\cup \{T_i\}}$; $\smash{\Pi_{prior} = \Pi_{prior}\cup \{\pi_i\}}$.
This approach allows us to efficiently tackle new tasks by leveraging the skill library and simultaneously prevent the forgetting of all learned tasks by maintaining and expanding the skill library. 
See Appendix~\ref{app:method} for the algorithm pseudocode.

\vspace*{-0.1in}
\section{Experiments}
\vspace*{-0.1in}

\label{sec:exp}
We design experiments to answer questions: (1) Can SRSA retrieve source policies that achieve a better zero-shot transfer success rate on test tasks compared to baseline retrieval approaches? (2) Can SRSA policy fine-tuning improve learning performance, stability, and efficiency on test tasks? (3) After fine-tuning, can SRSA high-performing policies from simulation be deployed in zero-shot to the real world? (4) Can SRSA be applied in the continual-learning scenario to improve learning efficiency by gradually expanding a skill library? We investigate these questions on the AutoMate benchmark \citep{tang2024automate}, which consists of 100 two-part assembly tasks with diverse parts, enabling us to study challenging contact-rich assembly tasks in simulation and the real world.

\vspace*{-0.05in}
\subsection{Skill Retrieval}
\vspace*{-0.05in}
AutoMate provides meshes and disassembly trajectories for each task. We use these data to learn the task embedding for retrieval. We compare the following retrieval strategies. 
\textbf{Signature}: retrieve the task with the closest path signature \citep{barcelos2024path, chen1958integration, kidger2019deep}, which represents the disassembly trajectories as a collection of path integrals \citep{tang2024automate}. 
\textbf{Behavior}: retrieve the task with the closest VAE embedding of state-action pairs on disassembly trajectories.
\textbf{Forward}: retrieve the task with the closest latent vector for the transition sequence on the disassembly trajectories, where the latent vector was trained to predict the forward dynamics.
\textbf{Geometry}: retrieve the task with the closest PointNet \citep{qi2017pointnet, wang2023dexgraspnet} encoding for the point clouds of the assembly assets.
\textbf{SRSA}: retrieve the source task with the highest prediction of transfer success on the target task.
Implementation details can be found in Appendix~\ref{app:implementation}.

\begin{figure}[h!]
\vspace*{-0.1in}
\centering
\begin{subfigure}{.25\textwidth}
  \centering
  \includegraphics[width=\linewidth]{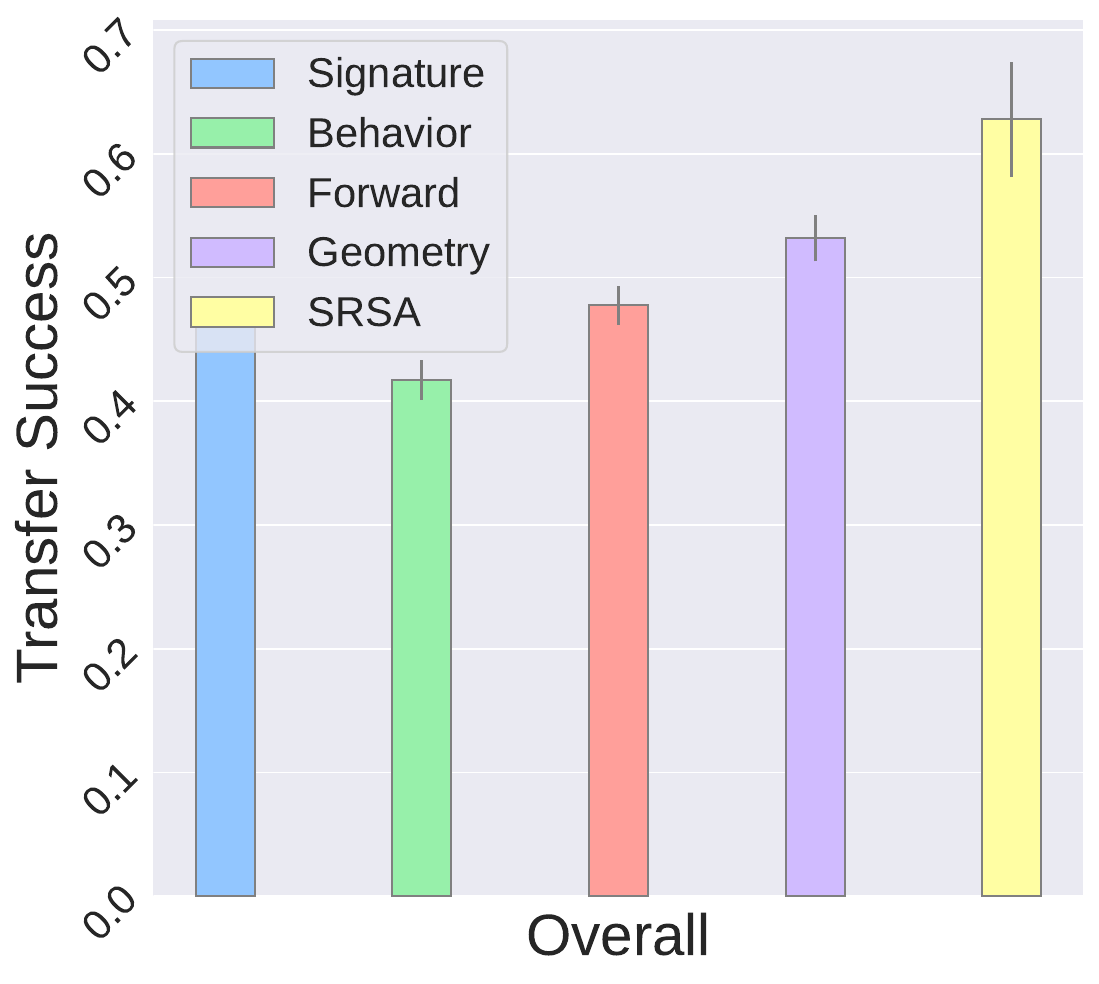}
\end{subfigure}%
\begin{subfigure}{.68\textwidth}
  \centering
  \includegraphics[width=\linewidth]{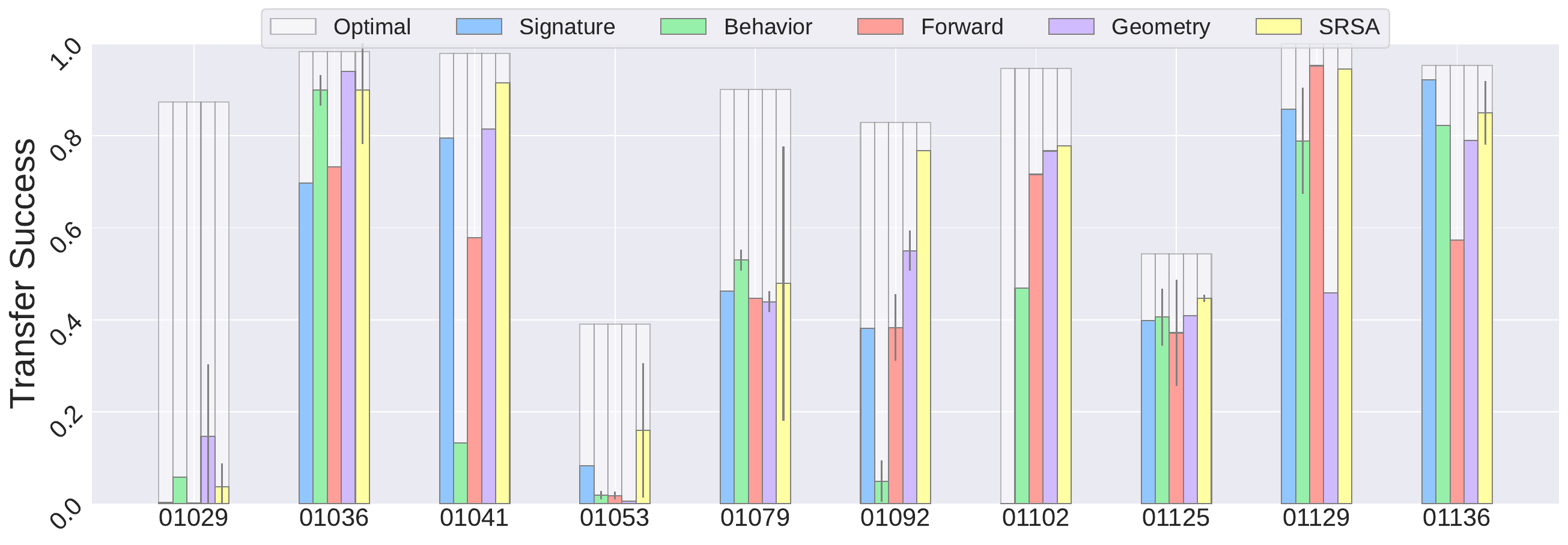}
\end{subfigure}
\caption{\textbf{Zero-shot transfer success of retrieved skills when applied to test tasks}. For each test task, we retrieve a policy from the prior skill library using 5 different approaches (4 baselines and SRSA). If the approach involves training neural networks, we train on 3 random seeds. \textit{Optimal} represents the best transfer success rate on the target task among all source policies. \textbf{Left}: Mean and standard deviation of transfer success rate, averaged over 10 test tasks with 3 seeds each. \textbf{Right}: Mean and standard deviation of success rate for each test task, averaged over 3 seeds. Overall, SRSA substantially outperforms baselines.}
\label{fig:retrieval_left}
\vspace*{-0.1in}
\end{figure}

Given the 100 tasks in the AutoMate benchmark, we split the task set into 90 prior tasks (to build the skill library) and 10 test tasks (as the new tasks to solve). For both SRSA and baseline methods, we train the retrieval model with three random seeds and report the average and standard deviation of transfer success across these seeds.
Fig.~\ref{fig:retrieval_left} shows the result on the set of test tasks. SRSA performs best or second-best on all test tasks, except for one very challenging assembly, where all methods perform poorly (01029). In Appendix~\ref{app:exp}, we show additional comparisons for other splits of prior and test task sets. Overall, SRSA retrieves source policies that obtain around 20\% higher success rates on the test tasks, compared to baselines.

\vspace*{-0.1in}
\subsection{Skill Adaptation}
\vspace*{-0.05in}
\label{sec:exp_finetuning}

\begin{figure}[t!]
\vspace*{-0.3in}
\centering
\begin{subfigure}{.3\textwidth}
  \centering
  \includegraphics[width=0.9\linewidth]{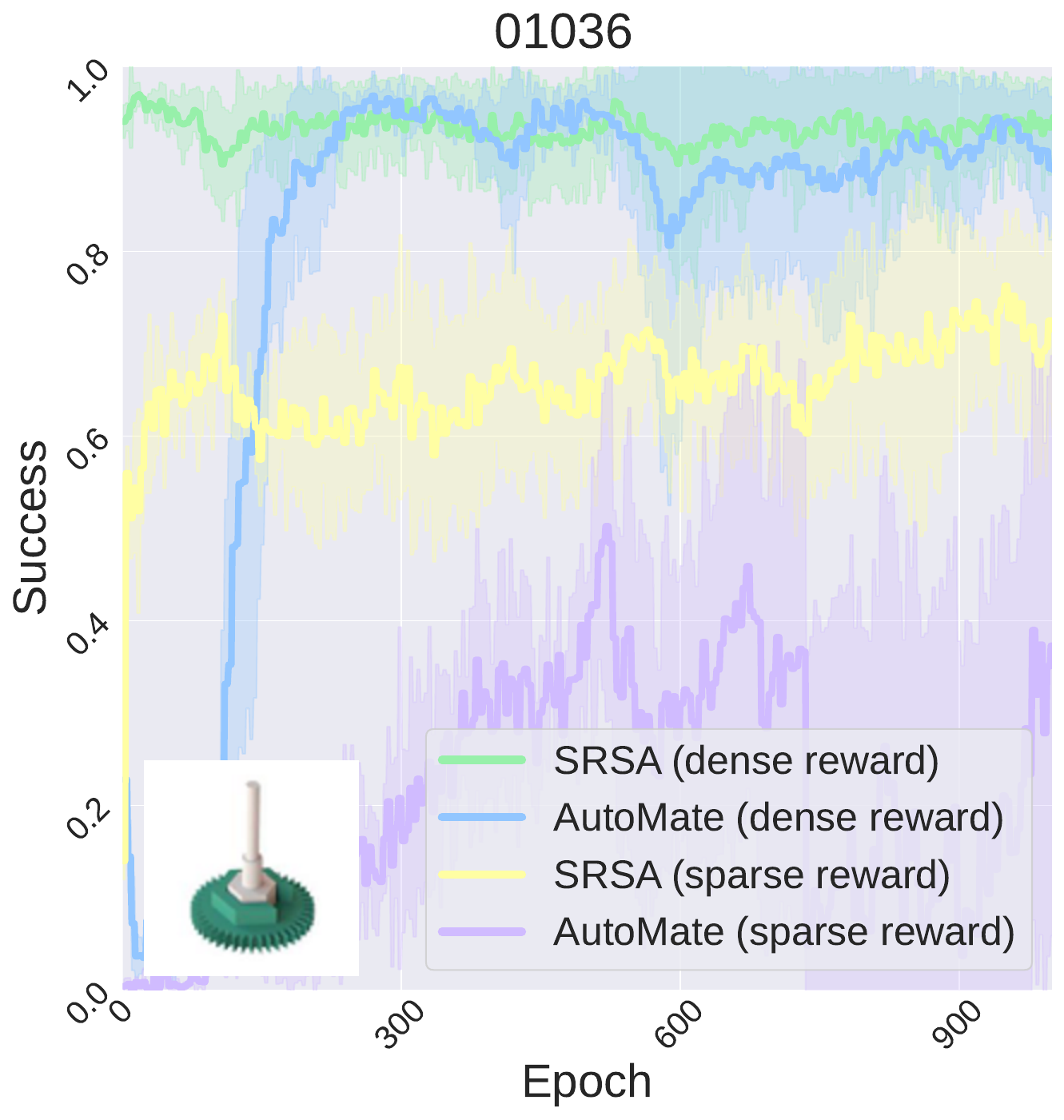}
\end{subfigure}%
\begin{subfigure}{.3\textwidth}
  \centering
  \includegraphics[width=0.9\linewidth]{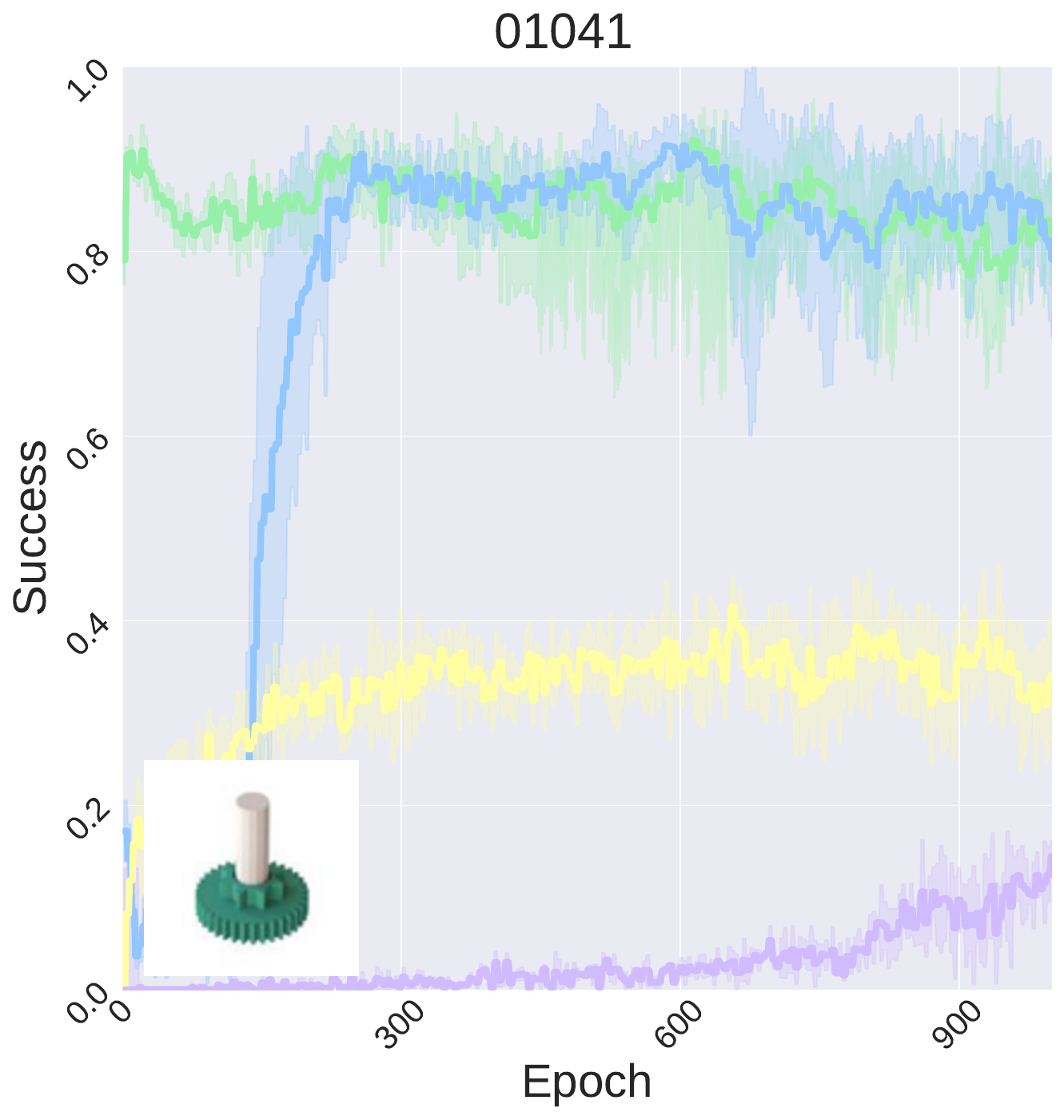}
\end{subfigure}%
\begin{subfigure}{.3\textwidth}
  \centering
  \includegraphics[width=0.9\linewidth]{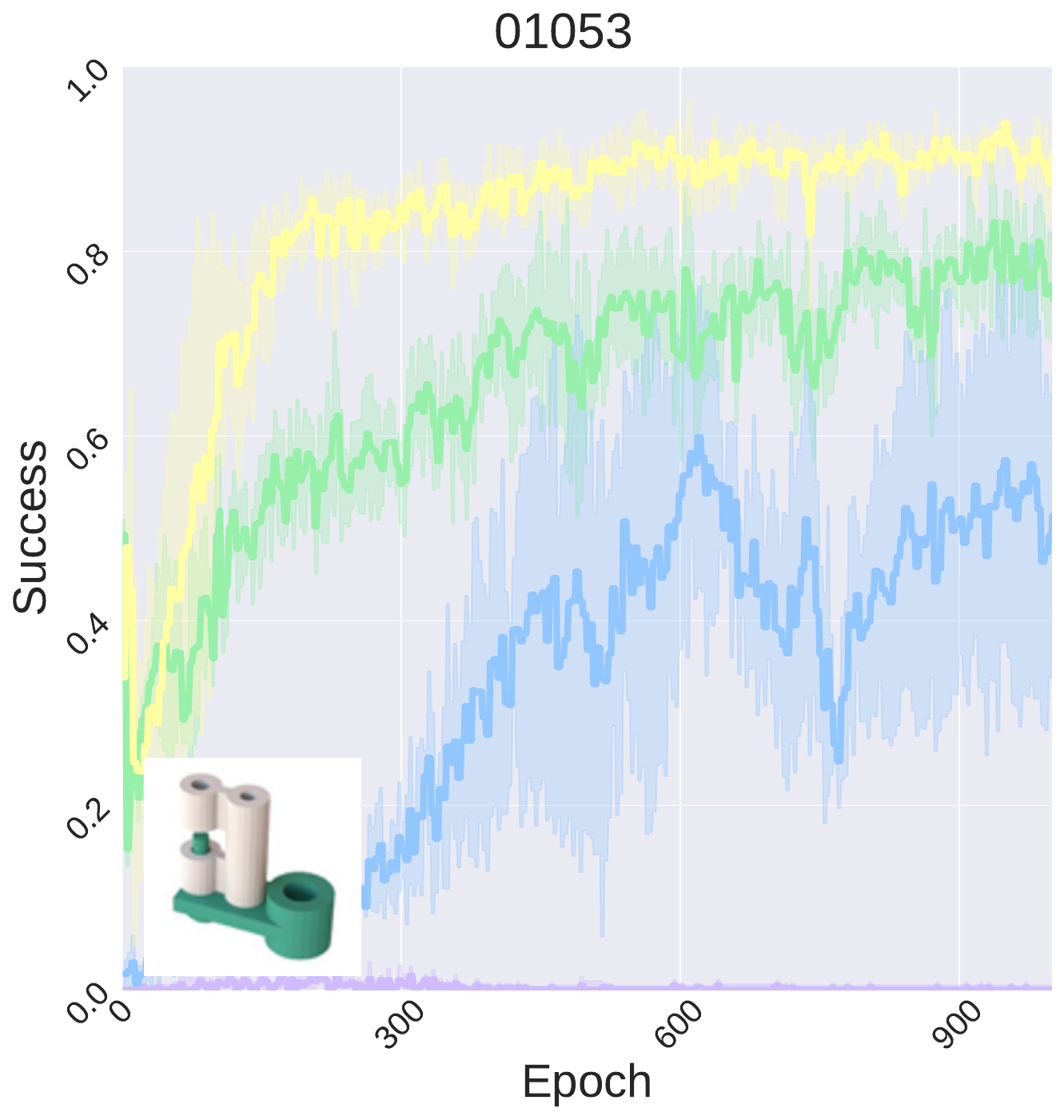}
\end{subfigure}
\begin{subfigure}{.3\textwidth}
  \centering
  \includegraphics[width=0.9\linewidth]{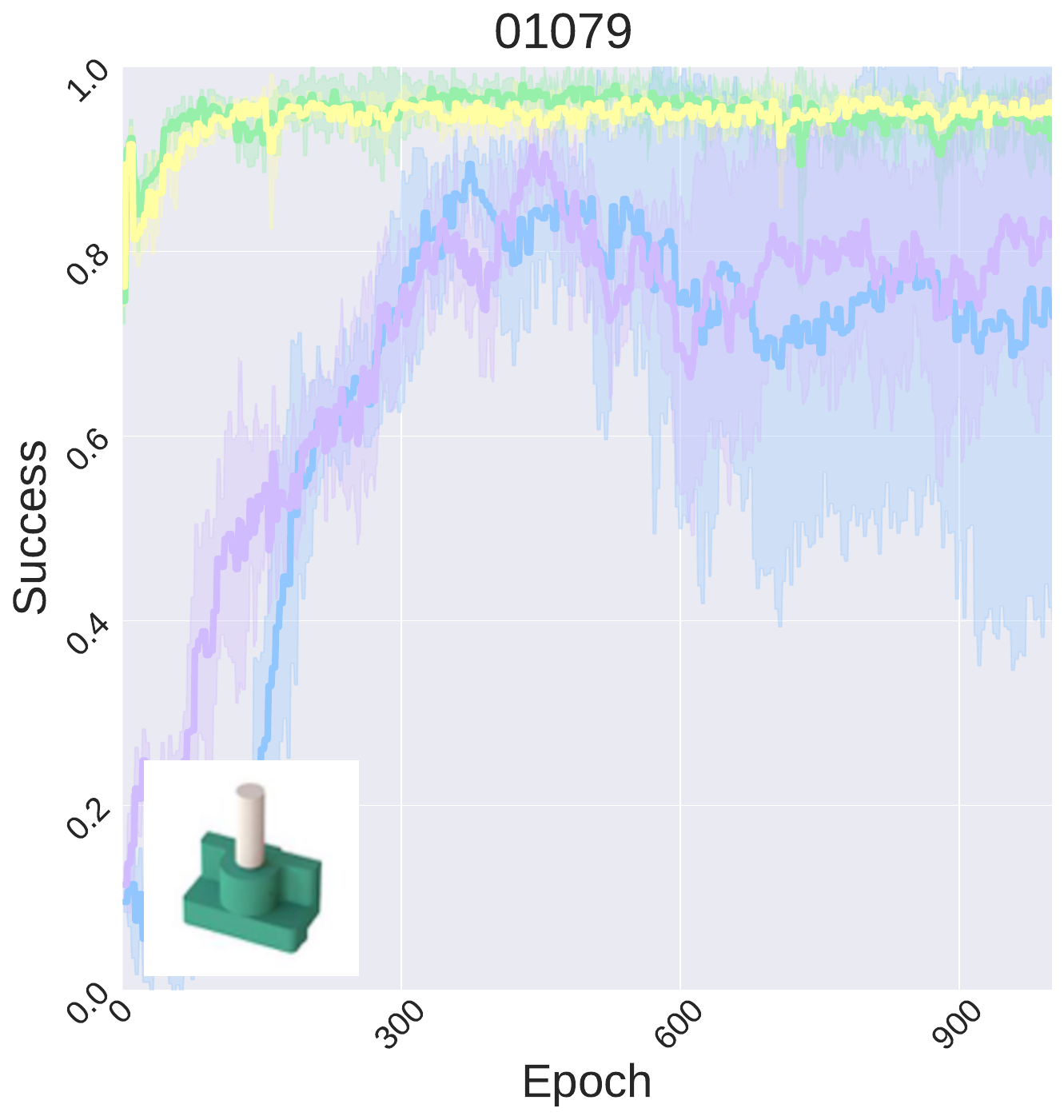}
\end{subfigure}%
\begin{subfigure}{.3\textwidth}
  \centering
  \includegraphics[width=0.9\linewidth]{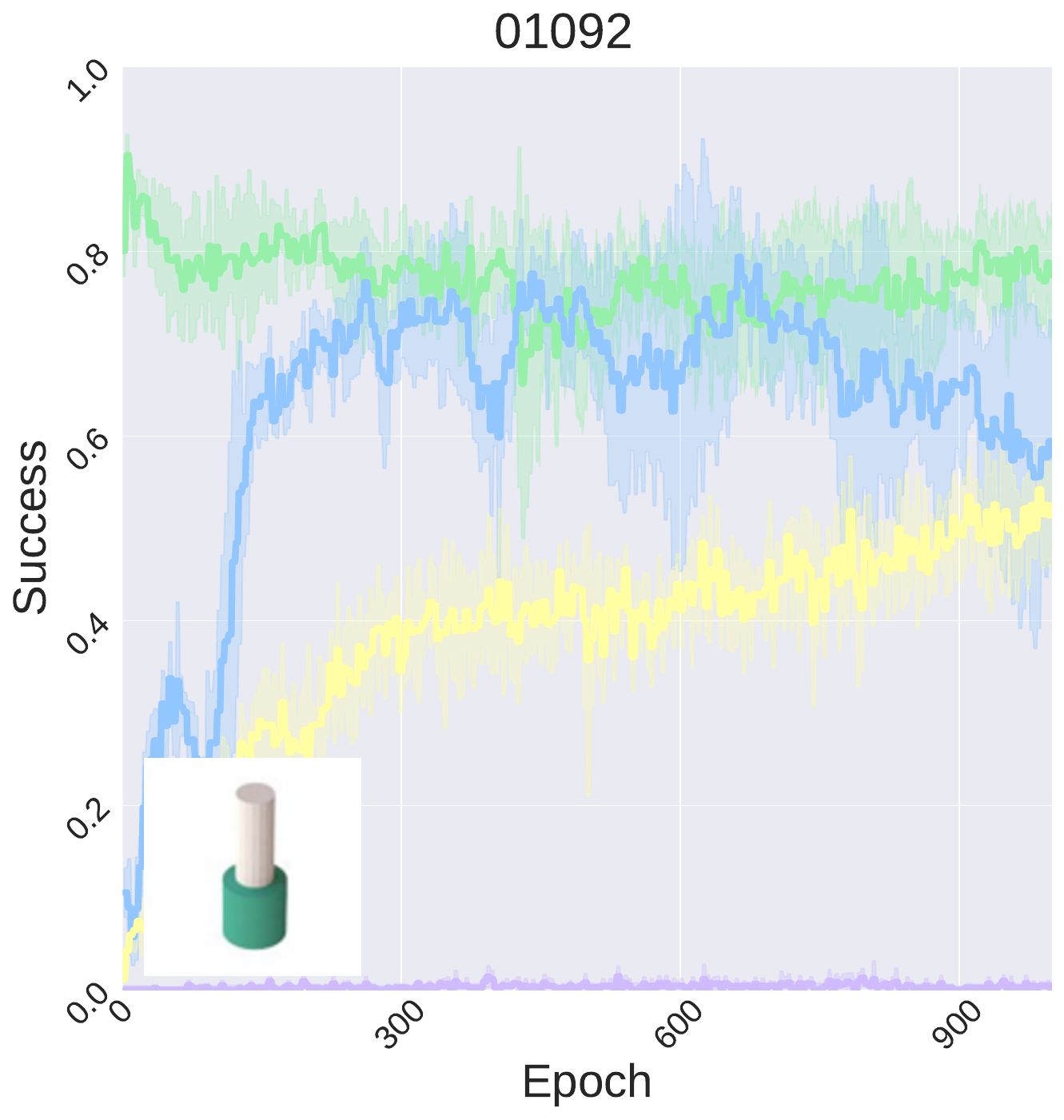}
\end{subfigure}%
\begin{subfigure}{.3\textwidth}
  \centering
  \includegraphics[width=0.9\linewidth]{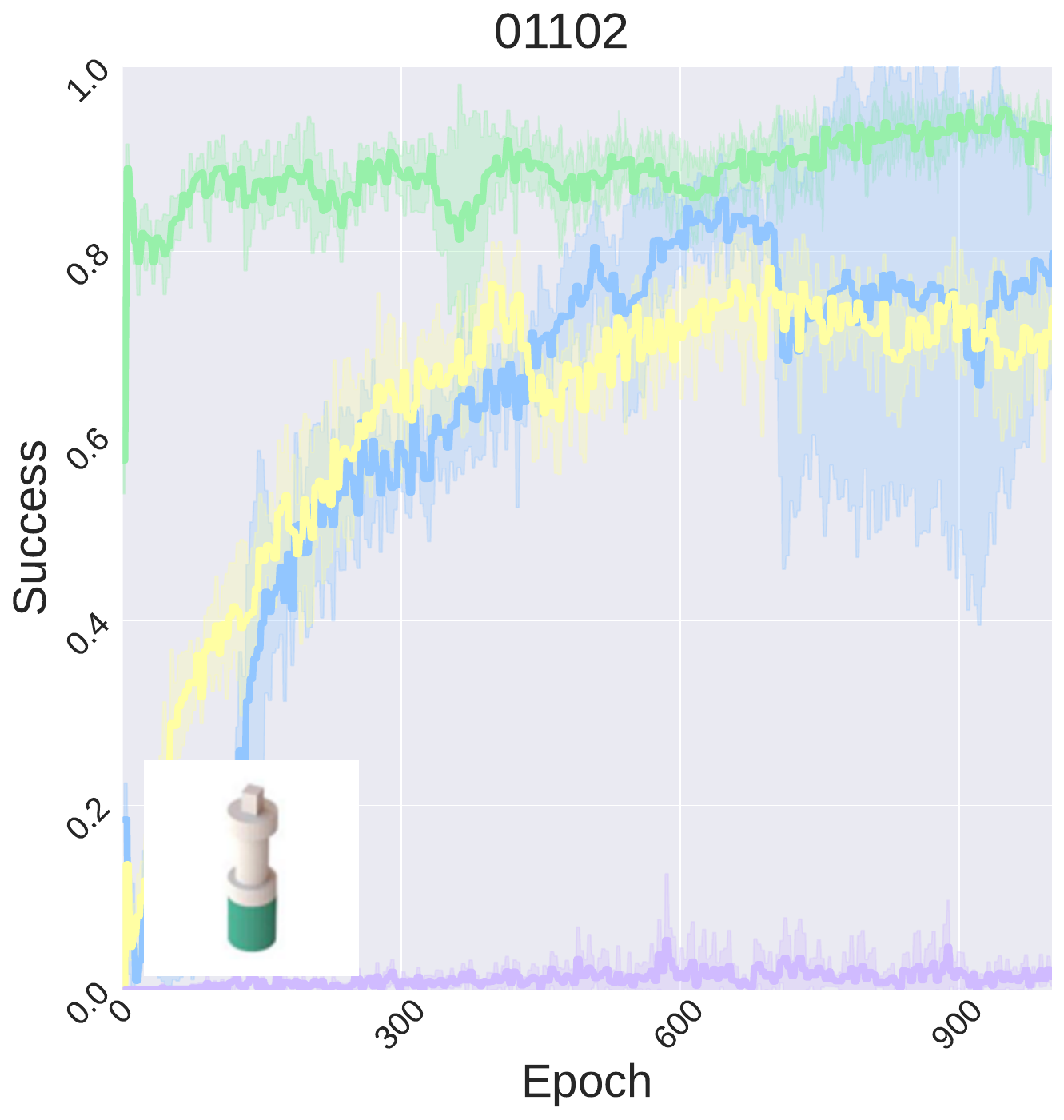}
\end{subfigure}
\begin{subfigure}{.3\textwidth}
  \centering
  \includegraphics[width=0.9\linewidth]{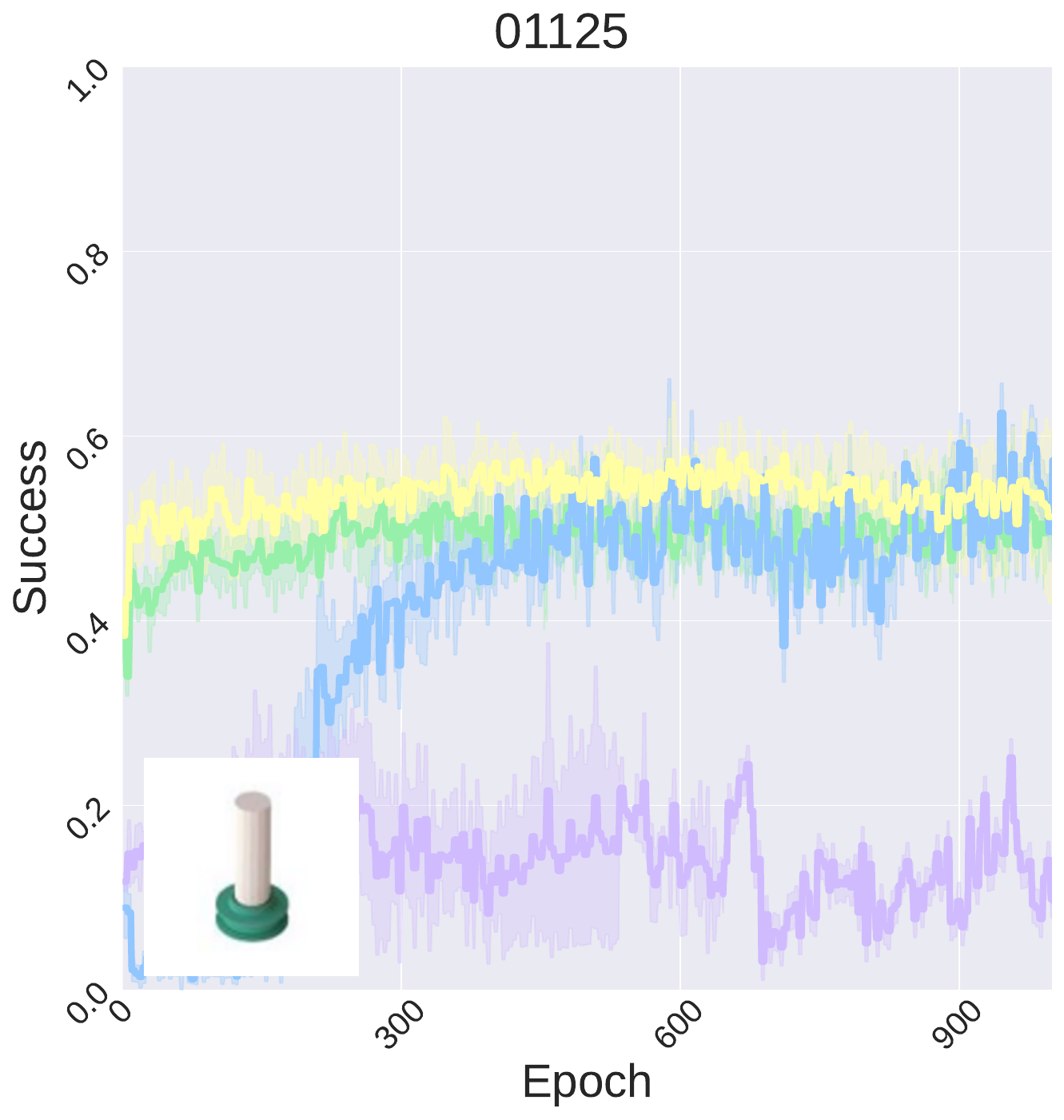}
\end{subfigure}%
\begin{subfigure}{.3\textwidth}
  \centering
  \includegraphics[width=0.9\linewidth]{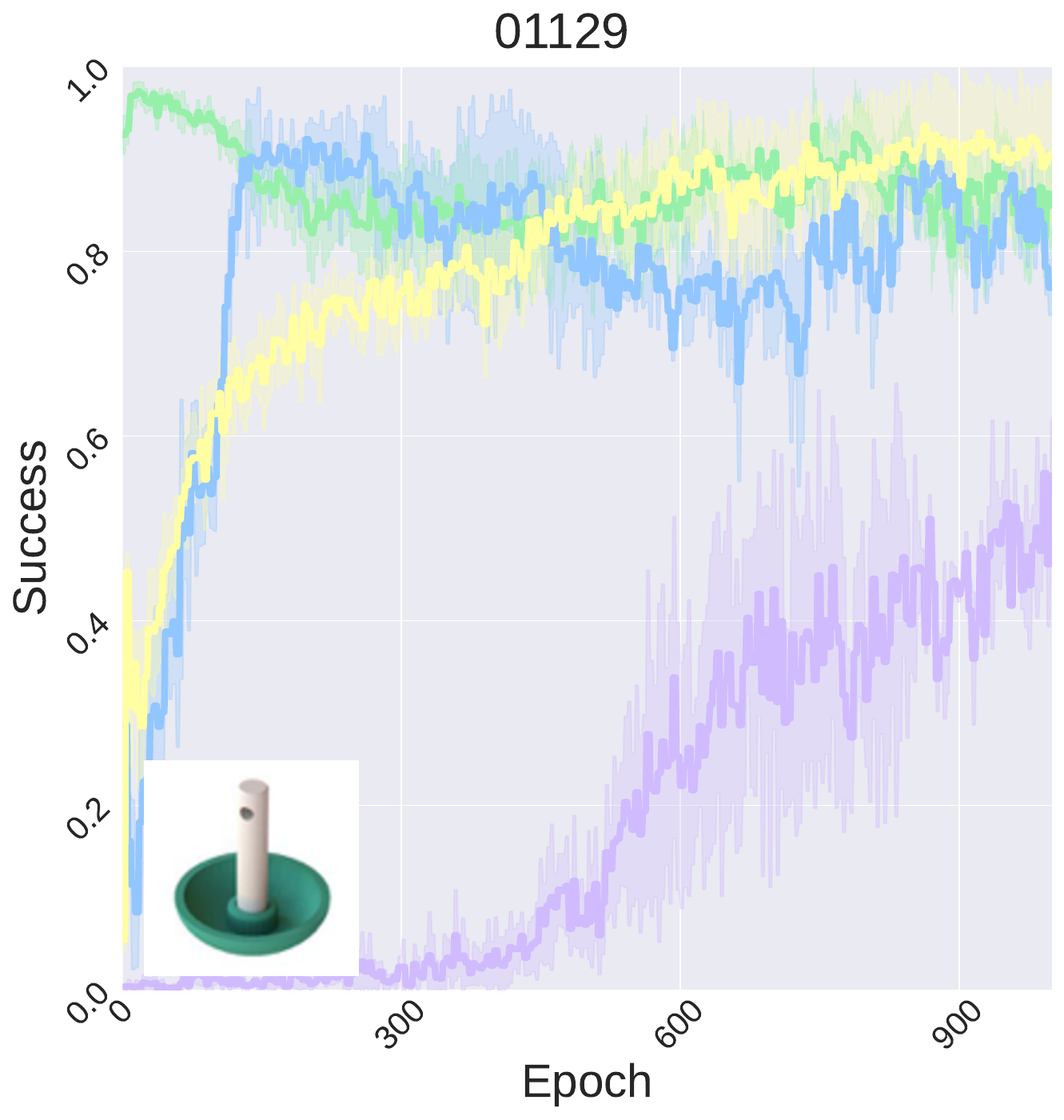}
\end{subfigure}%
\begin{subfigure}{.3\textwidth}
  \centering
  \includegraphics[width=0.9\linewidth]{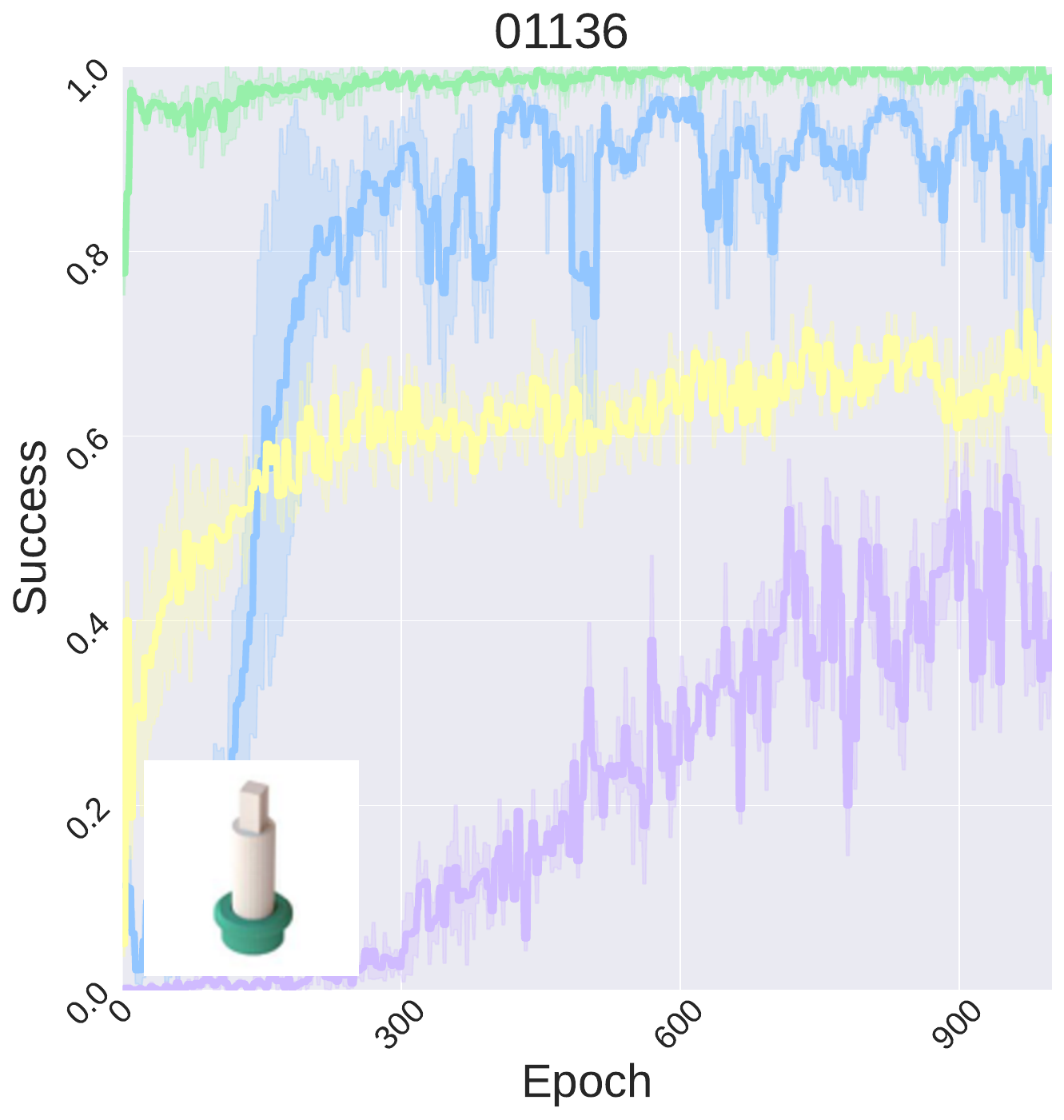}
\end{subfigure}
\vspace*{-0.1in}
\caption{\textbf{Learning curves on test tasks.}  The $x$-axis represents training epochs, where each epoch consists of 128 environment steps over 256 parallel environments. The $y$-axis represents success rate. The solid line shows the mean success rate over 5 runs with different random seeds, and the shaded area denotes the standard deviation.}
\label{fig:skill_adaptation}
\vspace{-0.2in}
\end{figure}

In this section, rather than investigating zero-shot transfer, we study policy learning on test tasks.
We compare AutoMate learning specialist policies from scratch  \citep{tang2024automate}  and SRSA fine-tuning the retrieved specialist policy with self-imitation learning.
Details are in Appendix~\ref{app:implementation}.
We consider both the dense-reward setting (identical to AutoMate), which includes a reward term imitating disassembly demonstrations and a curriculum, and the sparse-reward setting, which only provides a nonzero reward for task success.
The sparse-reward setting is designed to emulate real-world RL fine-tuning, where dense-reward information is much more challenging to acquire.
 
\begin{wrapfigure}{r}{6cm}
    \vspace{-0.32in}
    \centering
    \includegraphics[width=0.85\linewidth]{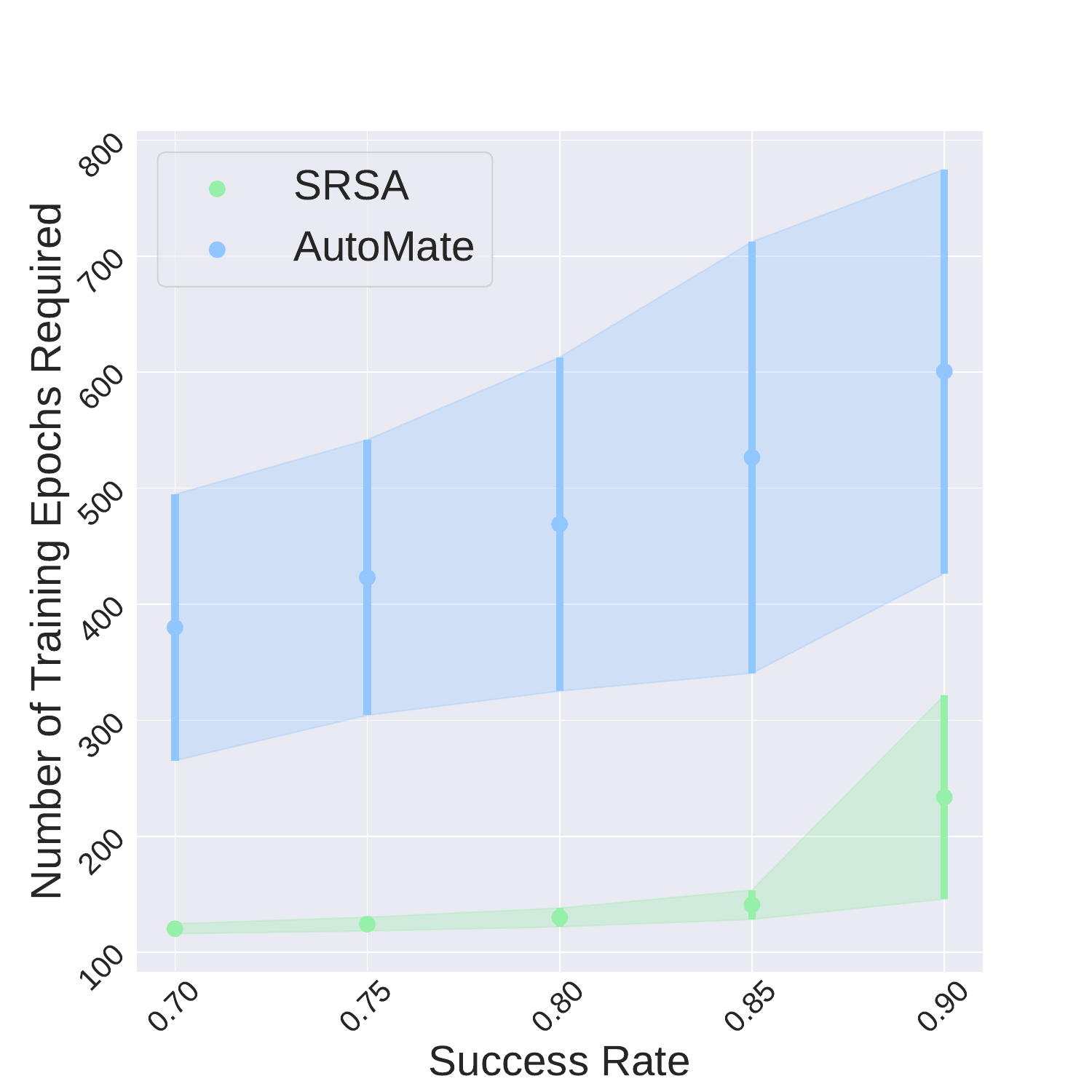}
    \vspace*{-0.05in}
    \caption{\textbf{Sample efficiency on test set}. To achieve a desired success rate (0.70, 0.75, 0.80, 0.85, or 0.90), we identify how many training epochs are required for each run. We illustrate the mean and standard deviation of required epochs across 5 runs with the points and error bars in the figure, averaged over 10 test tasks.}
    \label{fig:suc_level}
    \vspace{-0.2in}
\end{wrapfigure}

Fig.~\ref{fig:skill_adaptation} shows the learning curves in the set of test tasks.
In the dense-reward setting, SRSA achieves strong performance with fewer training epochs than AutoMate.
In the sparse-reward setting, AutoMate struggles to achieve a reasonable success rate, whereas SRSA benefits from the retrieved skill initialization and self-imitation learning, enabling it to reach higher performance. 
Additionally, in both settings, the learning curves of AutoMate exhibit instability with fluctuating success rates as the training goes on.
Tab.~\ref{tab:skill_adaptation} and Tab.~\ref{tab:skill_adaptation_sparse} in Appendix~\ref{app:exp} summarize the mean and standard deviation of the success rate at the last epoch of training. These values are averaged across 5 random seeds for each test task.
In the dense-reward setting, SRSA reaches an average success rate of 82.6\% on 10 test tasks, outperforming AutoMate (69.4\%), corresponding to a relative improvement of 19\% in performance. Moreover, SRSA shows greater stability, as AutoMate exhibits a 2.6x higher standard deviation.
In the sparse-reward setting, SRSA delivers a notable 135\% relative improvement in the average success rate compared to the baseline. 
Fig.~\ref{fig:suc_level} demonstrates the number of training epochs required to reach the desired success rate in the dense-reward setting. Averaged over 10 test tasks and 5 random seeds, SRSA requires far fewer training samples, i.e., at least 2.4 times fewer training epochs, to achieve an arbitrary success threshold.

\vspace*{-0.1in}
\subsection{Real-World Deployment}
\vspace*{-0.1in}

\begin{wrapfigure}{r}{7.6cm}
\addtolength{\tabcolsep}{-0.4em}
\begin{tabular}{c|ccccc|c}
\toprule
Asset ID & 01029  & 01053  & 01079  & 01129 & 01136 & Overall  \\ \hline
AutoMate &  7/10     & 1/10   & 7/10   &    4/10   & 8/10  &  54\%  \\
SRSA     &  9/10   & 8/10  & 8/10   &  10/10   & 10/10  &  90\% \\ \toprule
\end{tabular}
\caption{\textbf{Real-world evaluation.} We take the best checkpoint of policies across 5 runs within 500 epochs and report the success rate over 10 trials for each task.}
\label{tab:real_world}
\vspace{-0.2in}
\end{wrapfigure}

We now deploy the trained specialist policies in the real world. As in \citep{tang2024automate}, we place the robot in lead-through mode (a.k.a., manual guide mode), grasp a plug, guide it into the socket, and record the pose as a target pose.
We then programmatically lift the plug until free from
contact, apply perturbations to the position and rotation of the end effector, and deploy a policy to assemble the plug into the socket.
Such conditions emulate the perceptual noise and control error that are experienced in full robotic assembly pipelines.
In Tab.~\ref{tab:real_world}, we take the best checkpoint in 500 training epochs in simulation and record its performance when deployed in the real world. 
In this relatively brief training time, SRSA reaches higher success rates than the baseline on real-world assembly tasks.
We show keyframes of real-world deployments in Fig.~\ref{fig:assembly}(c).
For videos, please refer to the project website \url{https://srsa2024.github.io/}.

\vspace*{-0.1in}
\subsection{Continual Learning}

\begin{figure}[h!]
\vspace*{-0.2in}
\centering
\begin{subfigure}{.32\textwidth}
  \centering
  \includegraphics[width=0.95\linewidth]{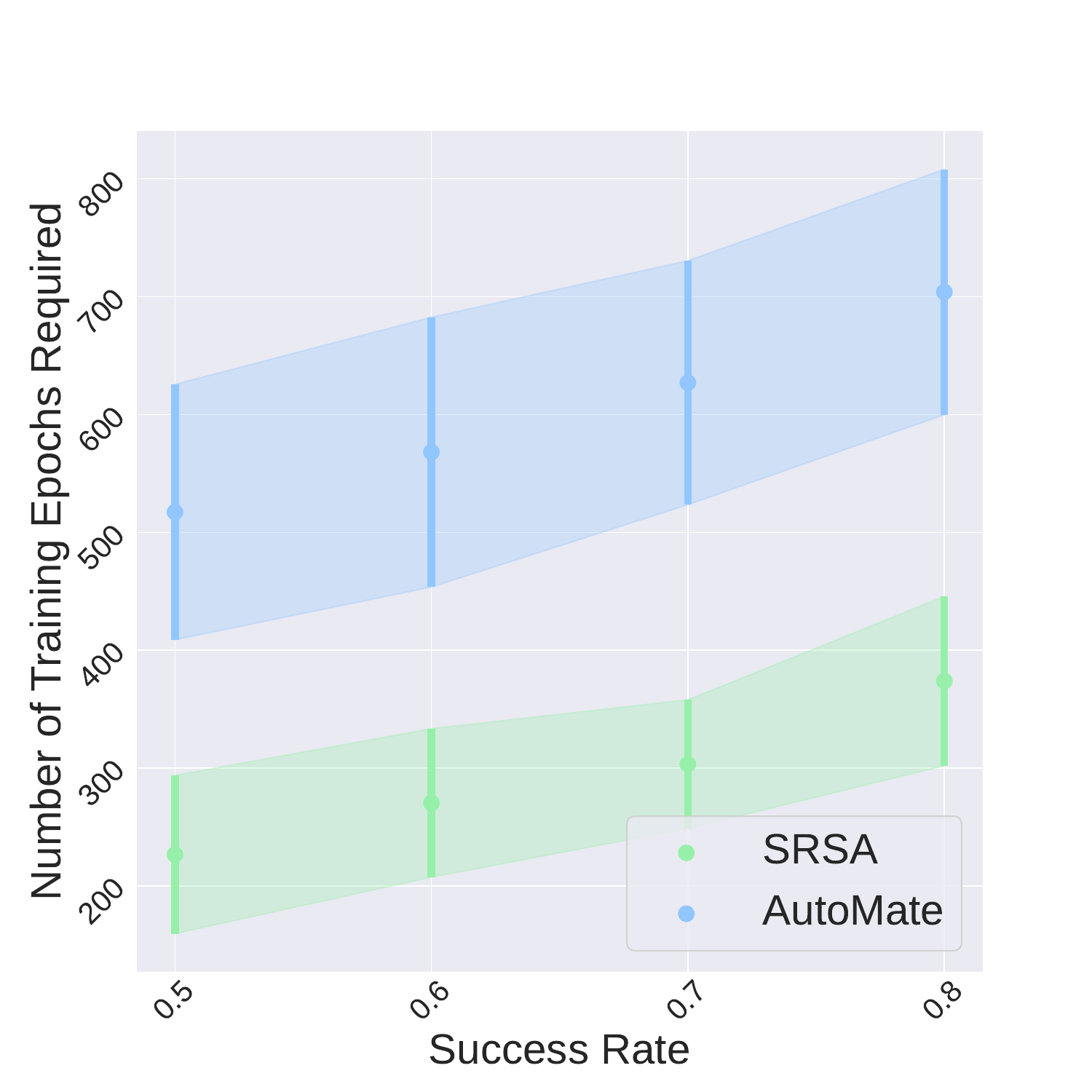}
  \caption{}
  \label{fig:continual_all}
\end{subfigure}%
\begin{subfigure}{.56\textwidth}
  \centering
  \includegraphics[width=0.95\linewidth]{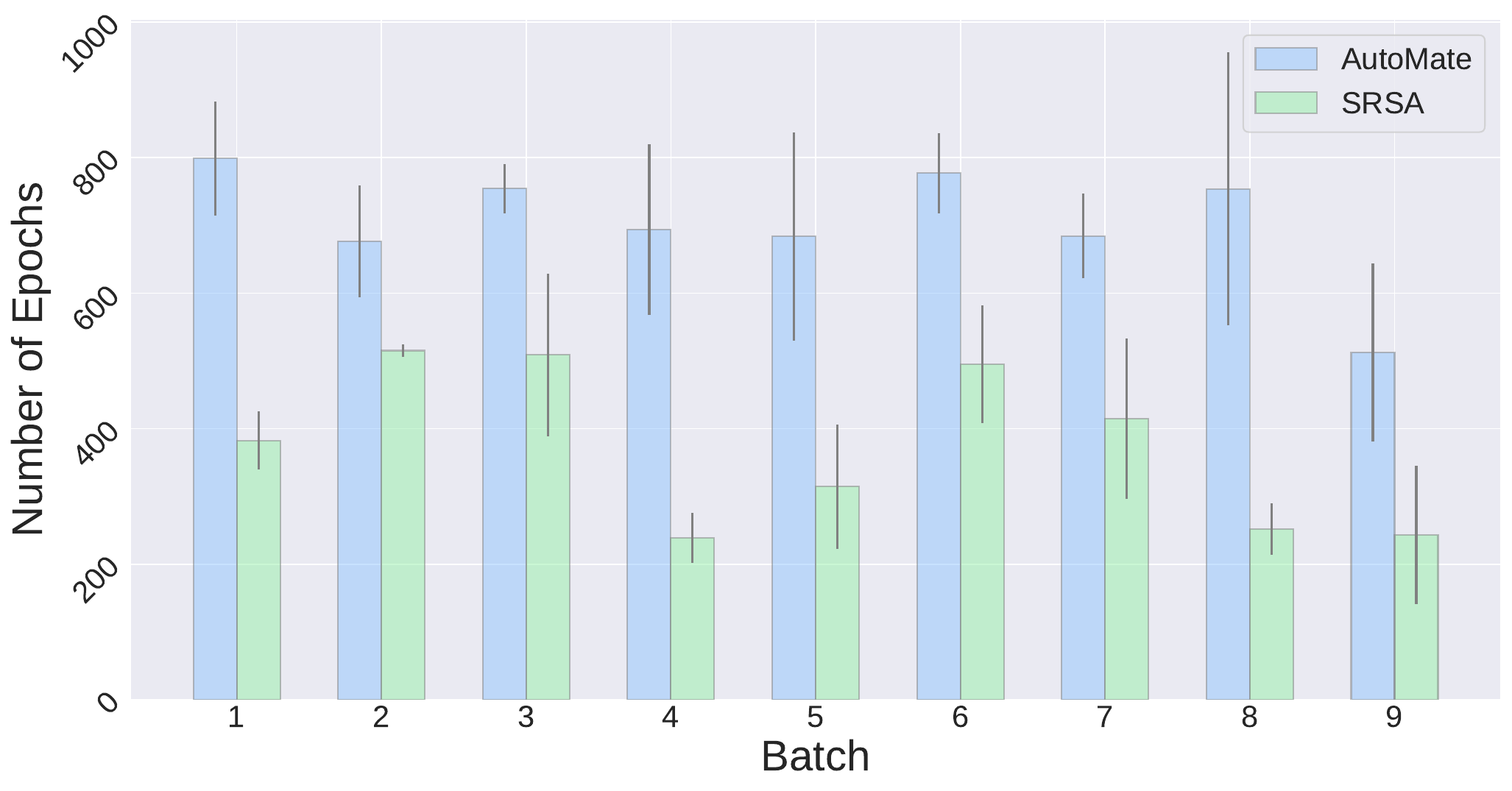}
  \caption{}
  \label{fig:continual_batch}
\end{subfigure}%
\vspace*{-0.1in}
\caption{\textbf{(a) Overall sample efficiency}. We report the number of training epochs required to reach desired success rates (0.5, 0.6, 0.7, 0.8). We calculate the mean and standard deviation of the required training epochs over 5 runs, and report the average across 90 tasks. \textbf{(b) Sample efficiency in batches}. We sequentially introduce 9 batches of new tasks for policy learning, with each batch containing 10 new tasks. For each batch, we show the mean and standard deviation of training epochs required to reach a success rate of 0.8. SRSA consistently requires fewer training epochs. %
}
\label{fig:continual}
\vspace*{-0.1in}
\end{figure}

We study the continual-learning setting to obtain policies for each of the 100 AutoMate tasks. Rather than training 100 policies from scratch in parallel, we start from a skill library with 10 tasks, and obtain 10 new policies for 10 new tasks utilizing the skill library. For each new task, we fine-tune the retrieved policy over 5 runs with different random seeds. We pick the best checkpoint with the highest success rate over 5 runs as the specialist policy for this new task. We repeat this process for 9 iterations, eventually covering the entire AutoMate benchmark. Essentially, we have a skill library that is gradually expanded with an increasing number of specialist policies. %

In Fig.~\ref{fig:continual}, we compare the sample efficiency of SRSA and AutoMate when learning specialist policies for the 90 tasks outside the initial skill library.
We consider different desired success rates and report the number of training epochs required to reach each success rate.
Overall, SRSA requires fewer training epochs to reach the desired success rate, demonstrating an 84\% relative improvement in sample efficiency on average (Fig.~\ref{fig:continual}(a)).
For each batch of new tasks, SRSA is more efficient than the baseline, regardless of the skill library and test tasks (Fig.~\ref{fig:continual}(b)).
In Fig.~\ref{fig:continual_bar}, we show the success rates for the best checkpoints encountered in 5 runs for each task.
SRSA achieves an average success rate of 79\% compared to AutoMate’s 70\% across 100 tasks, while also exhibiting better training efficiency.
In Appendix~\ref{app:exp}, we present learning results for another ordering of batches of tasks, showing that the advantage of SRSA is insensitive to the order of encountering new tasks.

\vspace*{-0.1in}
\section{Ablation Study}
\vspace*{-0.1in}
\label{sec:ablation}
\begin{figure}[t!]
\vspace*{-0.4in}
\centering
\begin{subfigure}{.3\textwidth}
  \centering
  \includegraphics[width=0.95\linewidth]{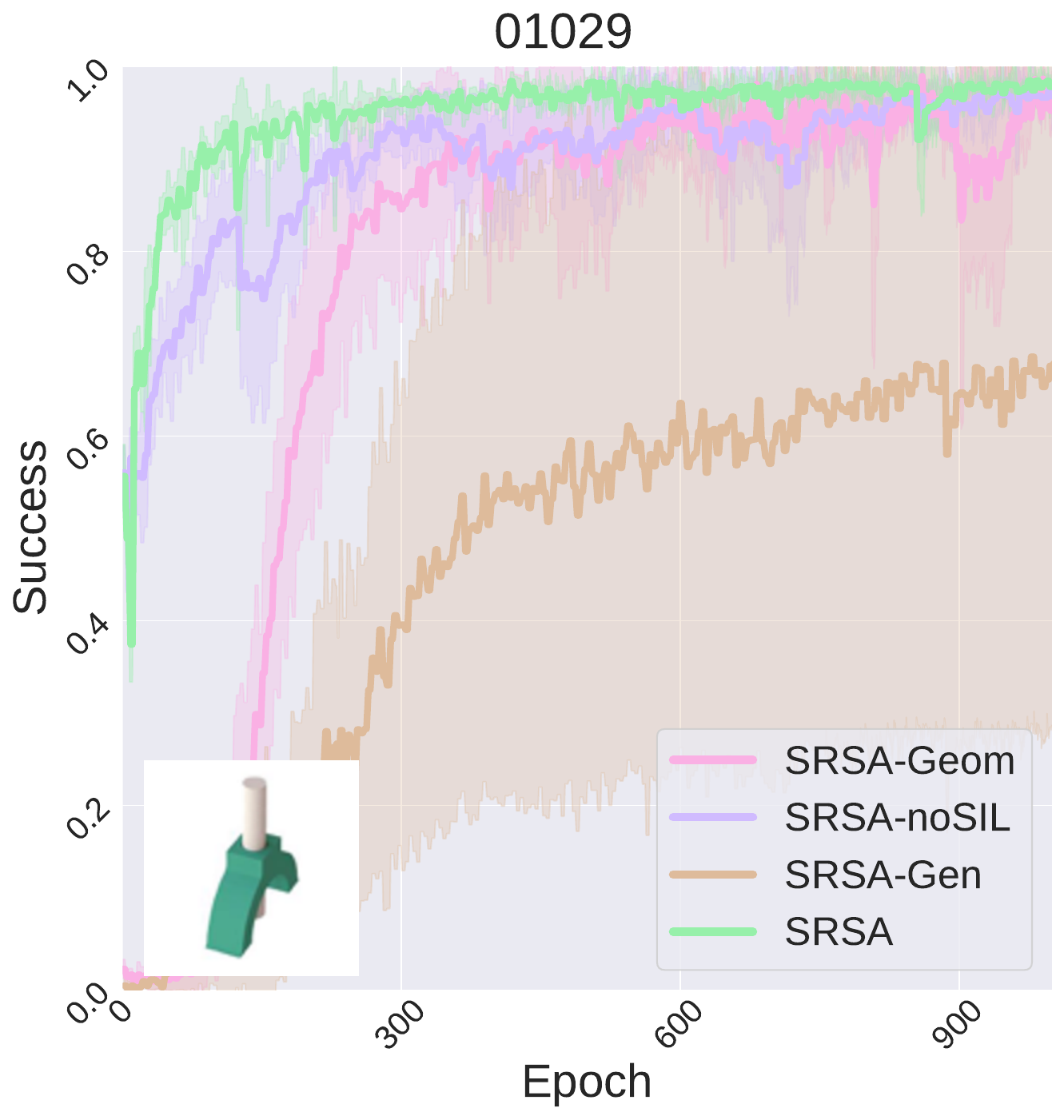}
\end{subfigure}%
\begin{subfigure}{.3\textwidth}
  \centering
  \includegraphics[width=0.95\linewidth]{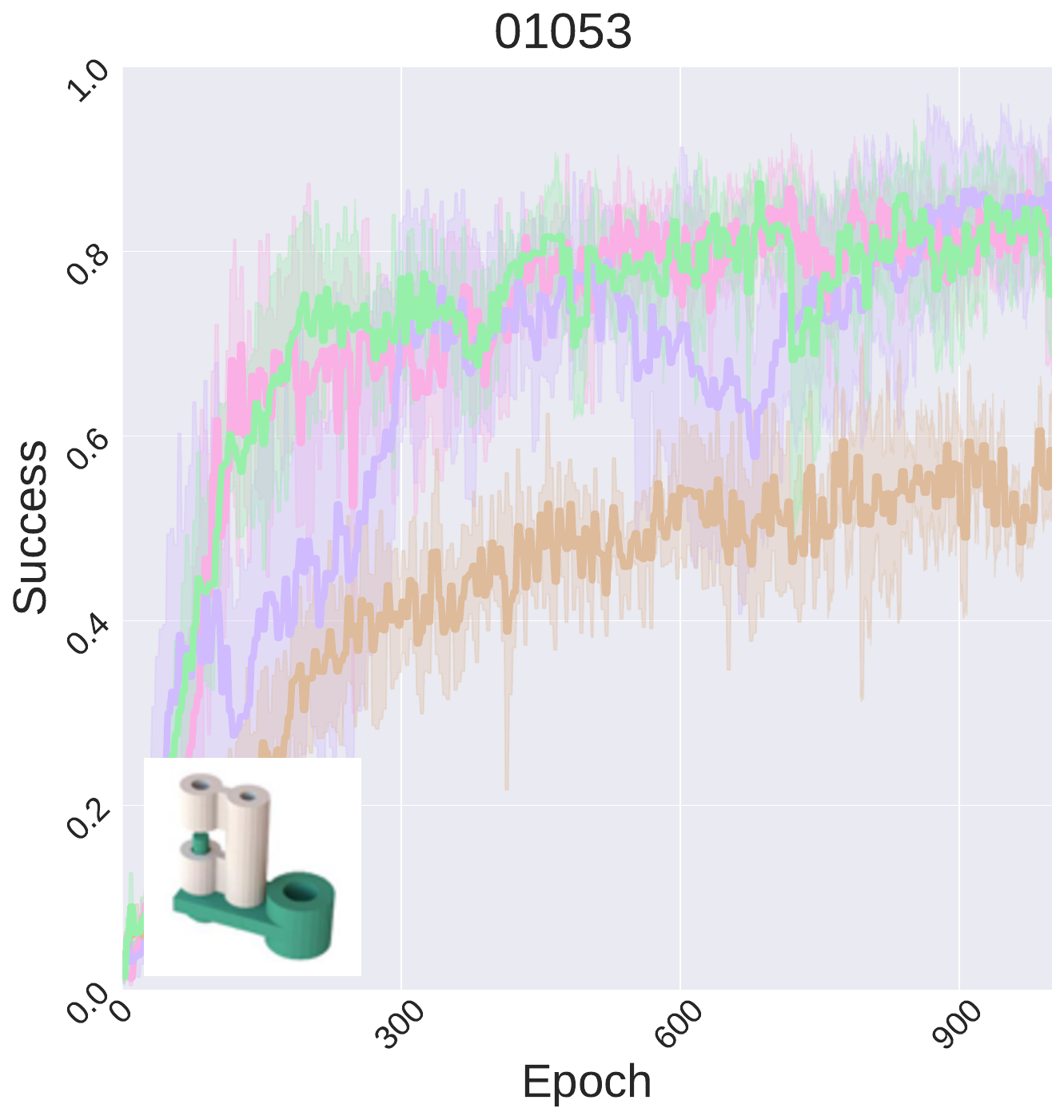}
\end{subfigure}%
\begin{subfigure}{.3\textwidth}
  \centering
  \includegraphics[width=0.95\linewidth]{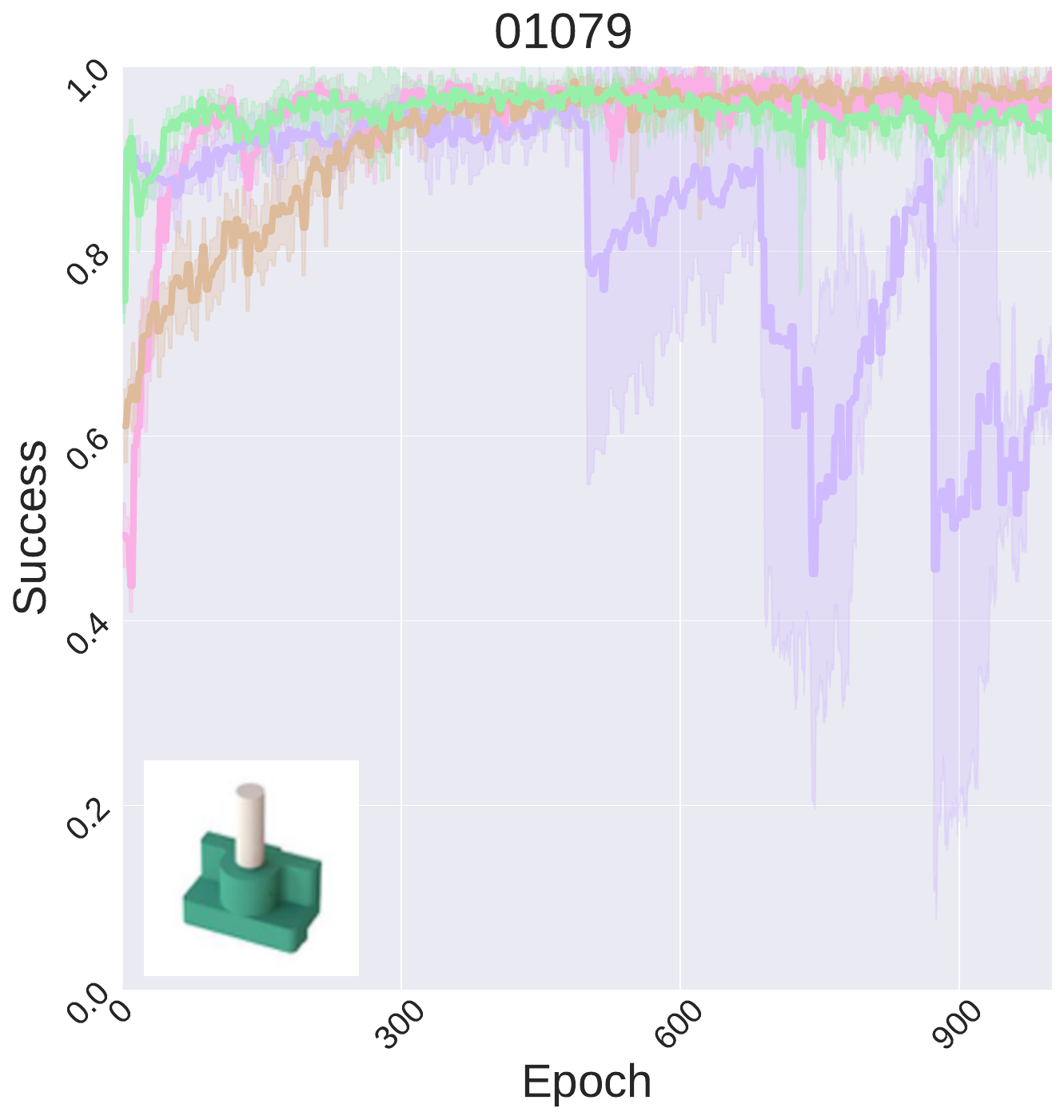}
\end{subfigure}
\vspace*{-0.1in}
\caption{\textbf{Comparison across variants of SRSA}. For each method, we fine-tune the policy with 5 different random seeds. The learning curves show the mean and standard deviation of success rate over these seeds. We show learning curves for more tasks in the Appendix~\ref{app:exp}. }
\label{fig:ablation}
\vspace*{-0.2in}
\end{figure}

\textbf{Effect of Skill Retrieval} To verify the effect of skill retrieval, we conduct skill adaptation with retrieved skills using only a geometry embedding, i.e., the second-best skill retrieval approach evaluated in Fig.~\ref{fig:method_retrievel}. Fig.~\ref{fig:ablation} shows the performance of policy fine-tuning for both SRSA and the geometry-based skill retrieval (SRSA-Geom).
One can observe that retrieving a worse skill hinders learning efficiency, starting from a lower success rate and requiring more training epochs to reach high performance.
Our retrieval approach SRSA
improves adaptation efficiency over SRSA-Geom.

\textbf{Effect of Self-imitation Learning} To demonstrate the benefits of self-imitation learning (SIL) in policy fine-tuning, we compare SRSA to the variant without this component (SRSA-noSIL).
In Fig.~\ref{fig:ablation}, SRSA outperforms the variant in terms of learning stability.
In particular, SRSA-noSIL suffers from more fluctuations during fine-tuning and a larger standard deviation of success rate (shaded area) across runs with different seeds.

\textbf{Effect of Generalist Policy} We analyze whether fine-tuning a generalist policy outperforms fine-tuning a selected specialist policy. For policy initialization, we use the generalist policy for 20 training tasks from \citep{tang2024automate}. 
Although it does not cover numerous tasks, it is the strongest generalist policy reported to date that can solve a diverse set of assembly tasks with an average success rate greater than $80\%$.
Fig.~\ref{fig:ablation} shows the learning curves of fine-tuning the generalist policy on unseen tasks (SRSA-Gen). 
We observe that SRSA-Gen provides a weaker initialization compared to SRSA, likely because the generalist policy's knowledge from the training tasks is less specialized than the skills retrieved by SRSA.
Furthermore, adaptation is less efficient, possibly due to the larger neural network in the generalist policy, which requires more fine-tuning to adapt to new tasks.
As a result, its asymptotic performance is also lower than that of SRSA.

\vspace*{-0.1in}
\section{Conclusion}
\vspace*{-0.1in}
\textbf{Summary}: In this work, we propose a pipeline to retrieve and adapt specialist policies to solve new assembly tasks.
To learn a retrieval model, we jointly learn features from geometry, dynamics, and expert actions to represent tasks, and predict transfer success to implicitly capture other transfer-related factors from tasks.
By combining skill retrieval with policy fine-tuning and self-imitation learning, our method efficiently learns high-performance simulation-based policies. We demonstrate that these policies are transferable to real-world robots.
Additionally, we demonstrate that our approach can continuously expand a skill library through efficient learning of various skills.

\textbf{Limitations}: First, although we train policies for all assembly tasks in a leading benchmark \citep{tang2024automate}, we do not address assemblies requiring rotational or helical motion (e.g., nut-and-bolt assembly).
Second, we primarily concentrate on learning specialist (i.e., single-task) policies; future work could explore learning generalist (i.e., multi-task) policies, and, furthermore, incorporate knowledge from both specialist and generalist policies to solve novel tasks with even greater efficiency. Third, although our real-world success rates outperform the state-of-the-art in sim-to-real transfer for our examined tasks, they still fall short of the 99+\% success rates required for industry-level deployment.
We believe that RL fine-tuning directly in real-world settings could help bridge the sim-to-real gap and further improve success rates.

\textbf{Future Extensions}: How to utilize existing policies for new tasks (rather than training from scratch) is an open and general question in robotics. This question is relevant not just for insertion tasks, but also for general pick-and-place tasks, dexterous manipulation tasks, advanced assembly tasks, etc. Most robotics tasks are governed by geometry, dynamics, and behavior/action. We believe that our ideas of learning task features and predicting zero-shot transfer success for policy transfer can generalize to other domains. For instance, in tool-use tasks, the skill of using scissors may be beneficial for learning to operate pliers due to their similar shape and operating mechanism.  We leave it as future work to extend SRSA to these additional robotics applications.

\bibliography{reference}
\bibliographystyle{iclr2025_conference}

\appendix
\clearpage
\section{Appendix}
\subsection{Robot Setup}
\label{app:setup}
\begin{figure}[h!]
\centering
\includegraphics[width=0.6\linewidth]{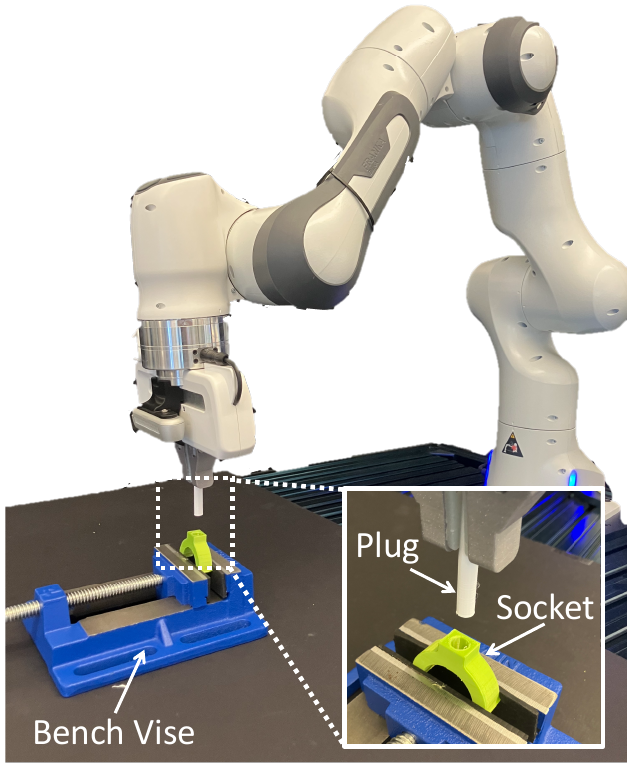}
\caption{\textbf{Real-world experimental setup}. A Franka Panda robot and a bench vise are mounted to a tabletop. At the beginning of each episode, a 3D-printed plug is grasped by the robot gripper and a 3D-printed socket is haphazardly placed and clamped in the bench vise. The task is to control the robot arm to fully insert the plug into the socket.}
\label{fig:setup}
\end{figure}

\subsection{Motivation with Theoretical Perspective}
\label{app:theory}
Transferring knowledge from a source task to a target task can improve training efficiency and asymptotic performance.
Consider a source task $T_j$ and target task $T_i$, which are MDPs that share state space $\mathcal{S}$, action space $\mathcal{A}$, and reward function $r$, but have distinct transition functions $p_i$, $p_j$ and initial state distributions $\rho_i$, $\rho_j$.
To measure the transferability of a policy, we apply the same policy on both tasks and examine the difference in their expected values.
Here we note that the value difference partially depends on the difference in their transition functions $p_i$, $p_j$ and initial state distributions $\rho_i$, $\rho_j$ (Proposition~\ref{cor:bound_value}).

\begin{restatable}{cor}{boundpolicytransferv}
\label{cor:bound_value}

Let $\smash{T_i=\{\mathcal{S}, \mathcal{A}, p_i, r, \gamma, \rho_i\}}$ and  $\smash{T_j=\{\mathcal{S}, \mathcal{A}, p_j, r, \gamma, \rho_j\}}$ be two MDPs in the task space $\mathcal{T}$. 
Applying a policy $\pi$ on $T_i$ and $T_j$, we have a function $f$ to describe the value difference:
$$\smash{V^{\pi}(\rho_i, T_i) - V^{\pi}(\rho_j, T_j) = f(p_i-p_j, \rho_i-\rho_j)}$$

\end{restatable}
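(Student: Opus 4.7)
The plan is to turn the value difference into an explicit algebraic expression in the policy-induced transition matrices and initial state distributions, from which the dependence on $p_i - p_j$ and $\rho_i - \rho_j$ can be read off. Write the state value function of $\pi$ on task $T_k$ in closed form as $V^{\pi}(\rho_k, T_k) = \rho_k^\top (I - \gamma P^{\pi}_k)^{-1} r^{\pi}$, where $P^{\pi}_k(s, s') = \sum_a \pi(a|s) p_k(s'|s, a)$ is the Markov chain induced by $\pi$ and $p_k$, and $r^{\pi}(s) = \sum_a \pi(a|s) r(s, a)$ is the shared policy reward vector. Since $r^{\pi}$ and $\gamma$ are common to both MDPs, only $P^{\pi}_k$ and $\rho_k$ vary with $k$.

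Next, I would insert a cross term and split the difference as
\begin{equation*}
V^{\pi}(\rho_i, T_i) - V^{\pi}(\rho_j, T_j) = (\rho_i - \rho_j)^\top (I - \gamma P^{\pi}_i)^{-1} r^{\pi} + \rho_j^\top \bigl[(I - \gamma P^{\pi}_i)^{-1} - (I - \gamma P^{\pi}_j)^{-1}\bigr] r^{\pi}.
\end{equation*}
The first term is manifestly linear in $\rho_i - \rho_j$. For the second, apply the resolvent identity $A^{-1} - B^{-1} = A^{-1}(B - A) B^{-1}$ with $A = I - \gamma P^{\pi}_i$ and $B = I - \gamma P^{\pi}_j$ to obtain
\begin{equation*}
(I - \gamma P^{\pi}_i)^{-1} - (I - \gamma P^{\pi}_j)^{-1} = \gamma (I - \gamma P^{\pi}_i)^{-1} (P^{\pi}_i - P^{\pi}_j) (I - \gamma P^{\pi}_j)^{-1},
\end{equation*}
which depends on the transition models only through the difference $P^{\pi}_i - P^{\pi}_j$. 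Since $P^{\pi}_i - P^{\pi}_j$ is a linear image of $p_i - p_j$ under the fixed policy $\pi$, both summands are functions of $(\rho_i - \rho_j)$ and $(p_i - p_j)$, yielding the claimed $f$.

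The main technical subtlety, rather than an obstacle, is to be explicit about what the function $f$ is allowed to depend on: beyond the two difference arguments, $f$ implicitly depends on the fixed objects $\pi$, $r$, $\gamma$, and one reference MDP (say $T_i$, through $(I - \gamma P^{\pi}_i)^{-1}$ and $\rho_i$). I would therefore phrase the conclusion as the existence of such an $f$, parameterized by these fixed quantities, and note that both summands vanish when $\rho_i = \rho_j$ and $p_i = p_j$, consistent with $f(0, 0) = 0$. A minor care is needed to ensure $(I - \gamma P^{\pi}_k)^{-1}$ exists, which follows from $\gamma \in [0, 1)$ and $\|P^{\pi}_k\|_\infty = 1$ via the Neumann series. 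This decomposition also motivates the paper's heuristic: when $\rho_i \approx \rho_j$ and $p_i \approx p_j$, both terms are small, so a high zero-shot success rate of $\pi$ on $T_j$ indicates closeness of these underlying quantities.
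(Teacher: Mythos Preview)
Your argument is correct and shares the paper's overall strategy: insert a cross term so that one summand is linear in $\rho_i-\rho_j$ and the other depends on the transition models only through a difference, then expose that difference via a resolvent-type identity. The execution differs in two minor but genuine ways. First, the paper works at the state--action level, expanding through $Q^\pi$ and then invoking the simulation lemma $Q^\pi(T_i)-Q^\pi(T_j)=\gamma(I-\gamma P^\pi(T_j))^{-1}(p_i-p_j)V^\pi(T_i)$ as a black box, whereas you stay at the state level with the closed form $V^\pi=(I-\gamma P^\pi)^{-1}r^\pi$ and derive the needed identity yourself from $A^{-1}-B^{-1}=A^{-1}(B-A)B^{-1}$; this makes your version self-contained and arguably cleaner. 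Second, the two proofs pick opposite cross terms, so the paper's $(\rho_i-\rho_j)$ piece multiplies $V^\pi(\cdot,T_j)$ while yours multiplies $(I-\gamma P^\pi_i)^{-1}r^\pi=V^\pi(\cdot,T_i)$; both are valid. Your explicit caveat that $f$ is necessarily parameterized by $\pi$, $r$, $\gamma$, and a reference MDP is a precision the paper leaves implicit.
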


\begin{proof}
\begin{eqnarray*}
V^{\pi}(\rho_i, T_i) - V^{\pi}(\rho_j, T_j) & = & \mathbb{E}_{s\sim\rho_i(\cdot)}\mathbb{E}_{a\sim\pi(\cdot|s)} Q^{\pi}(s, a, T_i) - \mathbb{E}_{s\sim\rho_j(\cdot)}\mathbb{E}_{a\sim\pi(\cdot|s)} Q^{\pi}(s, a, T_j) \\
&=& \mathbb{E}_{s\sim\rho_i(\cdot)}\mathbb{E}_{a\sim\pi(\cdot|s)} [Q^{\pi}(s, a, T_i)-Q^{\pi}(s, a, T_j)]   \\
& & + \mathbb{E}_{s\sim\rho_i(\cdot)}\mathbb{E}_{a\sim\pi(\cdot|s)}Q^{\pi}(s, a, T_j) - \mathbb{E}_{s\sim\rho_j(\cdot)}\mathbb{E}_{a\sim\pi(\cdot|s)} Q^{\pi}(s, a, T_j) \\
&=& \mathbb{E}_{s\sim\rho_i(\cdot)}\mathbb{E}_{a\sim\pi(\cdot|s)} [Q^{\pi}(s, a, T_i)-Q^{\pi}(s, a, T_j)] \\
& & + \mathbb{E}_{s\sim\rho_i(\cdot)}V^{\pi}(s, T_j) - \mathbb{E}_{s\sim\rho_j(\cdot)}V^{\pi}(s, T_j) \\
& = & \mathbb{E}_{s\sim\rho_i(\cdot)}\mathbb{E}_{a\sim\pi(\cdot|s)} [Q^{\pi}(s, a, T_i)-Q^{\pi}(s, a, T_j)] + \sum_{s}(\rho_i-\rho_j) V^{\pi}(s, T_j) \\
\end{eqnarray*}

For the Q-value difference $Q^{\pi}(s, a, T_i)-Q^{\pi}(s, a, T_j)$, we refer to the simulation lemma in \citep{agarwal2019reinforcement}:
\begin{eqnarray*}
Q^{\pi}(T_i) - Q^{\pi}(T_j) &=& \gamma (I-\gamma P^{\pi}(T_j))^{-1}(p_i-p_j)V^{\pi}(T_i)
\end{eqnarray*}

where $P^{\pi}(T_j)$ denotes the transition matrix on state-action pairs induced by the policy $\pi$ on the task $T_j$, i.e.,  $P^{\pi}_{(s,a),(s',a')}(T_j)=p_j(s'|s,a)\pi(a'|s')$.

Consequently, $Q^{\pi}(s, a, T_i)-Q^{\pi}(s, a, T_j)$ is the $(s,a)$ item in the matrix $Q^{\pi}(T_i) - Q^{\pi}(T_j)$, and $Q^{\pi}(s, a, T_i)-Q^{\pi}(s, a, T_j)$ can be expressed as a function of $(p_i-p_j)$.

Thus, the value difference $V^{\pi}(\rho_i, T_i) - V^{\pi}(\rho_j, T_j)$ partially depends on $(p_i-p_j)$ and $(\rho_i-\rho_j)$.
\end{proof}

Assume the reward function $r$ is a sparse, binary term indicating task success at the end of an episode.
The success rate of applying a policy $\pi$ to a task $T$ can be represented as $\smash{V^{\pi}(\rho)=\mathbb{E}_{s_0\sim\rho}\mathbb{E}_{\tau\sim p^{\pi}(\tau|s=s_0)}[\sum_{t=0}^{\infty}\gamma^t r_t]}$.
Here, our success rate $\smash{V^{\pi}(\rho_j, T_j)}$ will naturally be high, as the source policy $\pi$ is already an expert policy for the source task $T_j$.
When the success rate of applying the source policy to target task $T_i$ is also high (i.e., $\smash{V^{\pi}(\rho_i, T_i)}$ is close to $\smash{V^{\pi}(\rho_j, T_j)}$), then Proposition~\ref{cor:bound_value} implies that the transition functions $p_i$ and $p_j$, as well as the initial state distributions $\rho_i$ and $\rho_j$, might be similar.
Consequently, if a source policy can achieve high zero-shot transfer success on a target task, the target task might have a similar transition function and initial state distribution as the source task. Hence, we hypothesize that fine-tuning the source policy on the target task will be efficient.

However, it is important to note that achieving a similarly high success rate on two tasks with a single policy does not necessarily indicate similar dynamics between the tasks. Proposition~\ref{cor:bound_value} establishes that similar dynamics and initial state distributions lead to similar expected values for a given policy, but the reverse is not guaranteed. We use the high transfer success rate as a heuristic indicator of similar dynamics, serving as intuitive motivation rather than strict theoretical justification.

\clearpage
\subsection{Method}
\label{app:method}

\begin{algorithm}[h!]
\caption{Policy Fine-tuning with Self-imitation Learning}
\label{alg:sil}
\begin{algorithmic}
\State Initialize parameters $\theta$ for policy $\pi_{\theta}$ and parameters $\psi$ for value function $V_{\psi}$ from retrieved skill
\State Initialize replay buffer $\mathcal{D} \gets \emptyset$
\State Initialize episode buffer $\mathcal{E} \gets \emptyset$
\For{each iteration}
	\State \textbf{\textit{\# Collect training samples}}
	\For{each step}
		\State Execute an action $a_t \sim \pi_\theta(a_t|s_t)$ in the environment and transit to the next state $s_{t+1}$
		\State Store transition $\mathcal{E} \gets \mathcal{E} \cup \left\{(s_t, a_t, r_t)\right\}$
	\EndFor
	\If{$s_{T+1}$ is terminal}
		\State \textbf{\textit{\# Update replay buffer}}
		\State Compute returns $R_t=\sum^{T}_{k} \gamma^{k-t}r_k$ for all $t$ in $\mathcal{E}$			
		\State $\mathcal{D} \gets \mathcal{D} \cup \left\{(s_t, a_t, R_t)\right\}$ for all $t$ in $\mathcal{E}$
	\EndIf
	\State \textbf{\textit{\# Update parameter using PPO objective with samples in $\mathcal{E}$}}
	\State $\theta \gets \theta - \eta \nabla_\theta \mathcal{L}^{ppo}$ \hfill \citep{schulman2017proximal}
        \State $\psi \gets \psi - \eta \nabla_\psi \mathcal{L}^{ppo}$
    \State \textbf{\textit{\# Perform self-imitation learning}}
    
    \State Sample a mini-batch $\{(s,a,R)\}$ from $\mathcal{D}$ according to advantages
    \State $\theta \gets \theta - \eta \nabla_\theta \mathcal{L}^{sil}$ \hfill (Equation~\ref{eq:sil})
    \State $\psi \gets \psi - \eta \nabla_\psi \mathcal{L}^{sil}$ 
    \State Clear episode buffer $\mathcal{E} \gets \emptyset$ 	
\EndFor
\end{algorithmic}
\end{algorithm}

\setlength{\textfloatsep}{5pt}
\begin{algorithm}[h!]
    \caption{Continual Learning with Skill Library Expansion}
    \small
    \label{alg:ours}
    \begin{algorithmic}[1]
        \Require Prior tasks $\mathcal{T}_{prior} = \{T_1, T_2, \cdots, T_n\}$; Skill library $\Pi_{prior} = \{\pi_1, \pi_2, \cdots, \pi_n\}$
        
        \While {given new batch of tasks $\mathcal{T}^j = \{T^j_1, T^j_2, \cdots, T^j_m\}$}
            \For {each task $T^j_i$}
                \State Retrieve a policy $\pi_{src}$ from the skill library $\Pi_{prior}$
                \State Fine-tune $\pi_{src}$ to get a policy $\pi_k$ solving the task $T^j_i$
                \State Expand the skill library, $\mathcal{T}_{prior} = \mathcal{T}_{prior}\cup \{T^j_i\}$; $\Pi_{prior} = \Pi_{prior}\cup \{\pi_k\}$
            \EndFor   
        \EndWhile

    \end{algorithmic}
\end{algorithm}

\clearpage
\subsection{Implementation Details}
\label{app:implementation}

\subsubsection{Task Feature Learning in SRSA}
\paragraph{Geometry Features} As shown in Fig.~\ref{fig:method_retrievel}(a), we employ a PointNet-based \citep{qi2017pointnet} autoencoder $E_G$ and $D_G$ to minimize the difference between input point cloud $P$ and reconstructed point cloud $D_G(E_G(P))$. The autoencoder is trained using point clouds of parts from all  tasks.

We follow the implementation details outlined in \citep{tang2024automate}. In a large set of meshes $M$ for various assembly parts, each mesh $m_i \in M$ consists of $(V_i, E_i)$, where
$V$ denotes the vertices and $E$ represents the (undirected) edges.
During each training iteration, we sample a batch of meshes $B \subset M$.
For each $m_i \in B$, we generate a point cloud $P_i$ from the mesh, with each point located on the surface of $m_i$.
The point cloud $P_i$ is passed through a PointNet encoder \citep{qi2017pointnet} based on the implementation from \citep{mu2021maniskill} to produce a latent vector. The latent vector $z_{G,i}$ is subsequently fed into a fully-convolutional decoder, following the implementation from \citep{wan2023unidexgrasp++} to produce the reconstructed point cloud $Q_i=D_G(E_G(P_i))$.

The network is trained to minimize reconstruction loss, defined here as the Chamfer distance between $P_i$ and $Q_i$: $$\mathcal{L}_{CD}=\frac{1}{\|P_i\|}\sum_{p\in P_i} \min_{q \in Q_i}\|p-q\|_2^2 + \frac{1}{\|Q_i\|}\sum_{q\in Q_i} \min_{p \in P_i}\|p-q\|_2^2$$

Across 100 two-parts assembly tasks, we utilize a total of 200 meshes for the plug and socket components with $|M|=200$. Each sampled point cloud $P_i$ contains 2000 points and the dimension of learned embedding is $|z_{G,i}|=32$. The autoencoder is trained for a total of 23,000 epochs, using a batch size of 64 and a learning rate of 0.001.

To represent the features of one task, we gather the geometry features for the meshes of the plug, socket, and their assembled state, where the plug is fully inserted into the socket.

\paragraph{Dynamics Features}
We build upon prior work in context-based meta-RL \citep{rakelly2019efficient, lee2020context} to utilize a context encoder $E_D$ that produces a latent vector from transition segments $\tau_{t-1} = \{s_{t-h}, a_{t-h}, s_{t-h+1}, a_{t-h+1}, \cdots, s_{t-1}, a_{t-1}\}$, as shown in Fig.~\ref{fig:method_retrievel}(b).
We sample the transition segments from disassembly trajectories, compute the latent vector $E_D(\tau_{t-1})$, and feed the latent vector from transition segments to a forward dynamics model $D_D$ across all tasks.
For any transition samples from any task, the forward dynamics model is trained to predict the next state $s'_{t+1}=D_D(E_D(\tau_{t-1}), s_t, a_t)$ to be close to the ground-truth next state $s_{t+1}$.

As described in \citep{tang2024automate}, for each task, we generate disassembly paths by initializing the robot hand to grasp the plug in the assembled state, where the plug is fully inserted in the socket. Using a low-level controller, we lift the plug from the socket and move it to a randomized pose.
We repeat this process until collecting 100 successful disassembly trajectories.
We store the state and action at each timestep in the disassembly trajectories.
Each task has a total of 100 disassembly trajectories, with each trajectory spanning 128 timesteps.

We sample the transition segment $\tau_{t-1} = \{s_{t-h}, a_{t-h}, s_{t-h+1}, a_{t-h+1}, \cdots, s_{t-1}, a_{t-1}\}$ for $h=10$ timesteps.
The context encoder is modeled as a multi-layer perceptron (MLP) with 3 hidden layers of sizes $(256, 128, 64)$, producing a 32-dimensional vector $z_{D,t}=E_D(\tau_{t-1})$.
Then, the forward dynamics model $D_D$ receives the context vector as an additional input, where the input consists of a concatenation of state $s_t$, action $a_t$, and context vector $z_{D,t}$.
The forward dynamics model comprises four fully-connected layers of sizes $(200, 200, 200, 200)$ with ReLU activation functions, outputting the prediction of the next state $s'_{t+1}$.
The objective is to minimize L2-distance between the ground-truth next state $s_{t+1}$ and the predicted next state $s'_{t+1}$.
For the entire set of disassembly trajectories across 100 tasks, we train the encoder and forward dynamics model for 200 epochs, using a batch size of 128 and a learning rate of 0.001.

\paragraph{Expert Action Features}
We utilize the disassembly trajectories as reverse expert demonstrations for assembly tasks and aim to capture expert action information in an embedding space.
As illustrated in Fig.~\ref{fig:method_retrievel}(c), we sample a transition segment $\tau_{t}$ from the disassembly trajectories, map it to the action embedding $E_A(\tau_{t-1})$, and reconstruct the action sequence $\{a_{t-h}, a_{t-h+1}, \cdots, a_{t-1}\}$ using decoder $D_A$. We train both the encoder and decoder with transition segments from all tasks. This embedding effectively extracts the strategy for solving the task by reconstructing the expert actions from the disassembly trajectories.

We sample the transition segment $\tau_{t-1} = \{s_{t-h}, a_{t-h}, s_{t-h+1}, a_{t-h+1}, \cdots, s_{t-1}, a_{t-1}\}$ for 10 timesteps (i.e., $h=10$).
The action encoder $E_A$ is modeled as a multi-layer perceptron (MLP) with three hidden layers of sizes $(256, 128, 64)$, producing a 32-dimensional vector $z_{A,t}$.
The action decoder $D_A$ is an MLP with four hidden layers of sizes $(200, 200, 200, 200)$ that predicts the sequence of expert actions $\{a'_{t-h}, a'_{t-h+1}, \cdots, a'_{t-1}\}$.
We minimize the L2-distance between input action sequence $\{a_{t-h}, a_{t-h+1}, \cdots, a_{t-1}\}$ and the reconstructed action sequence $\{a'_{t-h}, a'_{t-h+1}, \cdots, a'_{t-1}\}$.
The encoder and decoder are trained for 200 epochs, using a batch size of 128 and a learning rate of 0.001.

\subsubsection{Transfer Success Prediction in SRSA}
We learn the function $F(\pi_{src}, T_{trg})$ to predict the transfer success.
For any pair of source policy and target task in the skill library, we execute the source policy on the target task for 1000 episodes with randomized initial conditions and average the success rate to obtain the ground-truth label for $F$.
For any task $T$ in the prior task set, we sample the point cloud $P_i$ of plug, socket, and their combined geometry (i.e. the plug is fully inserted in the socket) to extract geometry features $z_{G,i}$ with a dimension of 96.
Then we sample transition segment $\tau_i$ to obtain the dynamics features $z_{D,i}$ with a dimension of 32 and action features $z_{A,i}$ with a dimension of 32.
By concatenating these features, we create a task feature $z_i$ with a dimension of 160 for the sampled point clouds and transition segment. 
We feed the task features $z_{src,i}$ and $z_{trg,i}$
for the source and target tasks into an MLP with one hidden layer of size 128 to predict transfer success.
We optimize the MLP to learn the transfer success prediction.
The training is conducted for 50 epochs with batch size 64 across all source-target pairs in the prior task set.

\subsubsection{Baseline Approaches for Skill Retrieval}
\textbf{Signature}: Path signatures represent trajectories as a collection of path integrals and also quantify distances between trajectories. Inspired by \citep{tang2024automate}, we find the closest path signature for skill retrieval. For each disassembly trajectory $\tau_k$ on the target task $T$, we calculate the path signature $z_k$ and search all disassembly trajectories over all source tasks to identify a source disassembly trajectory $\tau_j$ with the path signature $z_j$ closest to $z_k$. The source disassembly trajectory $\tau_j$ belongs to a source task in $\mathcal{T}_{prior}$; thus we match the target trajectory $\tau_k$ to this source task, denoted as $T_k$. We count the times that one source task $\smash{T_{src} \in \mathcal{T}_{prior}}$ is assigned as the source task for a target disassembly trajectory, $\smash{C(T_{src}) = \sum_{k=1}^n[T_k=T_{src}]}$. Then we retrieve the policy for the source task with the highest count, i.e., $\smash{\argmax_{T_{src}} C(T_{src})}$. In the case of ties, we select one at random.

\textbf{Behavior}: Inspired by \citep{du2023behavior}, we employ state-action pairs on disassembly trajectories across all tasks and learn a state-action embedding with a VAE for skill retrieval. For any state-action pair $(s_k, a_k)$ on the target task, we infer the embedding $z_{sa, k}$ and look for one state-action pair $(s_j, a_j)$ from the disassembly trajectories in source tasks with an embedding $z_{sa, j}$ closest to $z_{sa, k}$. The target state-action pair $(s_k, a_k)$ is matched to the one source task that $(s_j, a_j)$ belongs to. We denote this source task as $T_k$. Sweeping through all $N$ state-action pairs in the disassembly trajectories on target task, we count the times that one source task $T_{src} \in \mathcal{T}_{prior}$ is assigned as the source task for a target state-action pair, $\smash{C(T_{src}) = \sum_{k=1}^N[T_k=T_{src}]}$. Then we retrieve the policy for the source task with the highest count, i.e., $\smash{\argmax_{T_{src}} C(T_{src})}$

\textbf{Forward}: We learn the latent vector for transition sequence $\tau$ on disassembly trajectories. In order to retrieve one source task according to the distances between task embeddings, we average the embeddings for all transition sequences from the same task to obtain the task embedding, similar to \citep{guo2022learning}. We retrieve the policy for the source task with the closest task embedding.

\textbf{Geometry}: As explained above, we learn an autoencoder for the point clouds of the assembly assets to minimize the reconstruction loss, as conducted in \citep{tang2024automate}. We retrieve the policy for the source task with the closest point-cloud embedding.

\subsubsection{Skill Adaptation in SRSA}

\paragraph{Implementation Details}

Following \citep{tang2024automate}, we use PPO to train the stochastic policy $\pi_\theta$ (i.e., actor) and an approximation of the value function $V_\theta$ (i.e., critic), parameterized by neural networks with weights $\theta$.
The policy is stochastic, following a multivariate normal distribution with a learned mean and standard deviation; however, at evaluation and deployment time, the action output from the policy is deterministic.

The input state for the policy network consists of the robot arm's joint angles, the end-effector pose, the goal end-effector pose, and the relative pose of the end effector to the goal. The state has a dimensionality of 24.
Due to the asymmetric actor-critic strategy, the states provided to the value function include privileged information not available to the policy. The states for the critic include joint velocities, end-effector velocities, and the plug pose, resulting in an input dimensionality of 44 for the value function.

The action space consists of incremental pose targets, representing the position and orientation differences between the current pose and the target pose. We use incremental targets instead of absolute targets to restrict selection to a small, bounded spatial range. The action dimensionality is 6.

SRSA combines PPO with a self-imitation learning \citep{oh2018self} mechanism for policy fine-tuning. We maintain a replay buffer $\mathcal{D}$ for transitions encountered during training. The data samples in the buffer are prioritized based on the discounted accumulated reward.

As shown in Algorithm~\ref{alg:sil}, each iteration includes one PPO update for the policy and value function, along with a batch sampling from 
$\mathcal{D}$ to perform one self-imitation learning update. This update aims to minimize the loss function 
$\mathcal{L}_{sil}$ defined in Sec.~\ref{sec:adaptation}.
For details on network architectures and hyperparameters, refer to Tab.~\ref{tab:hyper}.

\begin{table}[]
\centering
\begin{tabular}{c|c}
\hline
Hyperparameters                                  & Value                                  \\ \hline
\multicolumn{1}{c|}{Policy Network Architecture} & \multicolumn{1}{c}{{[}256, 128, 64{]}} \\
\multicolumn{1}{c|}{Value Function Architecture} & \multicolumn{1}{c}{{[}256, 128, 64{]}} \\
LSTM network size                                & 256                                    \\
Horizon length (T)                               & 32                                     \\
Adam learning rate                               & 1e-4                                   \\
Discount factor ($\gamma$)                              & 0.99                                   \\
GAE parameter ($\lambda$)                                & 0.95                                   \\
Entropy coefficient                              & 0.0                                    \\
Critic coefficient                               & 2                                      \\
Minibatch size                                   & 8192                                   \\
Minibatch epochs                                 & 8                                      \\
Clipping parameter ($\epsilon$)                           & 0.2                                    \\
LSTM network size           &256                \\ \hline
SIL update per iteration                         & 1                                      \\
SIL batch size                                   & 8192                                   \\
SIL loss weight                                  & 1                                      \\
SIL value loss weight ($\beta$)                        & 0.01                                   \\
Replay buffer size                               & $10^5$                  \\
Exponent for prioritization                      & 0.6                                    \\ \hline
\end{tabular}
\caption{Hyperparameters in PPO and Self-imitation Learning}
\label{tab:hyper}
\end{table}

\paragraph{Input Modality}

We follow prior work \citep{tang2024automate} to use object poses rather than visual observations as input to the policy. Incorporating vision-based observations would introduce additional challenges for zero-shot sim-to-real transfer, as it requires a camera. In contrast, the current policy only relies on the fixed socket pose and the robot’s proprioceptive features (including the end-effector pose), making it more straightforward to execute the policy in real-world settings. 

Using visual observations or object pose is orthogonal to our proposed method (i.e., SRSA is independent of the observation modality). The high-level idea of retrieving a relevant skill and fine-tuning the retrieved policy remains applicable in scenarios involving vision-based policies. The geometry features derived from point clouds in our task representation can partially capture visual similarities between tasks. This enables the retrieval of source tasks that are visually similar to the target task to some degree.

At the same time, SRSA may require modifications to better support vision-based policies, where each policy relies on a vision encoder to process high-dimensional visual observations. There is no guarantee that our retrieved source and target tasks are visually similar enough, and the features extracted by the vision encoder in the policy might differ significantly on the source and target tasks. This could pose challenges for fine-tuning the policy on the target task.
To address this, we consider two distinct directions: 1. how to perform retrieval to better account for visual similarity; 2. how to train specialist policies with visual encoders such that the current SRSA retrieval strategy is still likely to work. Below, we propose specific approaches for these two directions.

1. Enhancing retrieval by incorporating features from visual observations: For example, integrating a Variational Autoencoder (VAE) to extract features from visual observations (as in BehaviorRetrieval \citep{du2023behavior}) and combining these with other task representations might improve the retrieval process. Additionally, learning dynamics features, such as predicting future visual observations, could implicitly encode relevant visual information in task features for retrieval.

2. Improving the robustness of the visual encoder in the policy: Training the source policy with significant data augmentation (e.g., randomizing colors, poses, backgrounds, etc.) could make the visual encoder in the source policy more robust to diverse visual observations. It is more likely to extract similar features from geometrically similar tasks. Alternatively, leveraging state-of-the-art visual foundation models (e.g., DINOv2 \citep{oquab2023dinov2}) as visual encoders could further enhance generalization and robustness. These models have demonstrated strong performance in handling diverse observations and sim-to-real challenges, as shown in PoliFormer \citep{zeng2024poliformer}. Consequently, we believe that features extracted by such visual encoders are likely to remain consistent for visual observations across geometrically similar tasks.

\subsection{Experiments}
\label{app:exp}

\subsubsection{Skill Retrieval}
We first replicate specialist policy learning for 100 assembly tasks as described in \citep{tang2024automate}. These 100 tasks are then split into 90 prior tasks and 10 test tasks.
For the 90 prior tasks, we use the trained specialist policies to build the skill library.

We train the task retriever on the prior tasks (Sec.~\ref{sec:retrieval}) and evaluate its performance on the test tasks.
In Fig.~\ref{fig:retrieval_left} in the main text and Fig.~\ref{fig:retrieval_mid} and Fig.~\ref{fig:retrieval_right} in the Appendix, we present the test results for three different ways of splitting the 100 tasks.
Overall, SRSA demonstrates superior performance in identifying relevant policies from the skill library, achieving a higher success rate in zero-shot transfer.

\begin{figure}[h!]
\centering
\begin{subfigure}{.26\textwidth}
  \centering
  \includegraphics[width=\linewidth]{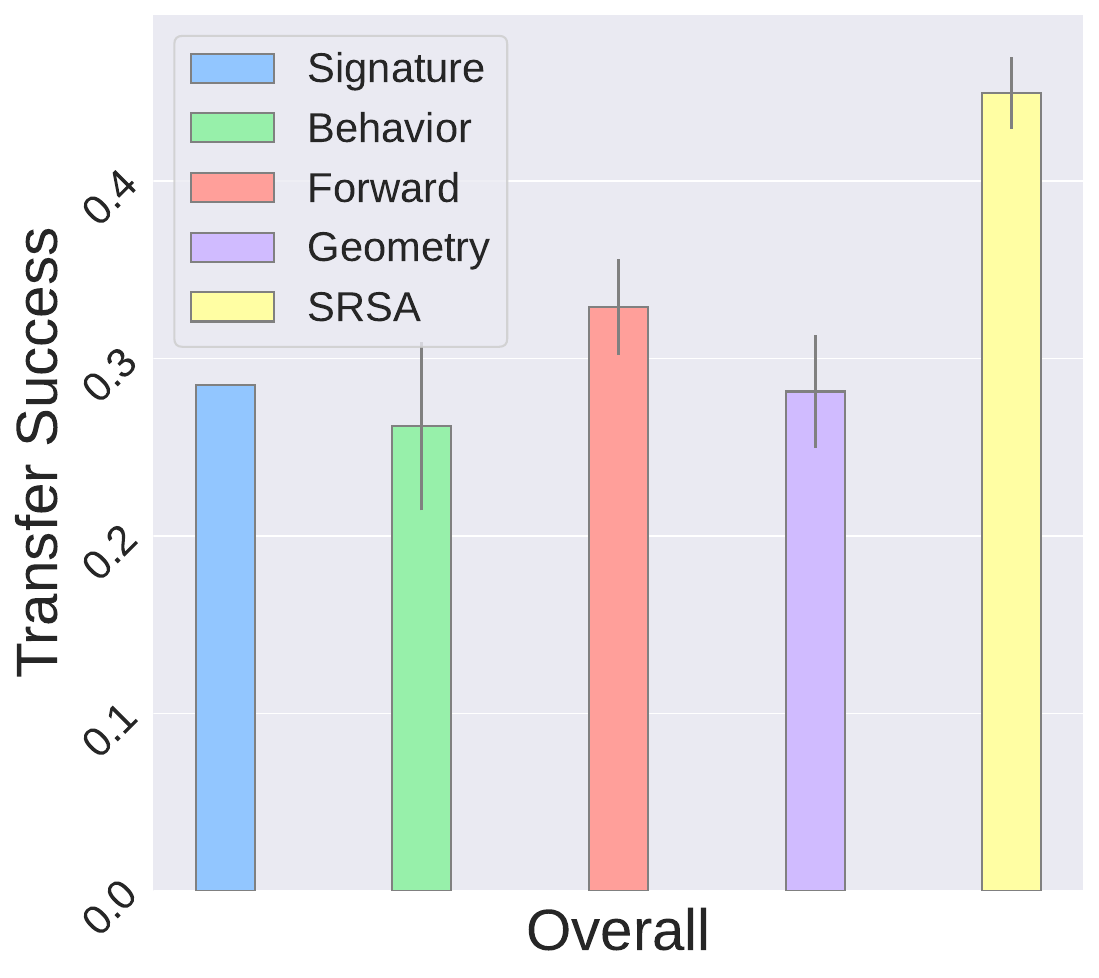}
\end{subfigure}%
\begin{subfigure}{.7\textwidth}
  \centering
  \includegraphics[width=\linewidth]{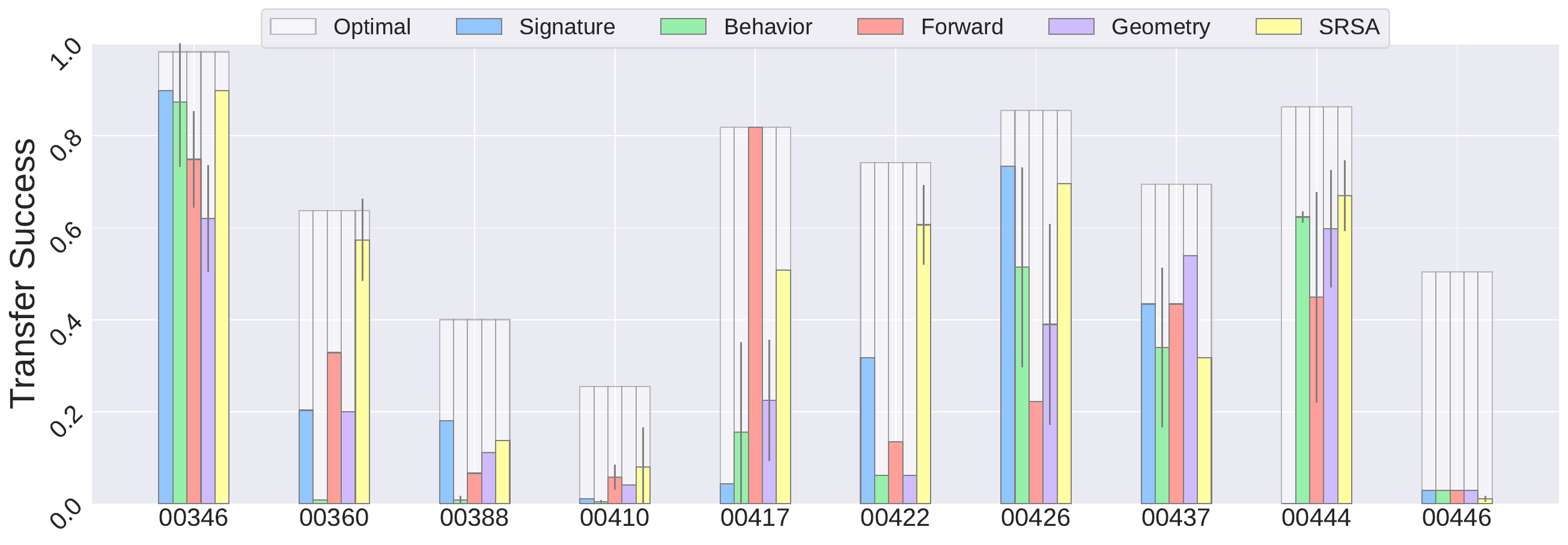}
\end{subfigure}
\caption{\textbf{Transfer success of retrieved skills applied to test tasks}. For each of the test tasks, we retrieve a policy from the prior skill library using 5 different approaches. For each approach, if it involves training neural networks, we train it for 3 random seeds. \textbf{Left}: we illustrate the mean result over 10 test tasks. \textbf{Right}: For each test task, we show the mean and standard deviation of transfer success over 3 seeds. Overall, SRSA clearly outperforms baselines.}
\label{fig:retrieval_mid}
\end{figure}

\begin{figure}[h!]
\centering
\begin{subfigure}{.26\textwidth}
  \centering
  \includegraphics[width=\linewidth]{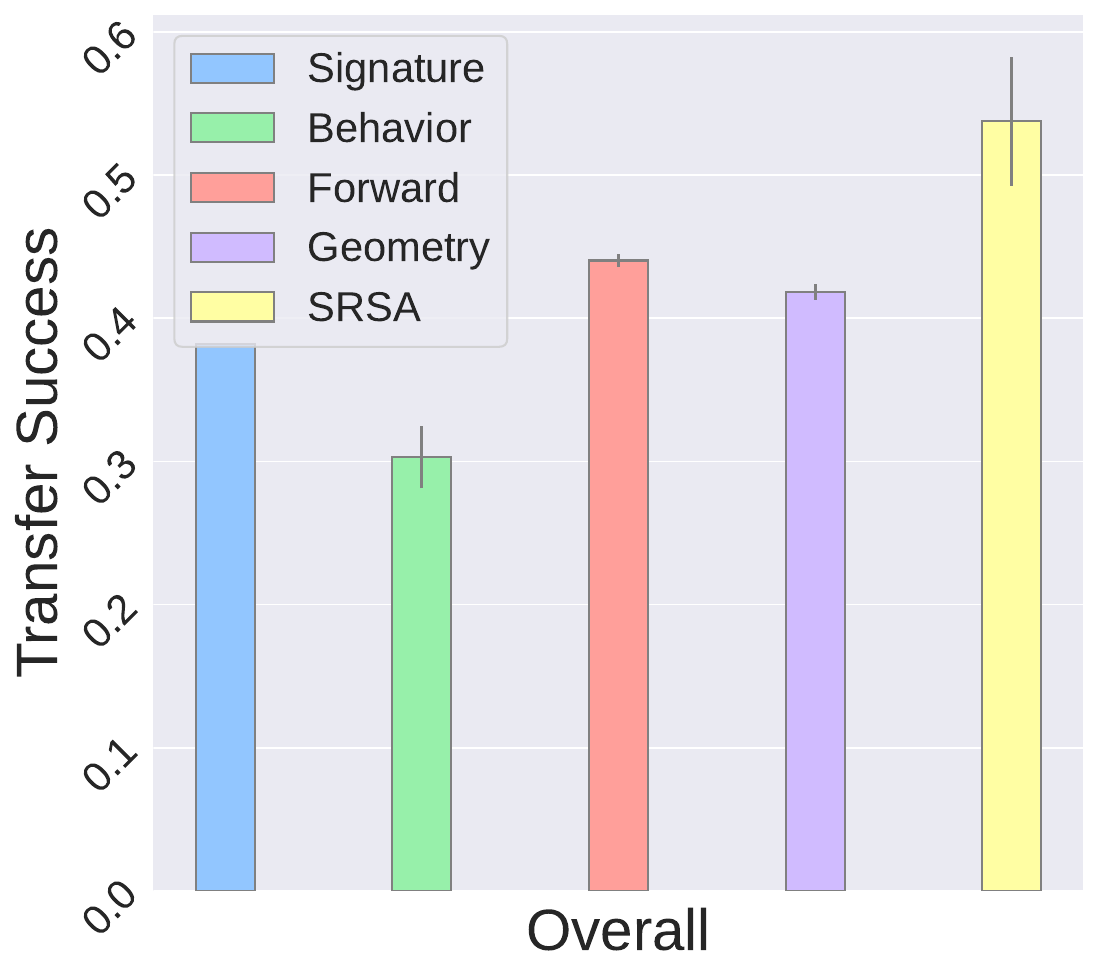}
\end{subfigure}%
\begin{subfigure}{.7\textwidth}
  \centering
  \includegraphics[width=\linewidth]{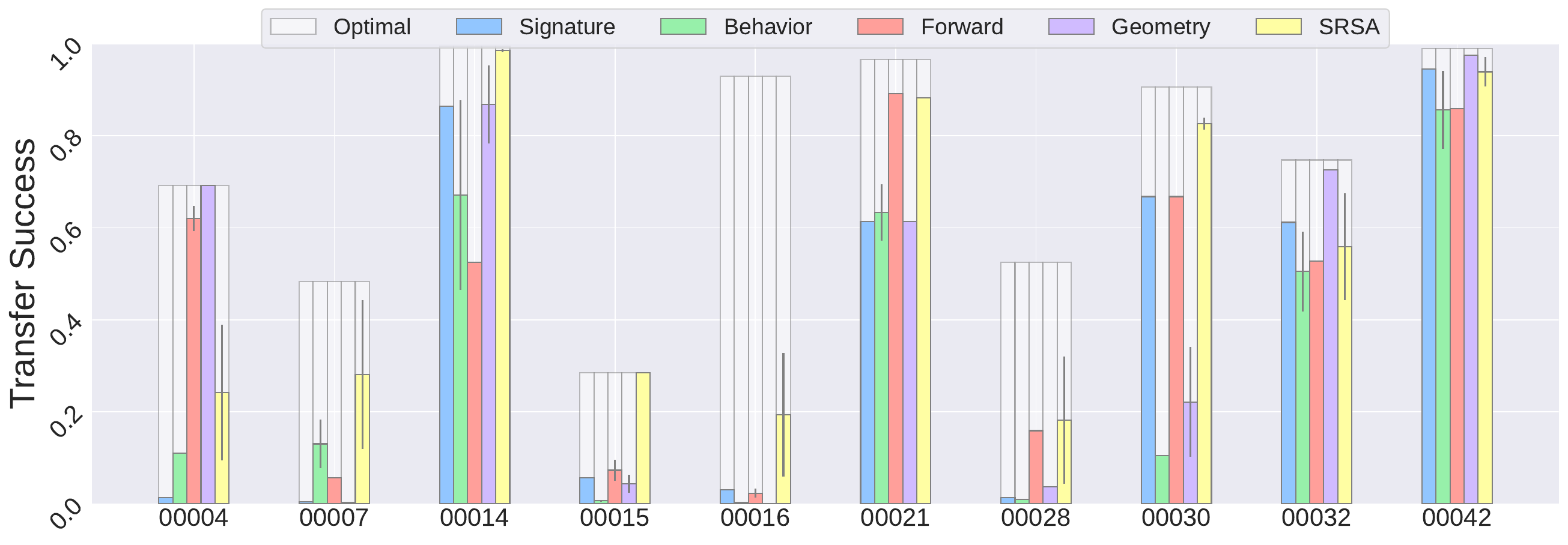}
\end{subfigure}
\caption{\textbf{Transfer success of retrieved skills applied to test tasks}. For each of the test tasks, we retrieve a policy from the prior skill library using 5 different approaches. For each approach, if it involves training neural networks, we train it for 3 random seeds. \textbf{Left}: we illustrate the mean result over 10 test tasks. \textbf{Right}: For each test task, we show the mean and standard deviation of transfer success over 3 seeds. Overall, SRSA clearly outperforms baselines.}
\label{fig:retrieval_right}
\end{figure}

\subsubsection{Skill Adaptation}

We show the learning curves in Fig.~\ref{fig:skill_adaptation}.
At the end of 1000 training epochs, we record the success rate of the learned policies on 10 test tasks, as shown in Tab.~\ref{tab:skill_adaptation} and Tab.~\ref{tab:skill_adaptation_sparse}.
For AutoMate, the policies are learned from scratch using PPO. In contrast, SRSA initializes the policies with retrieved skills and fine-tunes them using PPO combined with self-imitation learning. The retrieval mechanism is trained on a skill library of 90 prior tasks, where the skills were pre-trained by AutoMate.

Compared to the baseline success rate of 69.4\%, SRSA achieves a significantly higher success rate of 82.6\%, corresponding to an absolute improvement of 13.2\% and a relative improvement of approximately 19.0\%.
By leveraging the knowledge from the skill library, SRSA also obtains 2.6x lower standard deviation compared to AutoMate (Tab.~\ref{tab:skill_adaptation}).
This advantage becomes even more pronounced in sparse-reward scenarios, where SRSA shows an absolute improvement of 40.8\% and a relative improvement of 135\% in comparison with the baseline. (Tab.~\ref{tab:skill_adaptation_sparse}).

\begin{table}[ht]
\addtolength{\tabcolsep}{-0.25em}
\begin{tabular}{c|cccccccccc|c}
\toprule
Task ID                   & 01029   & 01036   & 01041   & 01053   & 01079   & 01092   & 01102   & 01125   & 01129   & 01136   & Average  \\ 

\midrule
\multirow{2}{*}{AutoMate} & 53.4   & 89.0   & 79.1   & 49.1  & 74.3   & 59.4   & 76.4   & 49.6  & 76.0   & 87.3   & 69.4   \\
                          & \small{(27.4)} & \small{(7.7)} & \small{(8.4)} & \small{(15.3)} & \small{(32.9)} & \small{(13.1)} & \small{(11.4)} & \small{(3.2)} & \small{(3.0)} & \small{(4.2)} & \small{(12.7)} \\
\midrule
\multirow{2}{*}{SRSA}     & 97.3   & 91.3   & 78.9   & 75.4   & 90.5   & 78.3   & 86.6  & 48.5   & 82.3   & 96.9   & 82.6   \\
                         & \small{(1.3)} & \small{(6.0)} & \small{(7.7)} & \small{(6.4)} & \small{(2.5)} & \small{(6.3)} & \small{(4.6)} & \small{(5.7)} & \small{(5.6)} & \small{(2.0)} & \small{(4.8)} \\
\bottomrule
\end{tabular}
\caption{\textbf{Mean (standard deviation) of success rate (\%) on each test task in dense-reward setting}. We calculate the mean and standard deviation over 5 runs of different random seeds at the last training epoch (i.e., 1000 epochs).}
\label{tab:skill_adaptation}
\end{table}

\begin{table}[ht]
\addtolength{\tabcolsep}{-0.25em}
\begin{tabular}{c|cccccccccc|c}
\toprule
Task ID                   & 01029   & 01036   & 01041   & 01053   & 01079   & 01092   & 01102   & 01125   & 01129   & 01136   & Average  \\ 

\midrule
\multirow{2}{*}{AutoMate} & 61.3   & 37.2   & 14.4   & 0  & 81.7   & 0   & 1.4   & 9.8  & 55.6   & 39.7   & 30.1   \\
                          & \small{(26.5)} & \small{(31.4)} & \small{(1.6)} & \small{(0.5)} & \small{(15.1)} & \small{(0.5)} & \small{(1.0)} & \small{(2.0)} & \small{(6.0)} & \small{(5.4)} & \small{(9.0)} \\
\midrule
\multirow{2}{*}{SRSA}     & 95.1   & 72.4   & 33.7   & 87.4   & 96.1   & 51.4   & 70.7   & 51.2   & 90.3   & 60.5   & 70.9   \\
                          & \small{(1.1)} & \small{(8.9)} & \small{(6.4)} & \small{(3.6)} & \small{(1.7)} & \small{(5.5)} & \small{(2.9)} & \small{(9.3)} & \small{(7.2)} & \small{(2.6)} & \small{(4.9)} \\

\bottomrule
\end{tabular}
\caption{\textbf{Mean (standard deviation) of success rate (\%) on each test task in sparse-reward setting}. We calculate the mean and standard deviation over 5 runs of different random seeds at the last training epoch (i.e., 1000 epochs).}
\label{tab:skill_adaptation_sparse}
\end{table}

\subsubsection{Continual Learning}
We begin with an initial skill library containing 10 policies and expand its size by 10 policies per round over 9 rounds, eventually reaching 100 policies.
When the skill library contains fewer than 40 policies, the number of source-target task pairs from the prior task set is limited. During this phase, we retrieve skills solely based on geometry embeddings. That is to say, the retrieved skill from the skill library is the one with the closest geometry embedding to the new task.
Once the skill library reaches 40 or more policies, we train the transfer success prediction function $F$ to guide skill retrieval for new tasks.

In the continual-learning setting, Fig.~\ref{fig:continual} in main text and Fig.~\ref{fig:continual_reverse} in Appendix show the efficiency of SRSA and AutoMate under two different task batch orderings.
In both cases, SRSA demonstrates significantly better sample efficiency compared to AutoMate.

\begin{figure}[h!]
\centering
\begin{subfigure}{.35\textwidth}
  \centering
  \includegraphics[width=\linewidth]{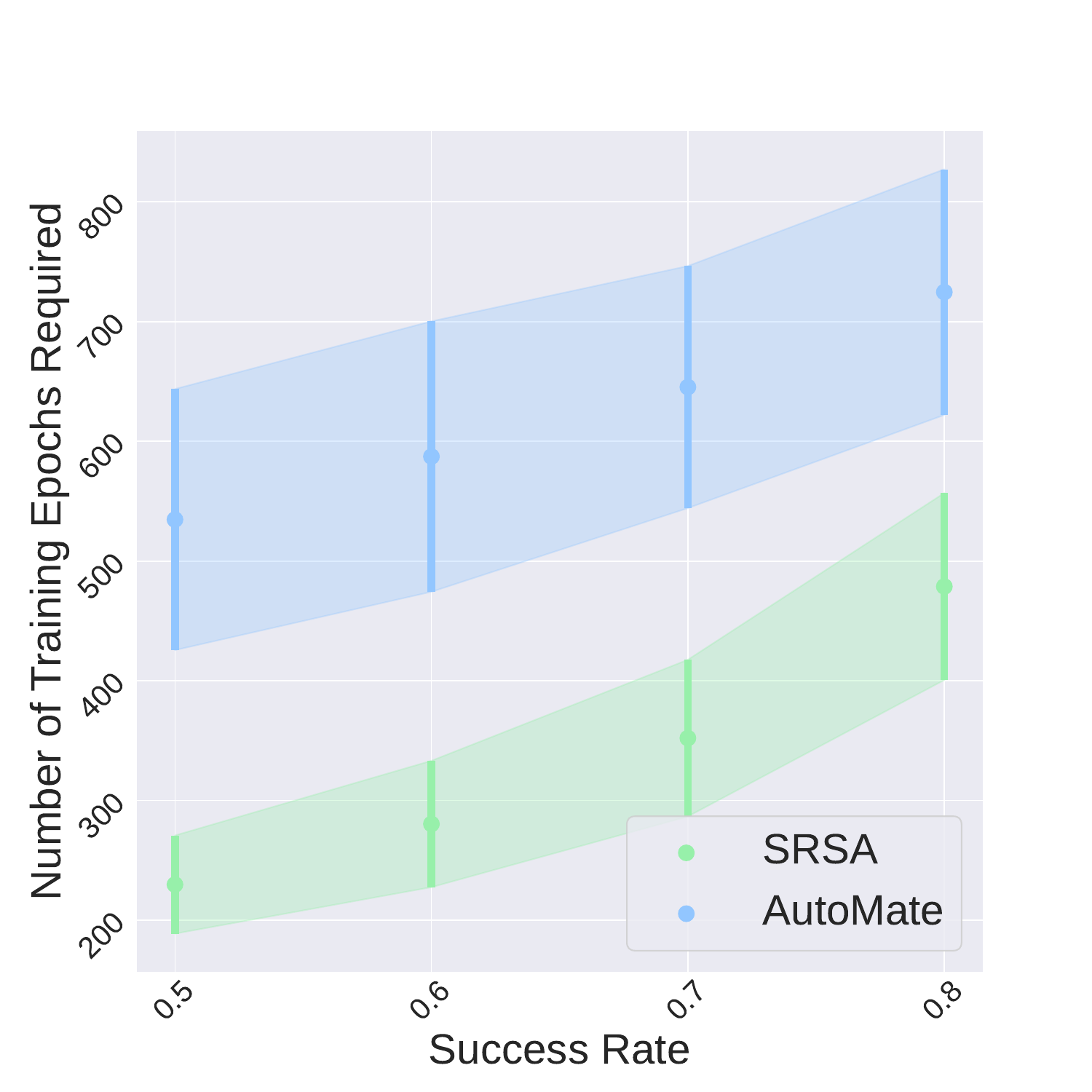}
  \caption{}
  \label{fig:continual_reverse_all}
\end{subfigure}%
\begin{subfigure}{.6\textwidth}
  \centering
  \includegraphics[width=\linewidth]{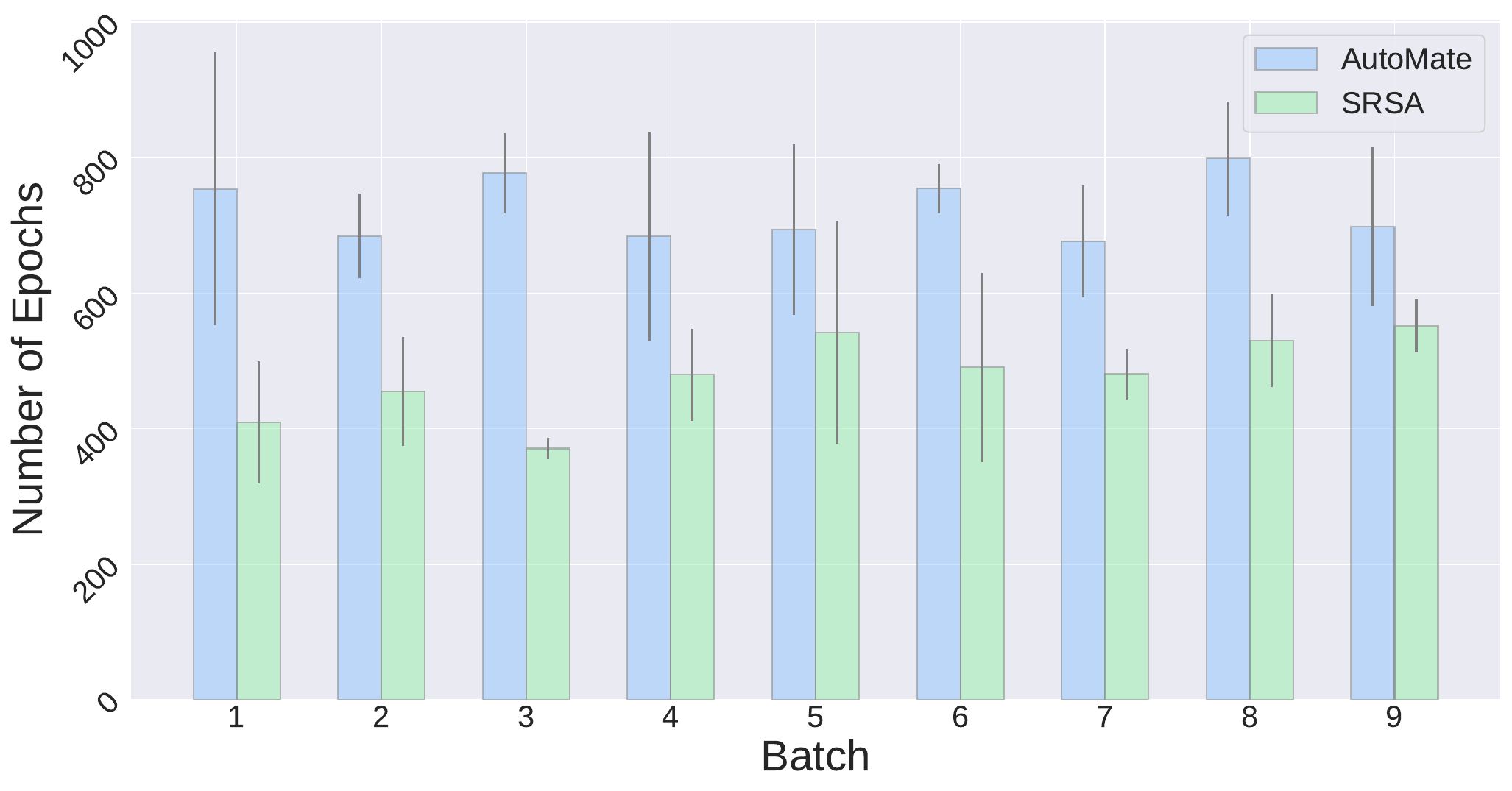}
  \caption{}
  \label{fig:continual_reverse_batch}
\end{subfigure}%
\caption{\textbf{(a) Overall sample efficiency}. We report the number of training epochs required to reach desired success rates (0.5, 0.6, 0.7, 0.8). We calculate the mean and standard deviation of the required training epochs over 5 runs, and report the average across 90 tasks. \textbf{(b) Sample efficiency in batches}. We sequentially introduce 9 batches of new tasks for policy learning, with each batch containing 10 new tasks. For each batch, we show the mean and standard deviation of training epochs required to reach a success rate of 0.8. SRSA consistently requires fewer training epochs.}
\label{fig:continual_reverse}
\end{figure}

Additionally, we compare SRSA and AutoMate based on the best checkpoint, measured by the highest rewards achieved over 5 runs for each task. In our replication of AutoMate, we achieved an average success rate of 70\% across 100 assembly tasks, which is lower than the 80\% reported in the original paper. This discrepancy may be due to differences in simulator versions, asset meshes, implementation details, and other factors.

On average, SRSA achieves a success rate of 79\% in Fig.~\ref{fig:continual_bar} and 73\% in Fig.~\ref{fig:continual_reverse_bar}, for two cases of task ordering, respectively. SRSA demonstrates a higher success rate and better sample efficiency than the baseline AutoMate.

\begin{figure}[ht!]
\centering
\includegraphics[width=\linewidth]{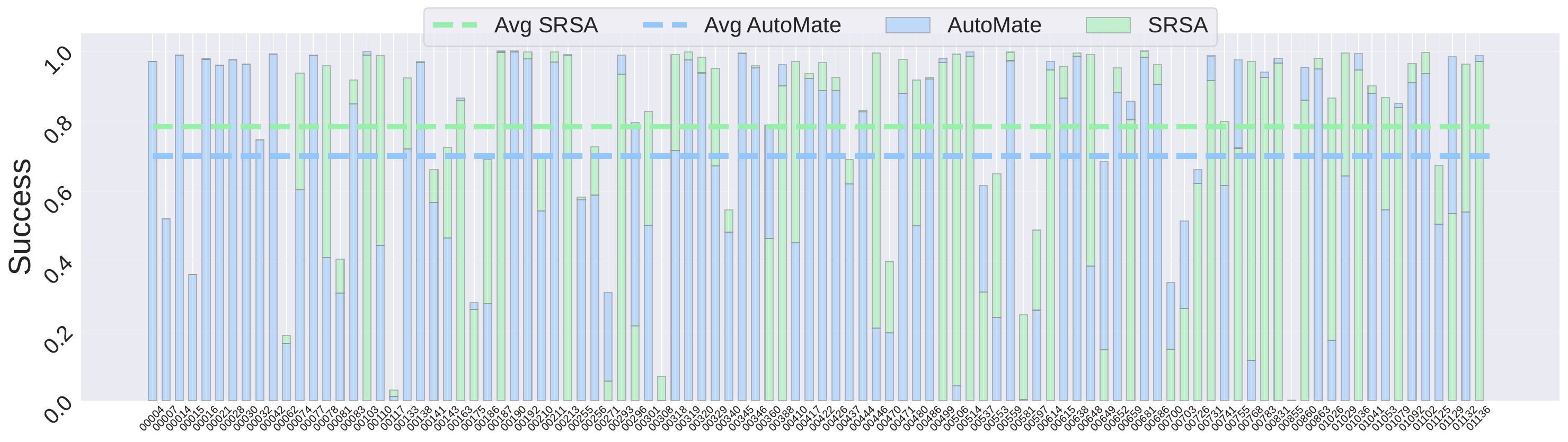}
\caption{\textbf{Comparison of SRSA and AutoMate success rate over 100 tasks}. We replicate the specialist policy learning in the AutoMate paper over all tasks, and run SRSA with the continual-learning approach to train 90 specialist policies with the initial skill library of 10 policies. For both approaches, for each task, we select the best checkpoint among 5 runs with different random seeds. We compare the success rate on all the tasks. On average, SRSA achieves a higher success rate.}
\label{fig:continual_bar}
\end{figure}

\begin{figure}[ht!]
\centering
\includegraphics[width=\linewidth]{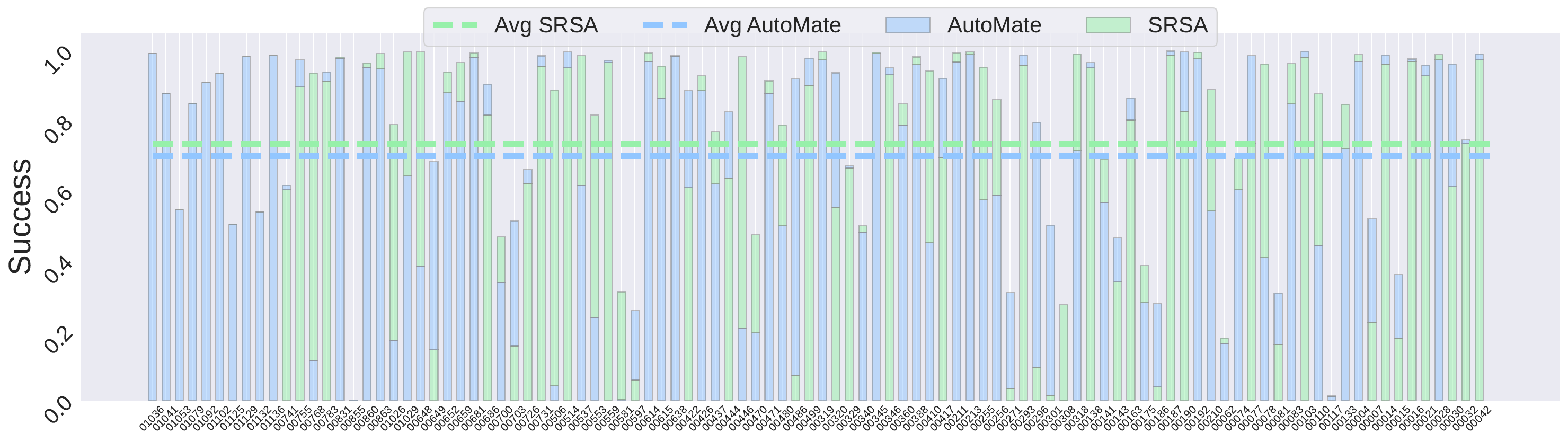}
\caption{\textbf{Comparison of SRSA and AutoMate success rate over 100 tasks}. We replicate the specialist policy learning in the AutoMate paper over all tasks, and run SRSA with the continual-learning approach to train 90 specialist policies with the initial skill library of 10 policies. For both approaches, for each task, we select the best checkpoint among 5 runs with different random seeds. We compare the success rate on all the tasks. On average, SRSA achieves a higher success rate.}
\label{fig:continual_reverse_bar}
\end{figure}

\subsubsection{Ablation Study}

\paragraph{Implementation Details}

Fig.~\ref{fig:ablation_all} illustrates the learning curves of different SRSA variations across 10 test tasks.

Skills retrieved based solely on geometry embeddings may face challenges during adaptation due to dynamic differences between the source and target tasks. As a result, the learning curves of SRSA-Geom tend to be less efficient and more unstable than SRSA. We further analyze this baseline in the following section.

When self-imitation learning is removed (SRSA-noSIL) from SRSA, the learning curves show increased fluctuation and higher variance across runs.

For the generalist policy, which was trained on 20 tasks from AutoMate (including tasks 01036, 01041, 01129, 01136), fine-tuning on these tasks yields strong performance since the policy was already optimized for them. However, on other test tasks, the generalist policy is not as effective for efficient policy learning compared to the skills retrieved by SRSA.

\begin{figure}[h!]
\centering
\begin{subfigure}{.3\textwidth}
  \centering
  \includegraphics[width=\linewidth]{figures/ablation_01029_wa.pdf}
\end{subfigure}%
\begin{subfigure}{.3\textwidth}
  \centering
  \includegraphics[width=\linewidth]{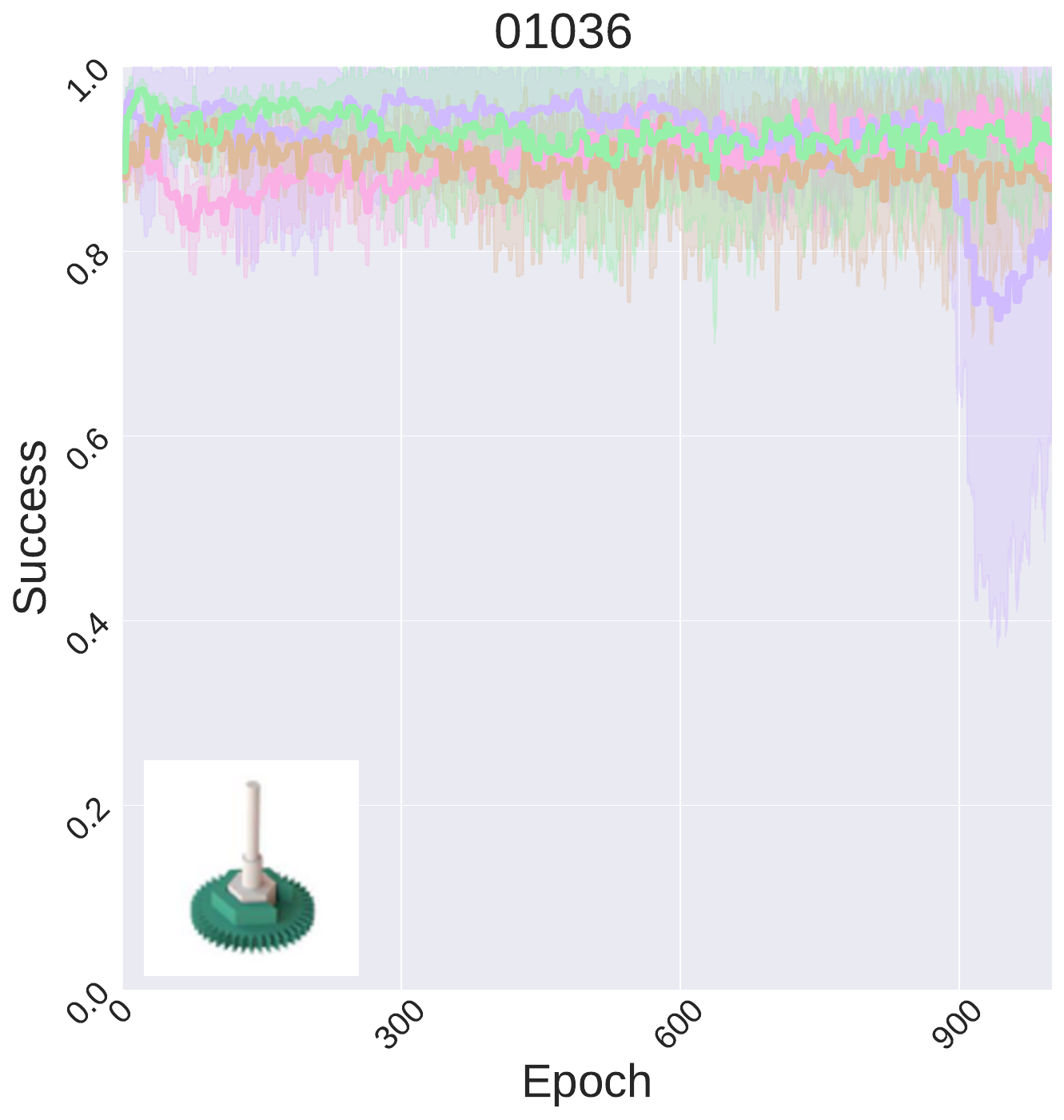}
\end{subfigure}%
\begin{subfigure}{.3\textwidth}
  \centering
  \includegraphics[width=\linewidth]{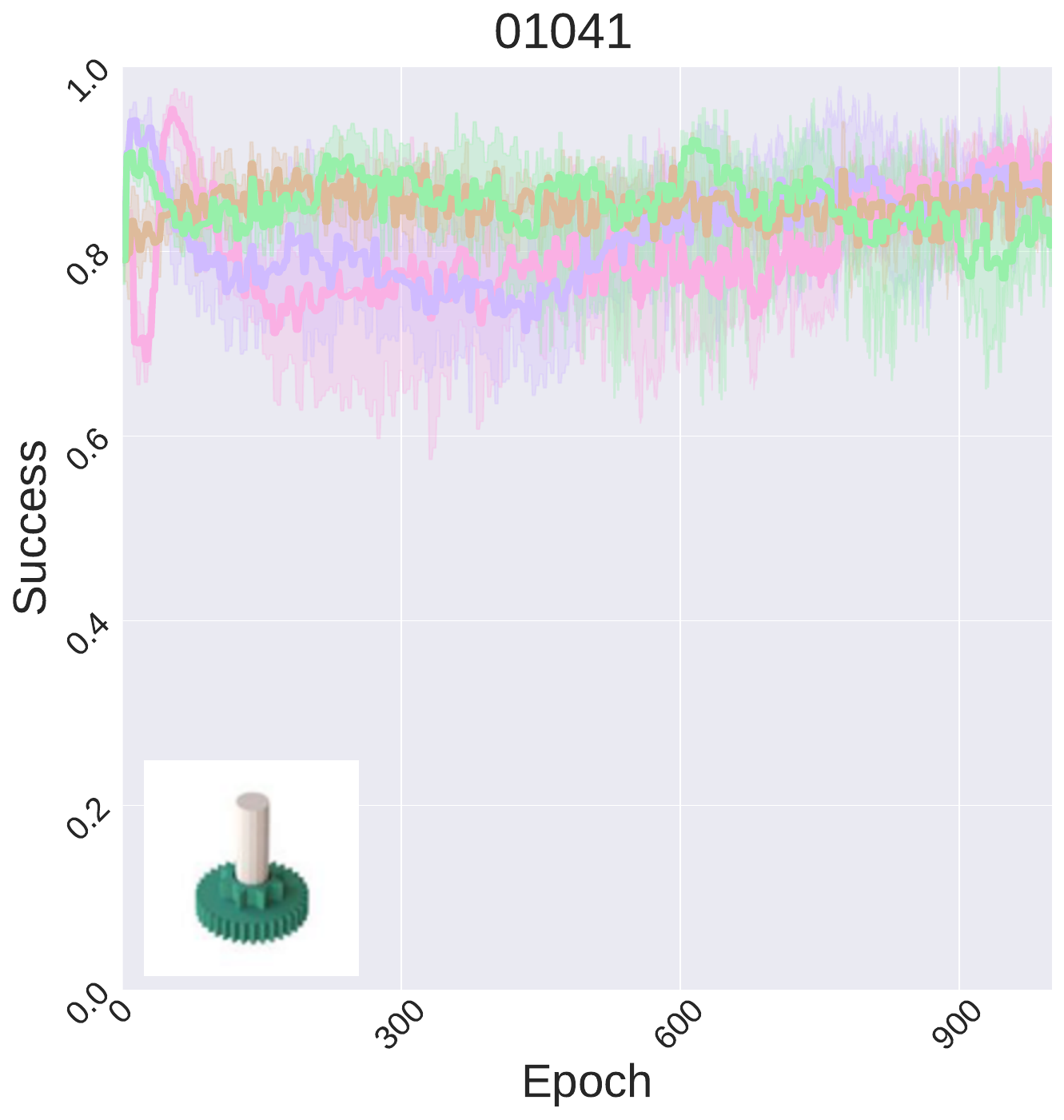}
\end{subfigure}
\begin{subfigure}{.3\textwidth}
  \centering
  \includegraphics[width=\linewidth]{figures/ablation_01053_wa.pdf}
\end{subfigure}%
\begin{subfigure}{.3\textwidth}
  \centering
  \includegraphics[width=\linewidth]{figures/ablation_01079_wa.pdf}
\end{subfigure}%
\begin{subfigure}{.3\textwidth}
  \centering
  \includegraphics[width=\linewidth]{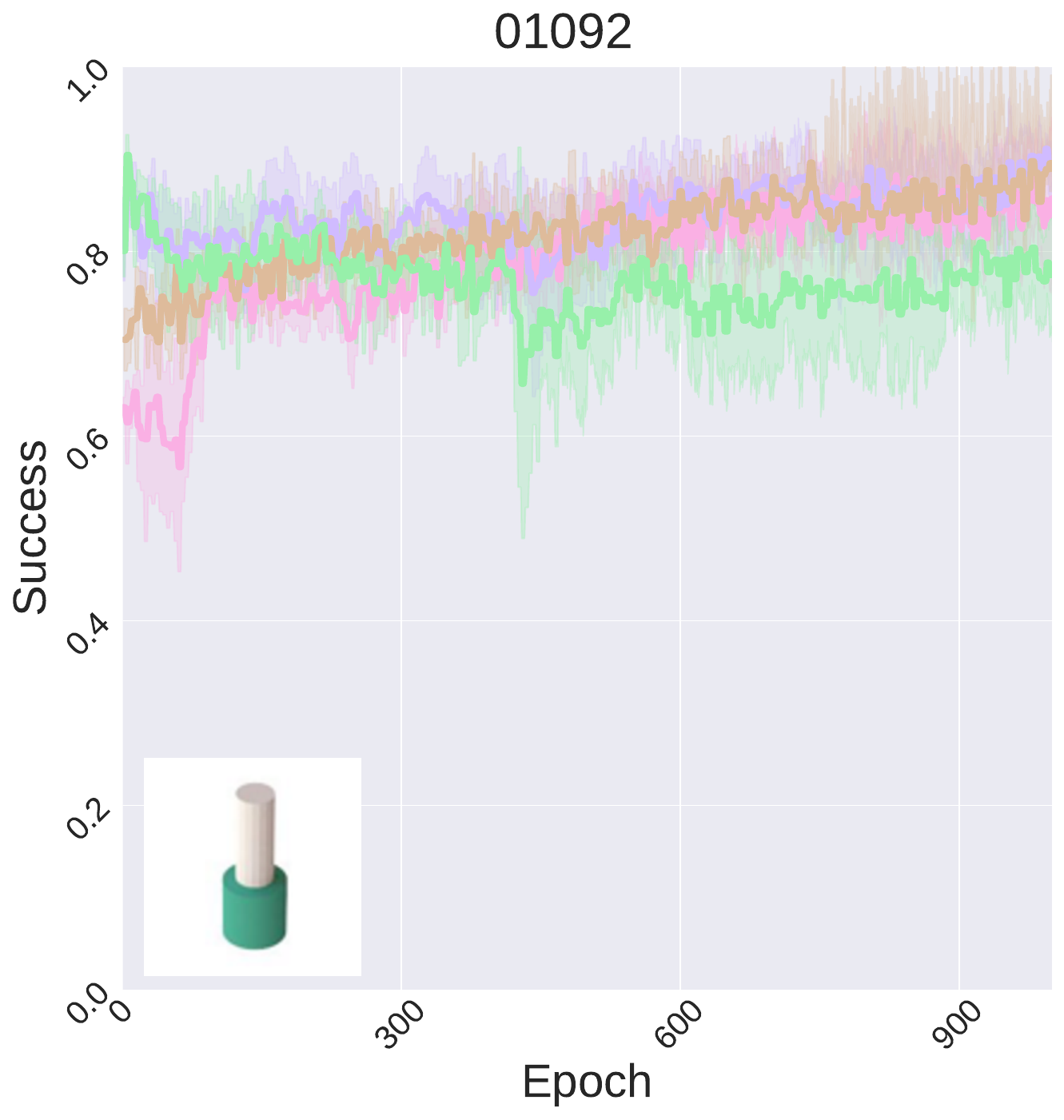}
\end{subfigure}
\begin{subfigure}{.3\textwidth}
  \centering
  \includegraphics[width=\linewidth]{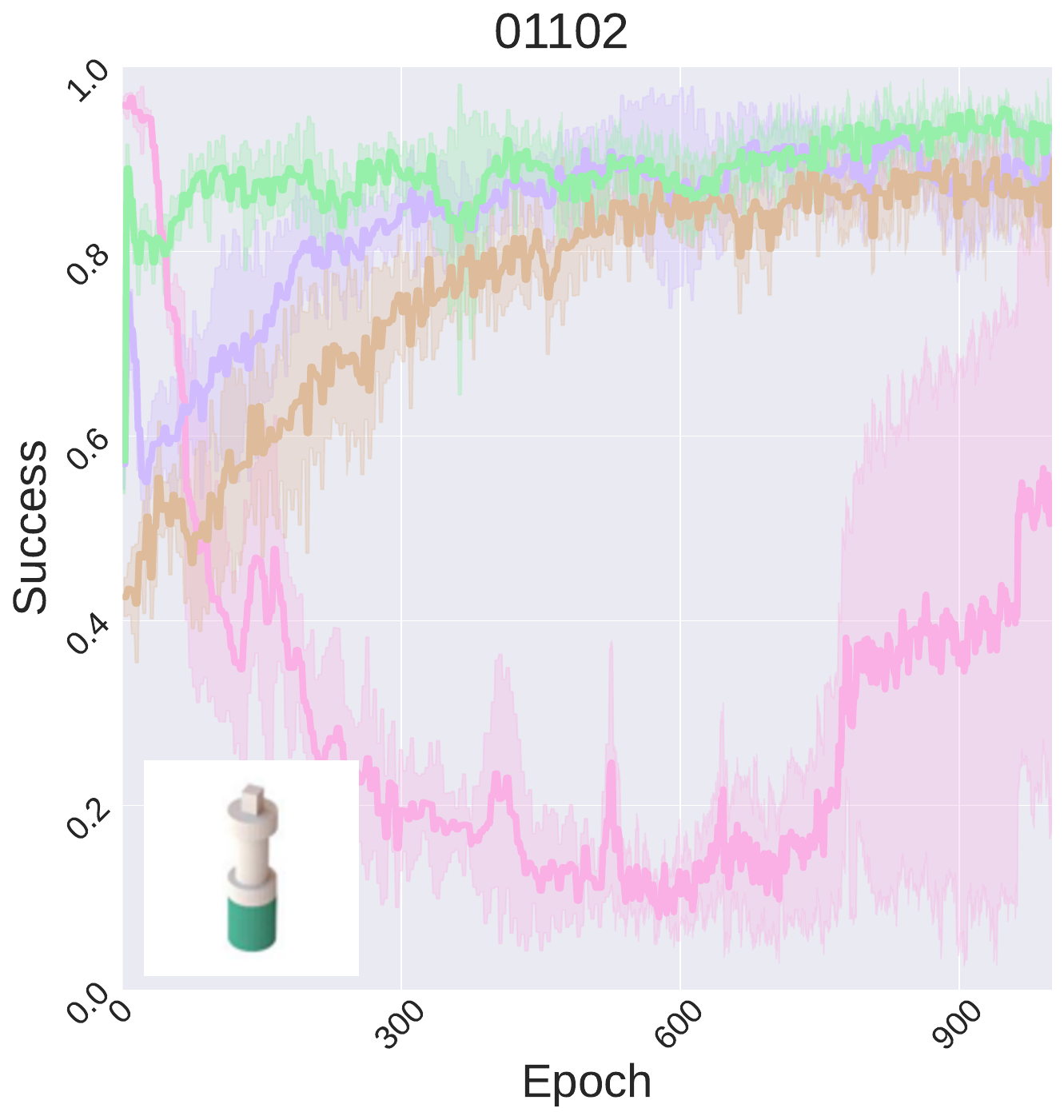}
\end{subfigure}%
\begin{subfigure}{.3\textwidth}
  \centering
  \includegraphics[width=\linewidth]{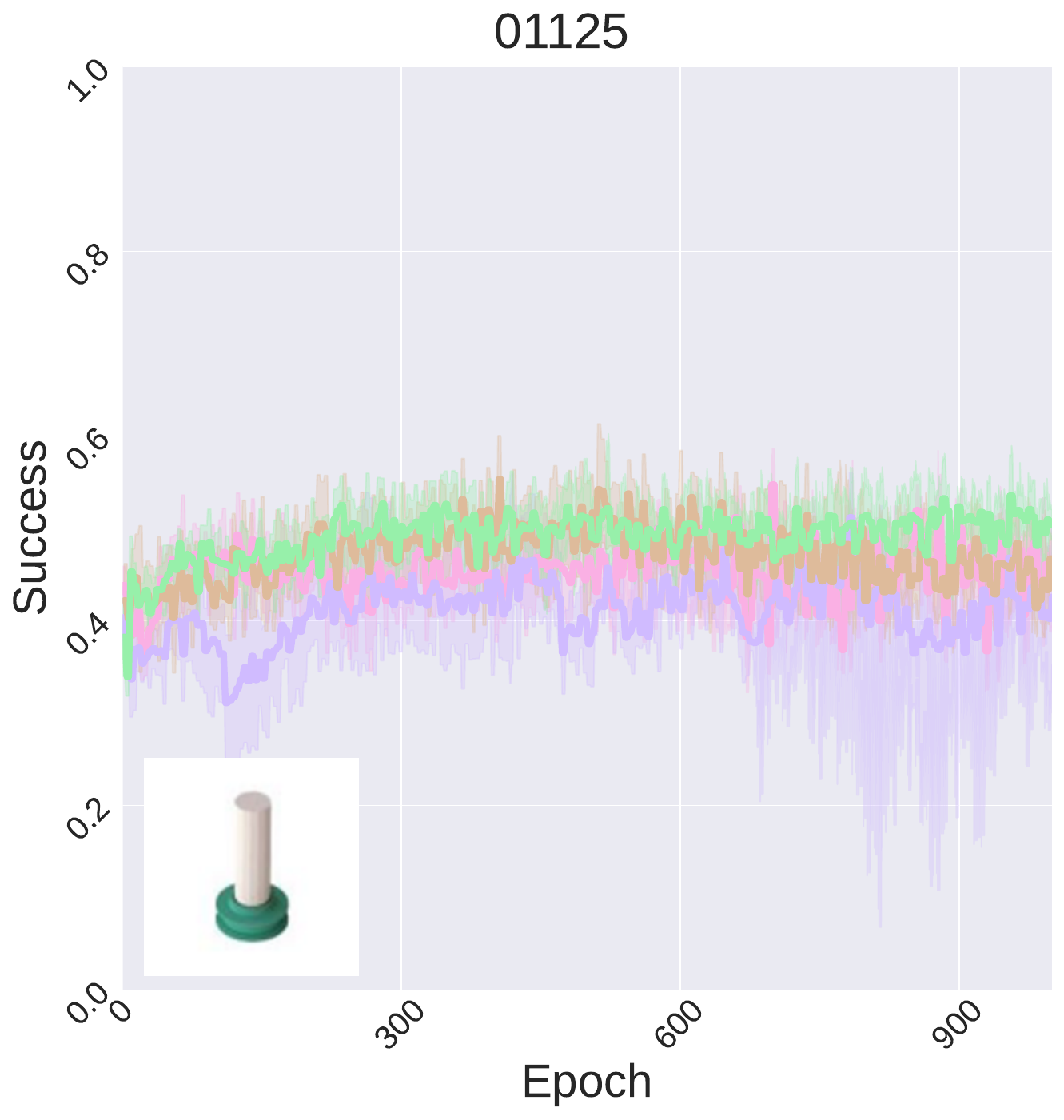}
\end{subfigure}%
\begin{subfigure}{.3\textwidth}
  \centering
  \includegraphics[width=\linewidth]{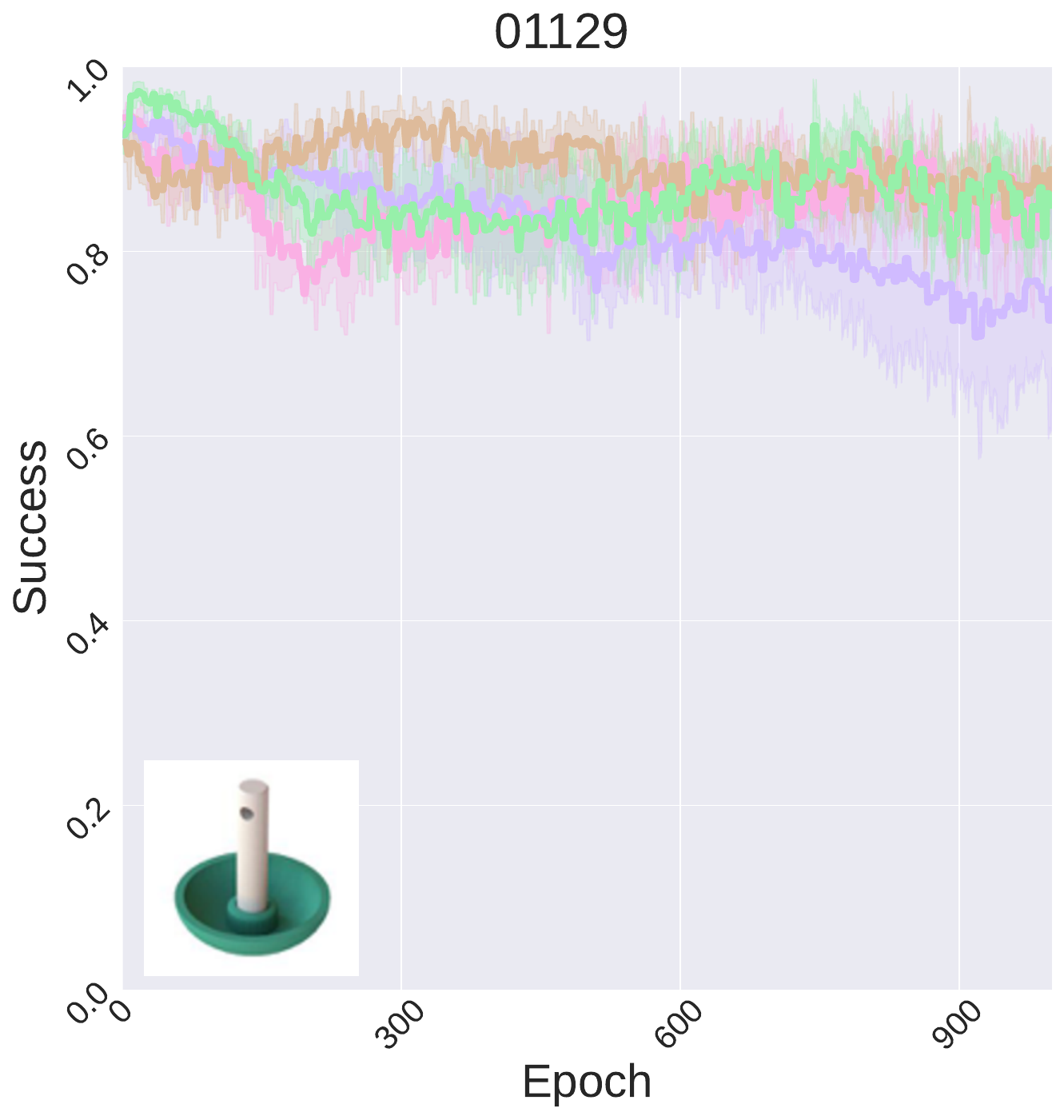}
\end{subfigure}
\begin{subfigure}{.3\textwidth}
  \centering
  \includegraphics[width=\linewidth]{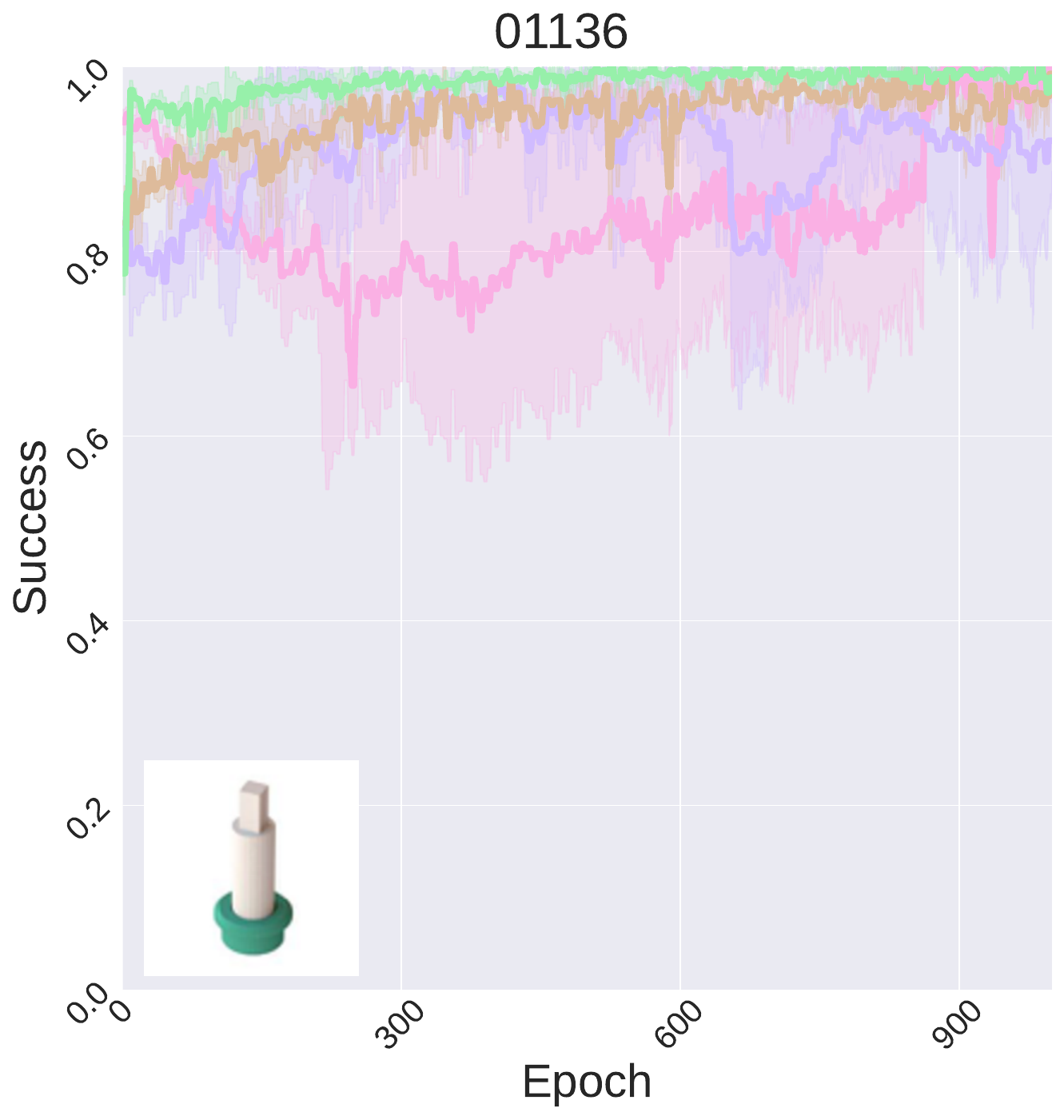}
\end{subfigure}
\caption{\textbf{Comparison for variants of SRSA with different ablated components}. For each method, we have 5 runs with different random seeds. The learning curves show mean and standard deviation of success rate over these runs.}
\label{fig:ablation_all}
\end{figure}

\paragraph{Generalist Policy}

Fine-tuning a state-based generalist policy does not perform well because the generalist policy has limited capacity and cannot cover more than 20 training tasks.

As prior work AutoMate \citep{tang2024automate} has shown, the training success rate of a state-based generalist policy decreases significantly when the number of training tasks exceeds 20, given a fixed policy architecture of RNN and MLP. We believe that this may be because each task requires precise control across distinct geometric features, and a single policy cannot capture the strategies for all these challenging tasks.

While “increasing model capacity” or moving toward a "large data and large model" regime might help mitigate this problem, it might introduce other challenges. Simply scaling model capacity could result in a generalist policy that works well in-domain but operates more like a "switching circuit," effectively storing task-specific strategies without generalizing to out-of-domain tasks. This approach is suboptimal as it prioritizes in-domain performance at the expense of out-of-distribution generalization. Thus, we do not want to increase the model capacity indefinitely. Instead, we may need a more advanced architecture (e.g., diffusion policy) or model-based RL approach with planning to better handle diverse tasks.

That said, several open questions remain for the state-based generalist policy: How do you design policy architectures capable of high-precision control across many tasks? How do you train the generalist policy efficiently on many assembly tasks considering possible gradient conflicts? How many training tasks are needed to achieve strong out-of-distribution generalization performance on new assembly tasks? 

Fine-tuning a vision-based generalist policy presents more challenges, such as effectively learning a generalist policy across multiple prior tasks with high-dimensional vision observations, fine-tuning on new tasks without forgetting prior ones, and addressing continual learning scenarios, including whether to fine-tune the original generalist policy or one already fine-tuned on other tasks. 
We made an initial attempt to train a vision-based generalist policy with PPO and fine-tune it. Given 90 prior tasks, it can only reach around 10\% average success rate. We expect such a generalist policy would perform no better than random initialization when fine-tuned for new tasks. Vision-based RL for generalist policy on assembly tasks is a relevantly new topic, and the development of such policies lies beyond the scope of SRSA. We leave this direction for future research.

\subsection{Comparison with Geometry-based Retrieval} 
During adaptation, the final performance of SRSA-geom looks close to SRSA in some cases (see Fig.~\ref{fig:ablation_all}). However, it is statistically worse than SRSA, especially when there is a smaller number of training epochs. To provide a more comprehensive evaluation, we run SRSA-geom and SRSA across additional target tasks with three random seeds. The table below summarizes statistics of success rate at different numbers of training epochs, showing that SRSA consistently achieves higher success rates with lower variance. In industrial settings, a 2–9\% difference in success rate can be highly substantial.
\begin{table}[h!]
\centering
\begin{tabular}{c|cc|cc}
\hline
\multicolumn{1}{l|}{} & \multicolumn{2}{c|}{Test task set 1}           & \multicolumn{2}{c}{Test task set 2}                   \\ \hline
Success rate (\%)     & Epoch 500              & Epoch 1000            & \multicolumn{1}{c|}{Epoch 500} & Epoch 1000           \\ \hline
SRSA-geom             & 73.6 ($\pm$ 6.9) & 81.0 ($\pm$7.7) & 67.7 ($\pm$7.1)           & 71.4 ($\pm$8.1) \\
SRSA                  & \textbf{81.4 ($\pm$4.7)}  & \textbf{82.6 ($\pm$4.8)} & \textbf{76.2 ($\pm$3.0)}           & \textbf{77.6 ($\pm$3.5)} \\ \hline
\end{tabular}
\end{table}

Geometry-based retrieval alone is not always sufficient. When tasks share similar geometry but have different dynamics, SRSA-geom struggles to transfer as effectively as SRSA. For example, for the target task 01092, SRSA-geom retrieves source task 00686, achieving a transfer success rate of only 61.1\%, whereas SRSA retrieves task 00213 with a higher success rate of 76.7\%. Although the overall shapes of 01092 and 00686 are similar (see below), the lower part of the plug in task 01092 is thinner than the upper part, and there is only a short distance to insert this lower part into the socket. These features closely resemble task 00213, i.e., a narrow plug to be inserted a short distance to accomplish assembly. These shared physical characteristics and similar task-solving strategies make 00213 better suited for transfer. In assembly tasks, the dynamics of the contact region are often more critical than overall geometry for task success. Therefore, source task 00213 works better than 00686 when transferring to the target task 01092.

\begin{figure}[h]
    \centering
    \includegraphics[width=0.9\linewidth]{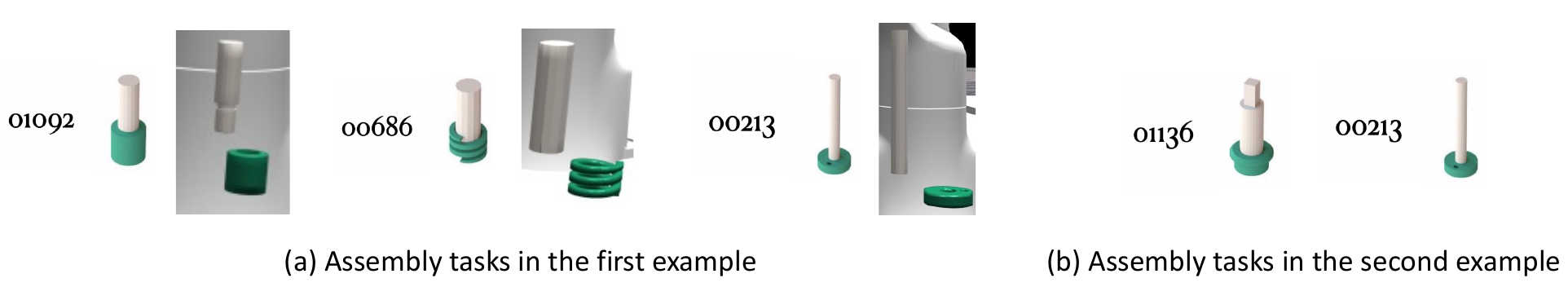}
\end{figure}

Additionally, we examine assembly tasks with identical geometry but differing physical parameters. For instance, consider the target task 01136 with a friction value of 10.0. One source task has the same geometry as 01136 but a significantly lower friction value of 0.5. SRSA-geom selects this source task due to its geometric similarity; however, the corresponding source policy achieves only 88.9\% transfer success on the target task due to the friction mismatch (compared to a 99.3\% success rate on its original source task). In contrast, SRSA selects the source task 00213, which not only shares geometric similarity but also has a friction value closer to that of the target task.
As a result, SRSA retrieved policy achieve a higher zero-shot transfer success rate of 93.2\%.

\subsection{Analysis of Source Policy Success as Input for Retrieval}
The success rate of the source policy on the source task is meaningful information to represent the source policy. To see whether it is practically beneficial for retrieval, we modify our approach. We simply concatenate the source success rate information with the task features of the source and target tasks. We train the transfer success predictor F with these features as inputs.

We consider three random splits between the prior task set (90 tasks) and test task set (10 tasks). For each split, we train F on the prior task set over three random seeds. For each seed, we test the trained function F on the test task set for retrieval. We report the mean transfer success rate of the retrieved skills on 10 test tasks, with the standard deviation reported over three seeds.  Empirically, the source success rate as input to F only slightly improves the retrieval results. 

\begin{table}[h!]
\centering
\begin{tabular}{c|ccc}
\hline
Average transfer success (\%) & Test task set 1       & Test task set 2      & Test task set 3       \\ \hline
SRSA                          & 62.7 ($\pm$5.7)          & 53.7 ($\pm$5.5)         & \textbf{44.9 ($\pm$2.4)} \\
SRSA+source success rate      & \textbf{66.7 ($\pm$0.3)} & \textbf{53.7($\pm$2.6)} & \textbf{43.7 ($\pm$3.7)} \\ \hline
\end{tabular}
\end{table}

\subsection{Analysis of Out-of-Distribution Test Tasks}
For out-of-distribution (OOD) tasks where no skill transfers zero-shot, SRSA may indeed struggle, and the initialization from a retrieved skill might not help much. To tackle this, it is essential to build a skill library which is as diverse as possible. When the target task falls outside the current library’s distribution, we can use SRSA’s continual learning approach (section 4.3 \& 5.4) to expand the library with new tasks. By building a larger, more varied skill library, we increase the likelihood that this target task will align better with tasks in the skill library. 

We run experiments for target tasks with IDs 00004, 00015, 00016, 00028, 00030. These tasks suffer from low transfer success rate given a small skill library with only 10 prior tasks. However, when we have a larger and larger skill library, the retrieved skill has a higher transfer success rate on the target task.

\begin{table}[h!]
\centering
\begin{tabular}{c|ccccc}
\hline
Transfer success rate (\%) & 00004 & 00015 & 00016 & 00028 & 00030 \\ \hline
10-task library            & 15.9            & 6.9             & 0.2   & 12.2            & 39.1            \\
50-task library            & 12.7   & 8.4    & 0.3    & \textbf{27.5}   & 49.4            \\
90-task library            & \textbf{24.2}   & \textbf{28.4}   & \textbf{19.3}   & 18.1            & \textbf{82.6}   \\ \hline
\end{tabular}
\end{table}

As demonstrated, continual learning to expand the skill library is a promising step; however, generalizing to OOD tasks is a longstanding challenge in robotics, and it is still an open question how to optimally construct the curriculum that governs the expansion of the skill library.

\subsection{Analysis of Other Metrics for Retrieval}
We acknowledge that zero-shot transfer success rate may not be a perfect proxy for retrieval. We can consider several other possible metrics for retrieval: (1) Ground-truth success rate after adaptation (2) Predicted success rate after adaptation (3) Predicted zero-shot success rate (i.e., SRSA) (4) Predicted zero-shot dense reward.

Option 1 is the ideal metric to identify the best skill for retrieval, as our final goal is to obtain the highest success rate on the target task after adaptation. However, it introduces a chicken-and-egg problem, as we cannot get this metric without fine-tuning all candidate policies on the target task. 

Option 2 requires training a predictor for the success rate after adapting any source policy on any target task. We need the training labels of the ground-truth success rate after adaptation. Unfortunately, collecting this training data would require extensive computational resources. For each source-target pair, we need at least 20 GPU hours to finish adaptation; given a skill library of 100 tasks, 200,000 GPU hours would be required to collect training data. Furthermore, it will remain intractable as the skill library becomes larger.

Option 3 (SRSA) requires much less resources to collect training data for the predictor. We only need 20 minutes on a GPU to evaluate one source policy on a target task. It thus requires 3,000 GPU hours to collect training labels. 
We conduct an experiment to compare the performance of Option 1 and Option 3 on two test tasks. To collect experimental results for Option 1, for each test task, we sweep all 90 source policies in our skill library. We fine-tune each source policy with one random seed to adapt to the test task and identify the best success rate after adaptation.
Below we report the success rate of Option 1 and Option 3 on two test tasks, after fine-tuning for 1500 epochs.

\begin{table}[h]
\centering
\begin{tabular}{c|cc}
\hline
Success rate after adaptation (\%) & Test task 1036 & Test task 1041 \\ \hline
Option 3 (SRSA)                    & 95.9           & 89.1           \\
Option 1                           & 98.3  & 94.0  \\ \hline
\end{tabular}
\end{table}

Option 1 is the perfect but intractable metric for retrieval. The difference in success rate between the SRSA-retrieved skill (Option 3) and the best source skill (Option 1) is less than 5\% after adaptation. Therefore, although zero-shot transfer success rate is not a perfect metric for retrieval, it is a high-quality metric for retrieval in terms of both performance and computational efficiency.

Furthermore, we consider using dense reward information to guide retrieval (Option 4). We learn to predict the accumulated reward rather than success rate on the target task when executing the source policies in a zero-shot manner; then we retrieve the source policy with the highest predicted transfer reward. In the table below, we show the performance of retrieved skills when they are applied on the target tasks.

\begin{table}[h]
\addtolength{\tabcolsep}{-0.3em}
\centering
\begin{tabular}{c|cc|cc}
\hline
                & \multicolumn{2}{c|}{Test task set 1}    & \multicolumn{2}{c}{Test task set 2}     \\ \hline
                & Transfer reward & Transfer success (\%) & Transfer reward & Transfer success (\%) \\ \hline
Option 3 (SRSA) & \textbf{8134}   & \textbf{62.7}         & 7722            & \textbf{53.7}         \\
Option 4        & 7976            & 54.8                  & \textbf{7935}   & 32.6                  \\ \hline
\end{tabular}
\end{table}

On the AutoMate task set, Option 3 (SRSA) yields slightly better skill retrievals, especially with higher transfer success on the target task. However, success rate may not accurately reflect the expected value for tasks with dense rewards; the higher transfer success rate does not mean higher transfer reward in test task set 2. Therefore, if it is critical to prioritize the reward achieved on the target task, using the transfer-reward predictor for retrieval is a reasonable choice. Conversely, if the success rate on the target task is more critical (as in our assembly tasks), transfer success would be the preferred choice as a retrieval metric.

\subsection{Analysis of Distance Metrics for Task Features}
In SRSA method, we jointly learn features from geometry, dynamics and expert actions to represent tasks, and predict transfer success to implicitly capture other transfer-related factors from tasks.
To investigate the advantage of SRSA, we compare it against baselines that uses simple distance metrics for task features to determine task retrieval.
Specifically, we concatenate the features of geometry, dynamics and expert actions as the task features and apply  L2 distance, L1 distance, and negative cosine similarity between the vectors as the distance metrics for retrieval. We consider three different ways to split the prior task set (90 tasks) and test task set (10 tasks). For each test task, we retrieve the source task with the closest task feature to the target task.
The table below shows that SRSA outperforms the baselines on all test task sets.

\begin{table}[h]
\addtolength{\tabcolsep}{-0.3em}
\centering
\begin{tabular}{c|cccc}
\hline
Transfer success rate (\%) & L2 distance   & L1 distance   & Cosine similarity & SRSA \\ \hline
Test task set 1            & 51.6          & 50.8          & 52.6              & \textbf{62.7}                      \\
Test task set 2            & 47.1 & 49.0 & 46.5              & \textbf{53.7}                      \\
Test task set 3            & 35.3          & 35.0          & 36.1     & \textbf{44.9}                      \\ \hline
\end{tabular}
\end{table}

We attribute SRSA’s advantage to its transfer success predictor $F$, which capture additional information relevant to policy transfer. By explicitly learning to predict transfer success, $F$ provides a more effective metric for selecting source tasks with higher zero-shot transfer success."

\subsection{Ablation Study on Policy Initialization and Self-Imitation Learning}
For policy learning, AutoMate uses PPO from random policy initialization, and SRSA uses PPO with self-imitation learning (SIL) after initialization with the retrieved skill. Thus, the main difference between SRSA and AutoMate lies in (1) strong initialization from retrieval and (2) SIL. 

In Sec.~\ref{sec:ablation}, we compared SRSA and SRSA-noSIL to show the effect of SIL. Below, we additionally compare with SRSA with random initialization (SRSA-noRetr) to show the effect of initialization from retrieval.
Comparing AutoMate with SRSA-noRetr, we see the difference between PPO and PPO+SIL when learning a policy from scratch. Both approaches started from poor performance, but SIL has greater learning efficiency and stability.
Comparing SRSA-noRetr and SRSA, we see the difference between random initialization and initialization from retrieval. Policy retrieval provides a good start with a reasonable success rate. As a result, SRSA more efficiently reaches higher performance on the target task.

\begin{figure}[h!]
\centering
\begin{subfigure}{.3\textwidth}
  \centering
  \includegraphics[width=\linewidth]{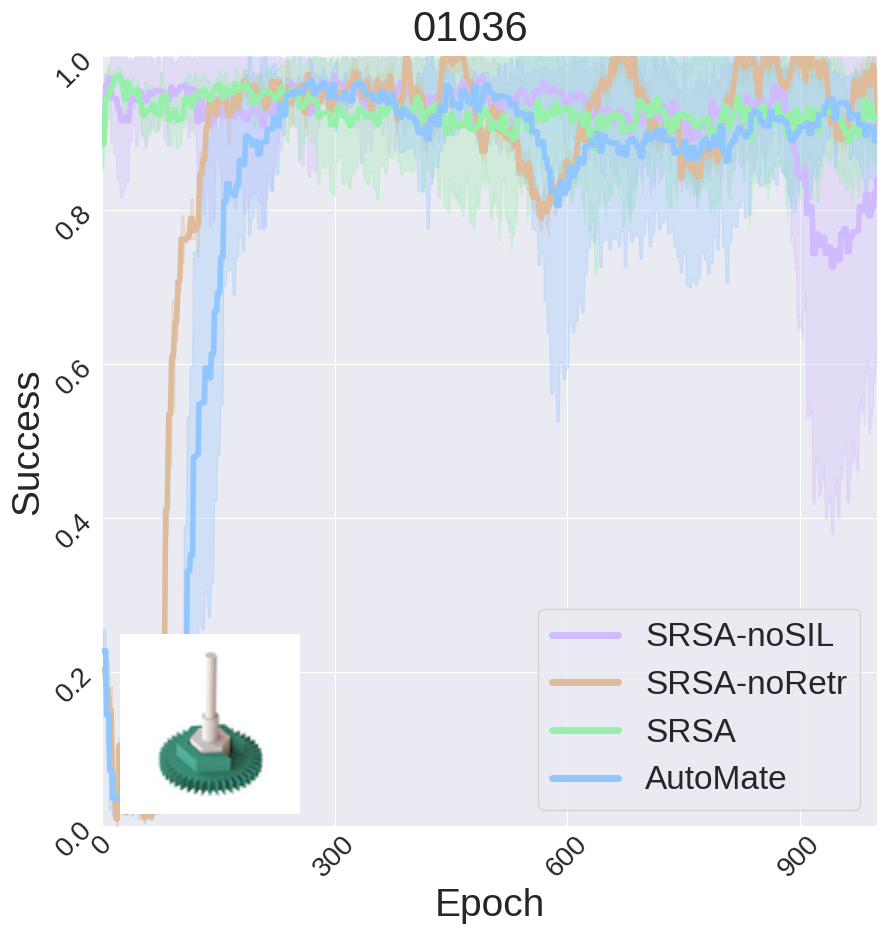}
\end{subfigure}%
\begin{subfigure}{.3\textwidth}
  \centering
  \includegraphics[width=\linewidth]{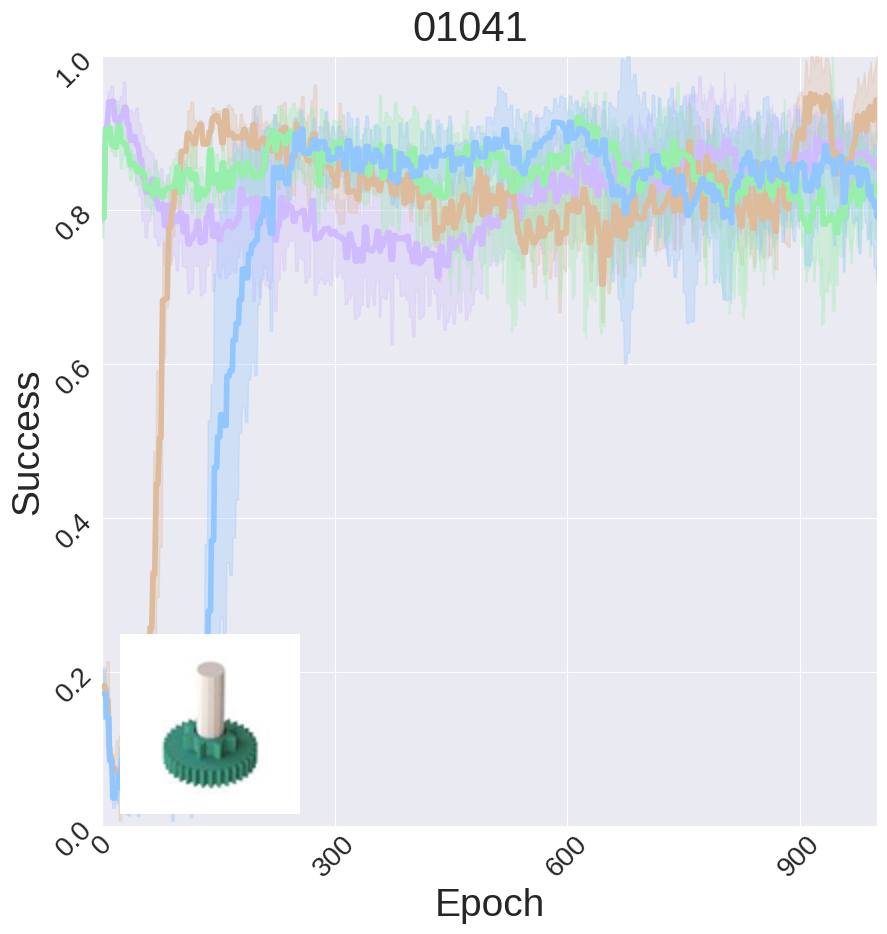}
\end{subfigure}%
\begin{subfigure}{.3\textwidth}
  \centering
  \includegraphics[width=\linewidth]{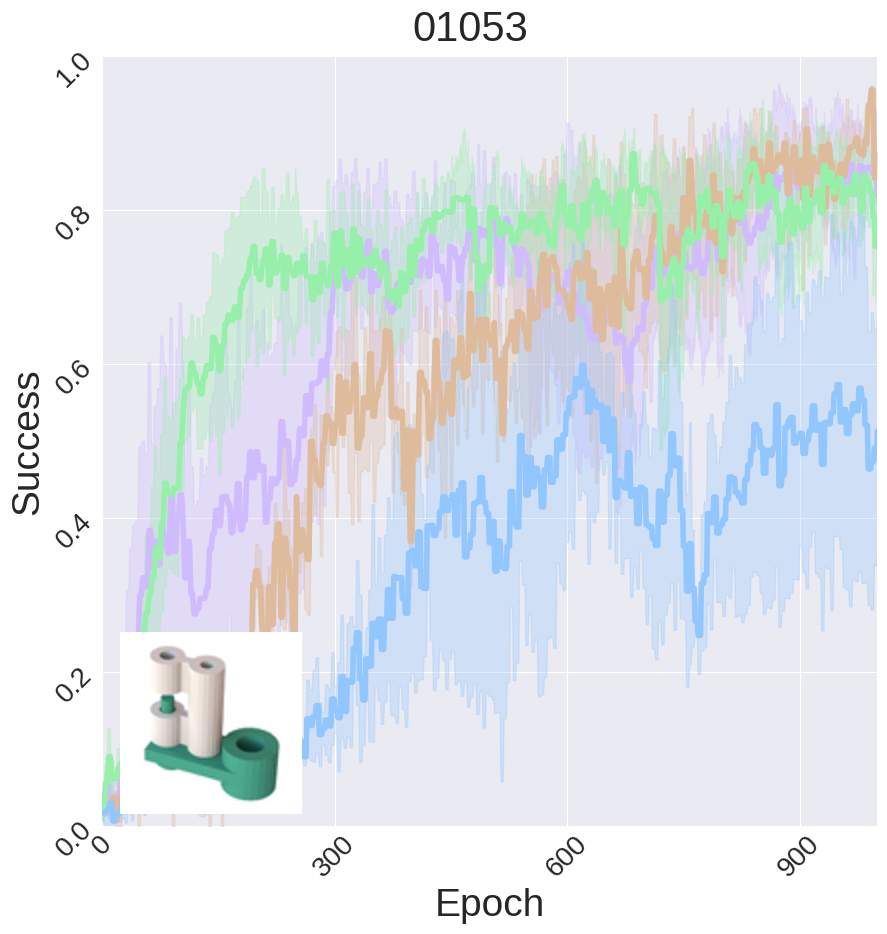}
\end{subfigure}
\begin{subfigure}{.3\textwidth}
  \centering
  \includegraphics[width=\linewidth]{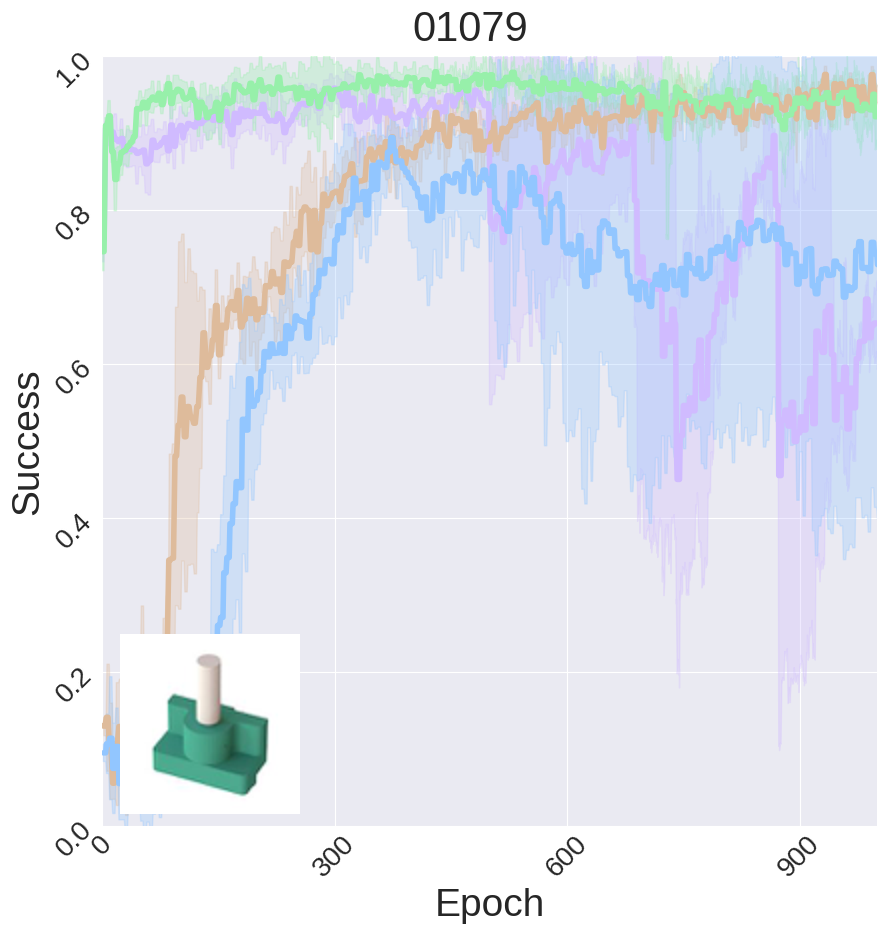}
\end{subfigure}%
\begin{subfigure}{.3\textwidth}
  \centering
  \includegraphics[width=\linewidth]{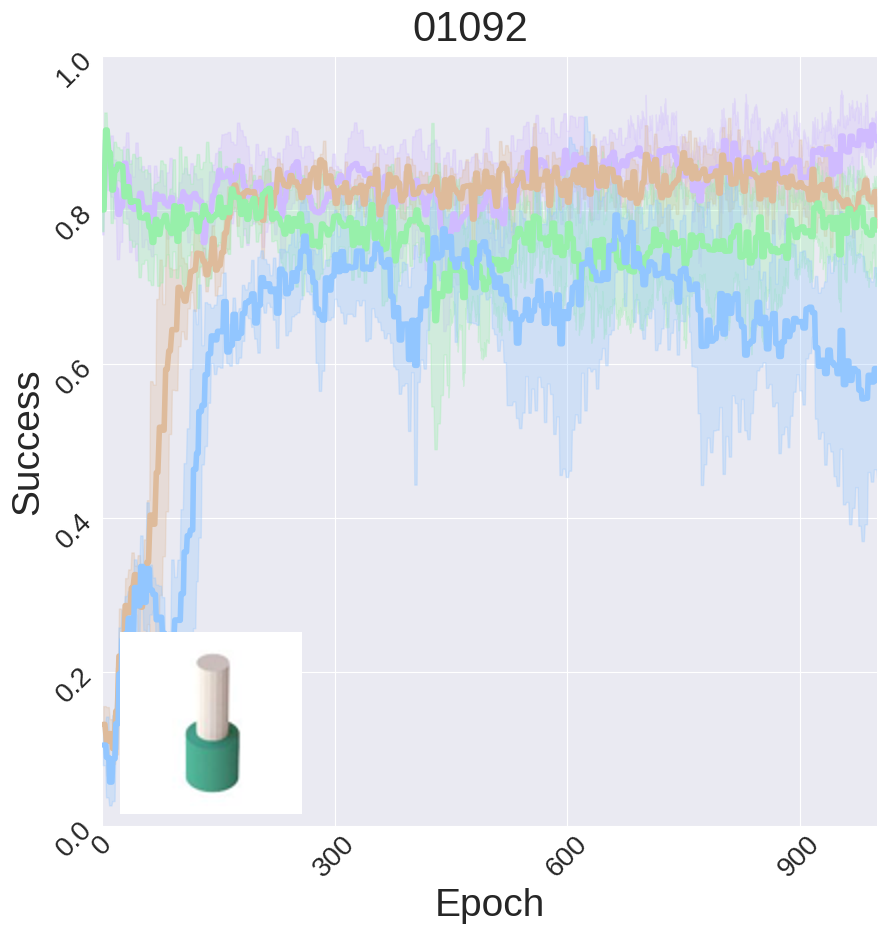}
\end{subfigure}%
\begin{subfigure}{.3\textwidth}
  \centering
  \includegraphics[width=\linewidth]{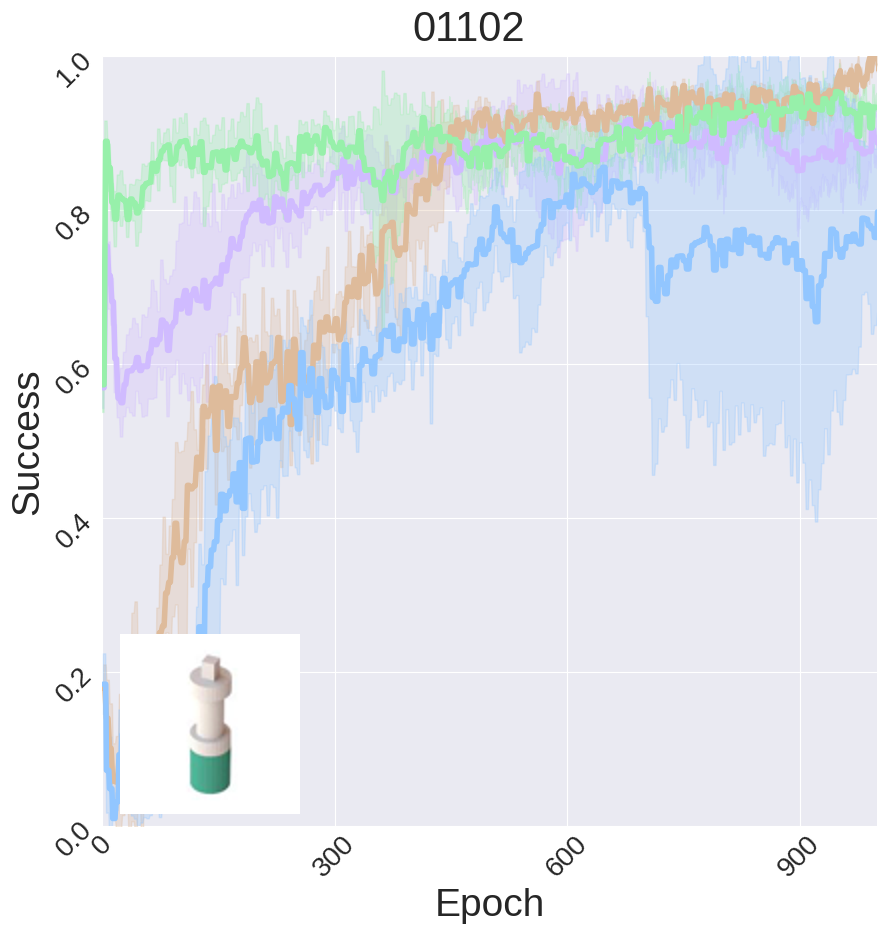}
\end{subfigure}
\begin{subfigure}{.3\textwidth}
  \centering
  \includegraphics[width=\linewidth]{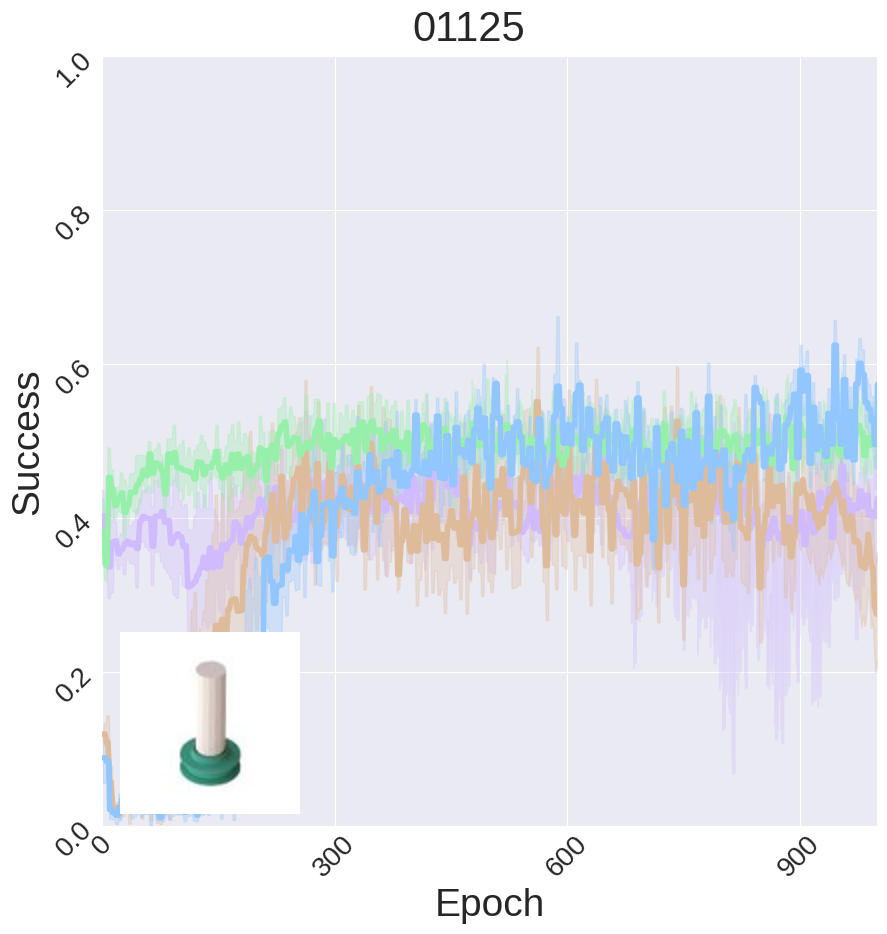}
\end{subfigure}%
\begin{subfigure}{.3\textwidth}
  \centering
  \includegraphics[width=\linewidth]{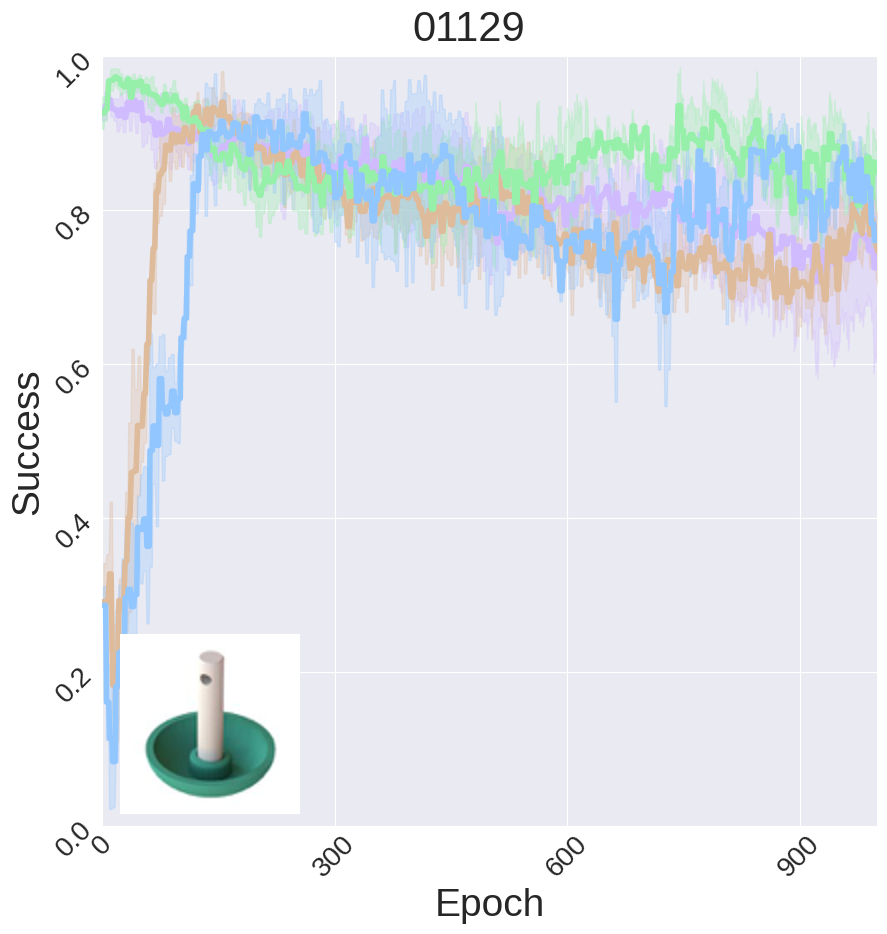}
\end{subfigure}%
\begin{subfigure}{.3\textwidth}
  \centering
  \includegraphics[width=\linewidth]{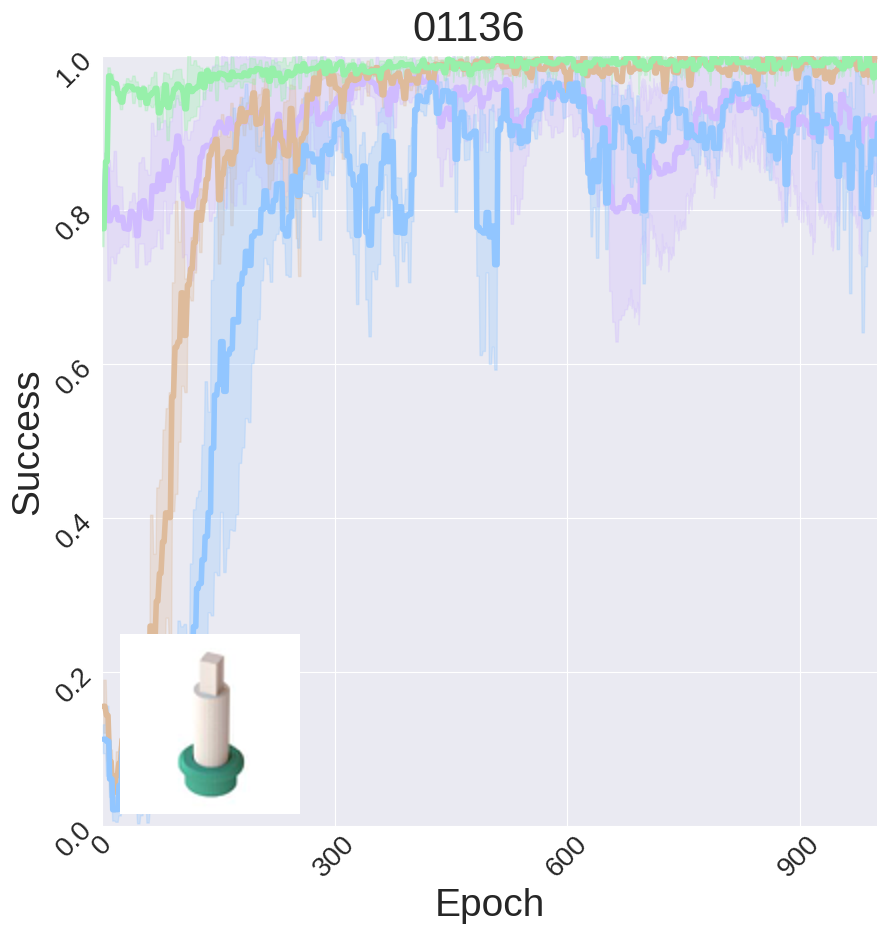}
\end{subfigure}
\caption{\textbf{Comparison for variants of SRSA with different ablated components}. For each method, we have 5 runs with different random seeds. The learning curves show mean and standard deviation of success rate over these runs.}
\end{figure}

\subsection{Top-$k$ Retrieval Selection in SRSA}
\label{app:top-k}
Although the transfer success predictor $F$ in SRSA effectively guides the retrieval of relevant skills, its predictions may not always be perfectly accurate. To mitigate this issue, we retrieve the top-$k$ skills ranked by the predictor $F$. With these $k$ candidates, we evaluate their zero-shot transfer success on the target task by running each candidate for 100 episodes.
We then select the best candidate with the highest average reward, ensuring that we identify the most relevant skill based on actual performance rather than just the predicted ranking.
In Sec.~\ref{sec:exp}, we set $k$ to 5.

To illustrate the impact of this selection process, we compare SRSA with SRSA-top1. Since the predictor $F$ is not always precise, the top-1 skill based on predicted transfer success may not be the best in terms of ground-truth performance. The learning curves in Fig.~\ref{fig:skill_adaptation_top5} demonstrate that SRSA and SRSA-top1 perform similarly in most cases. However, for certain tasks (e.g., 01125, 01129, 01136), SRSA benefits from selecting among the top-5 candidates and retrieves better-performing skills compared to SRSA-top1. 
Here SRSA-top1 works well in our setting because the pre-trained predictor $F$ is reliable given a large 90-task prior skill library.
However, with a smaller skill library, $F$ may be prone to overfitting and top-$k$ selection is probably more advantageous in this case.

\begin{figure}[ht!]
\centering
\begin{subfigure}{.3\textwidth}
  \centering
  \includegraphics[width=\linewidth]{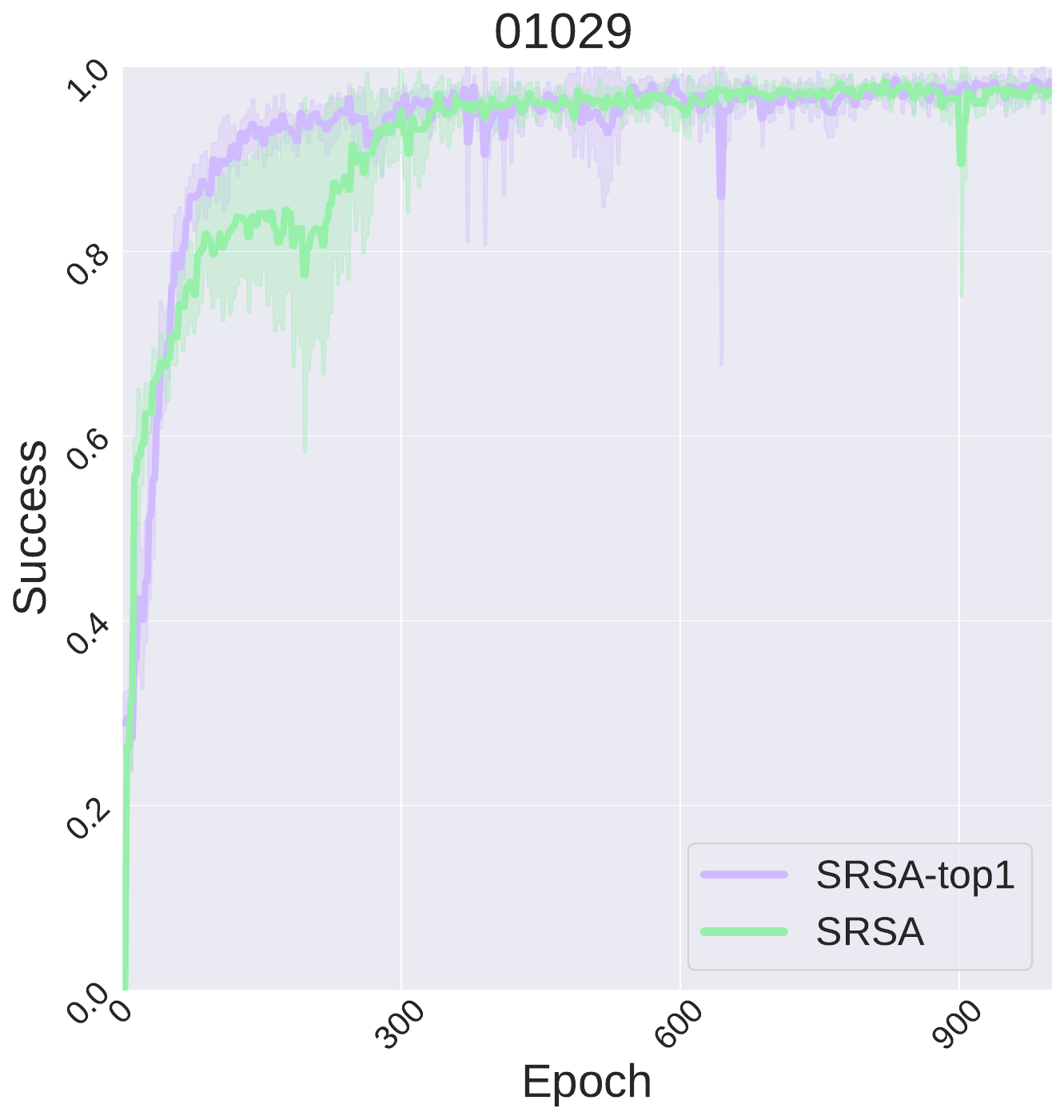}
\end{subfigure}%
\begin{subfigure}{.3\textwidth}
  \centering
  \includegraphics[width=\linewidth]{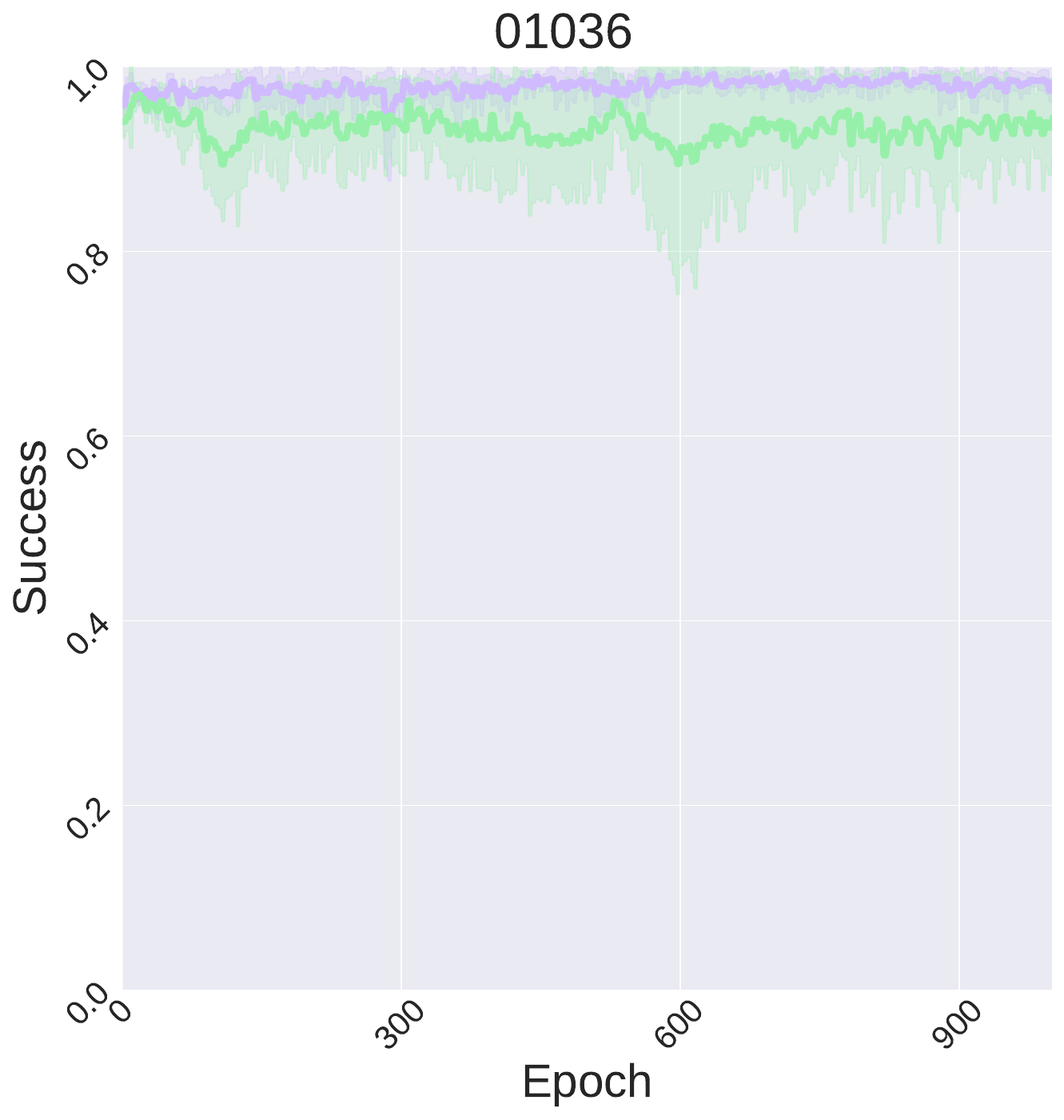}
\end{subfigure}%
\begin{subfigure}{.3\textwidth}
  \centering
  \includegraphics[width=\linewidth]{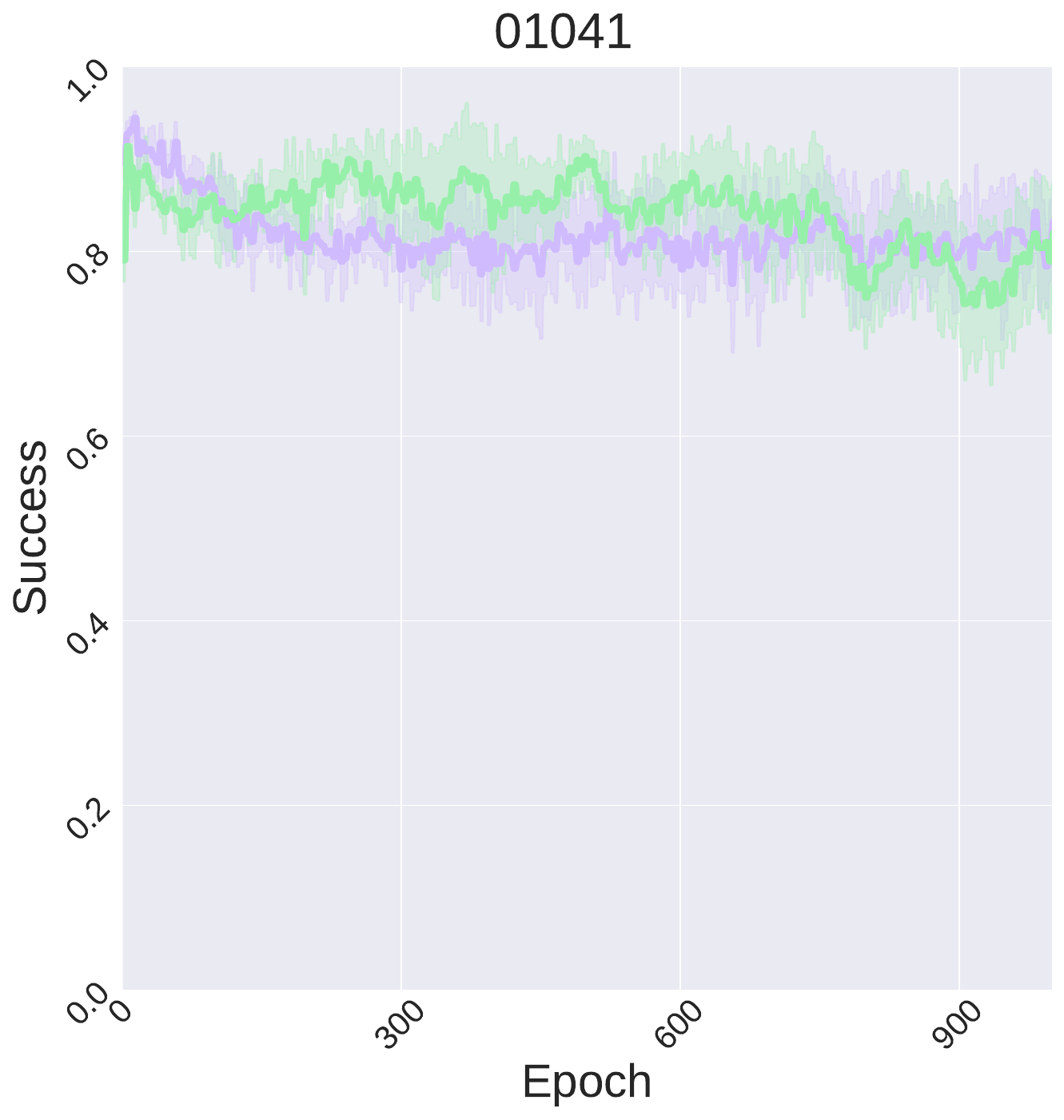}
\end{subfigure}
\begin{subfigure}{.3\textwidth}
  \centering
  \includegraphics[width=\linewidth]{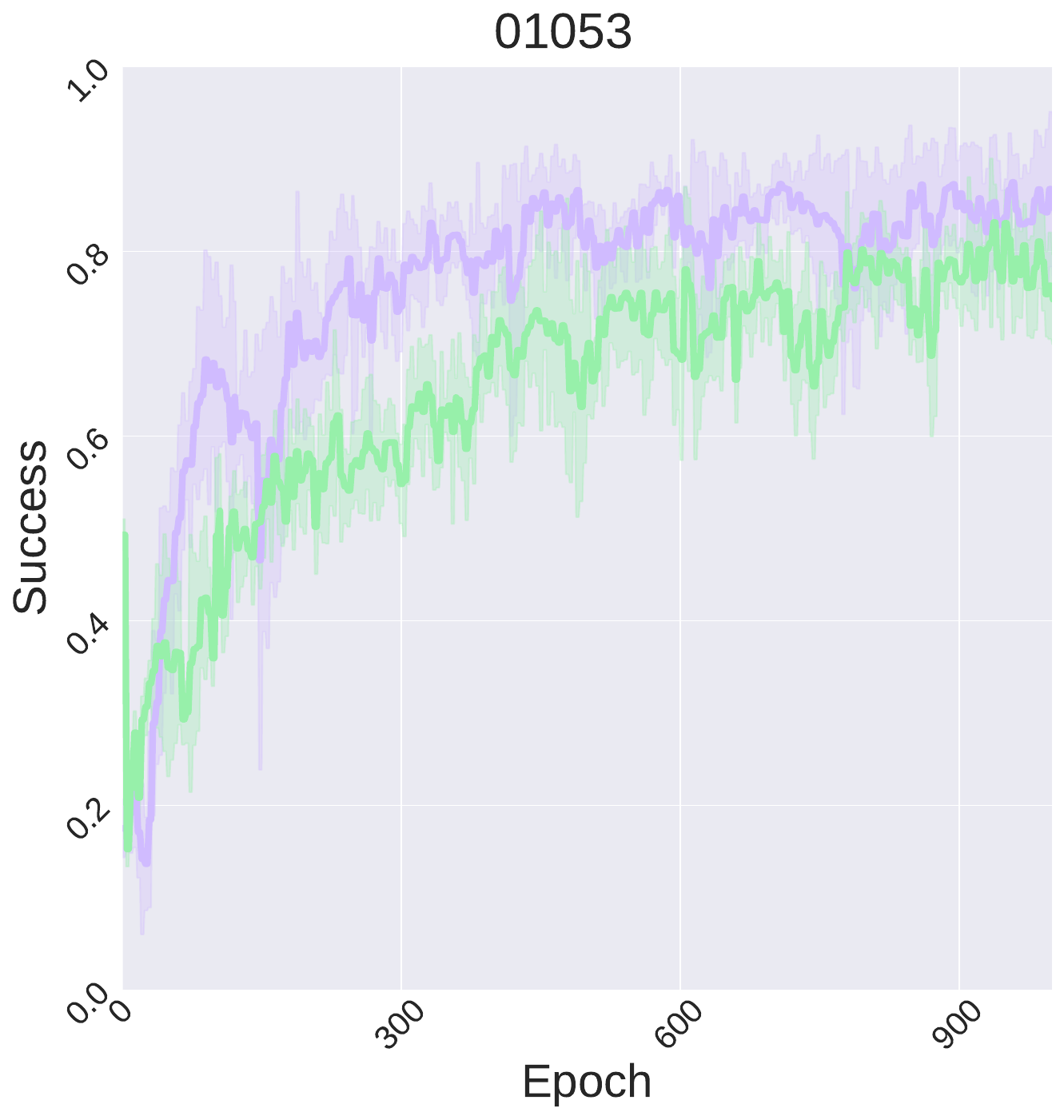}
\end{subfigure}%
\begin{subfigure}{.3\textwidth}
  \centering
  \includegraphics[width=\linewidth]{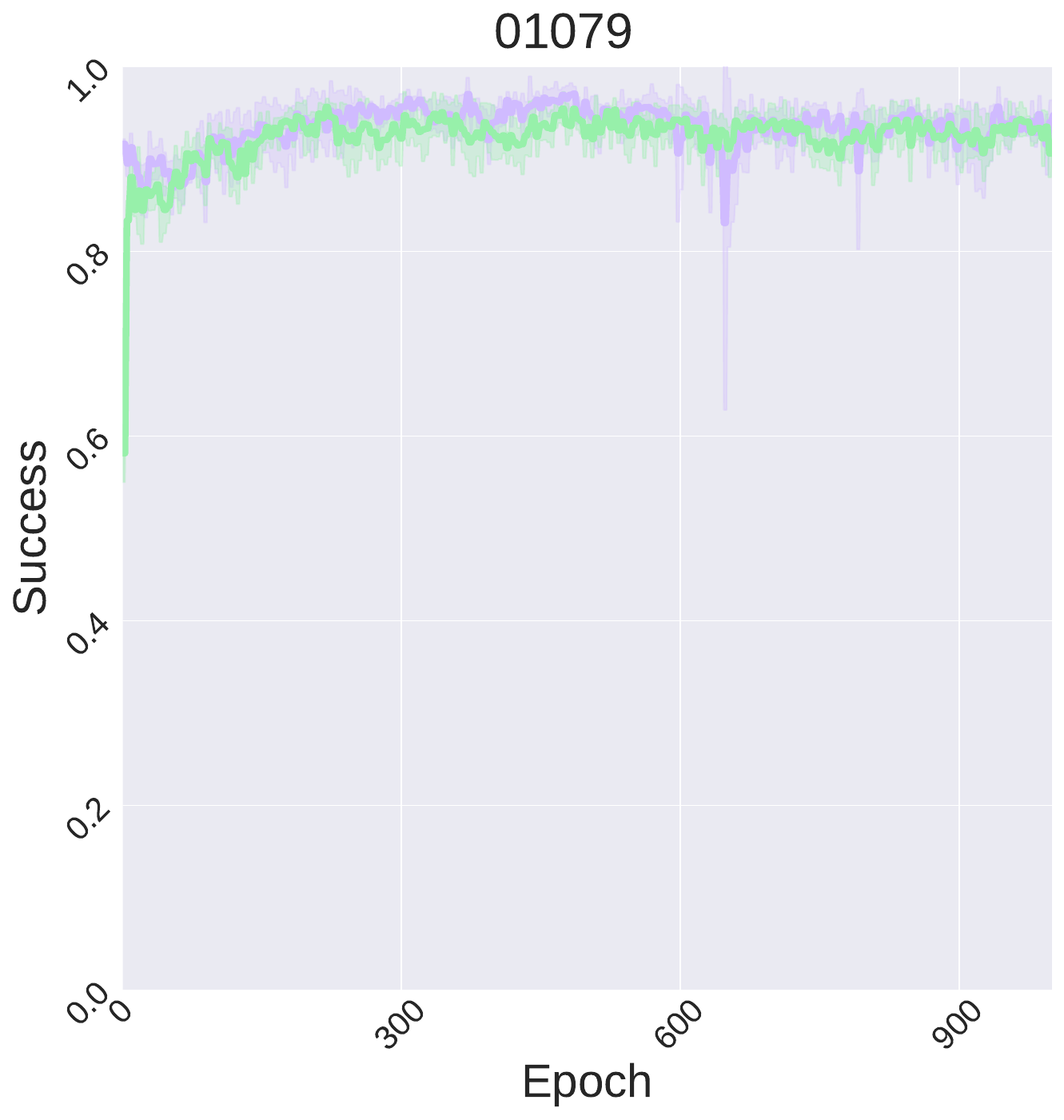}
\end{subfigure}%
\begin{subfigure}{.3\textwidth}
  \centering
  \includegraphics[width=\linewidth]{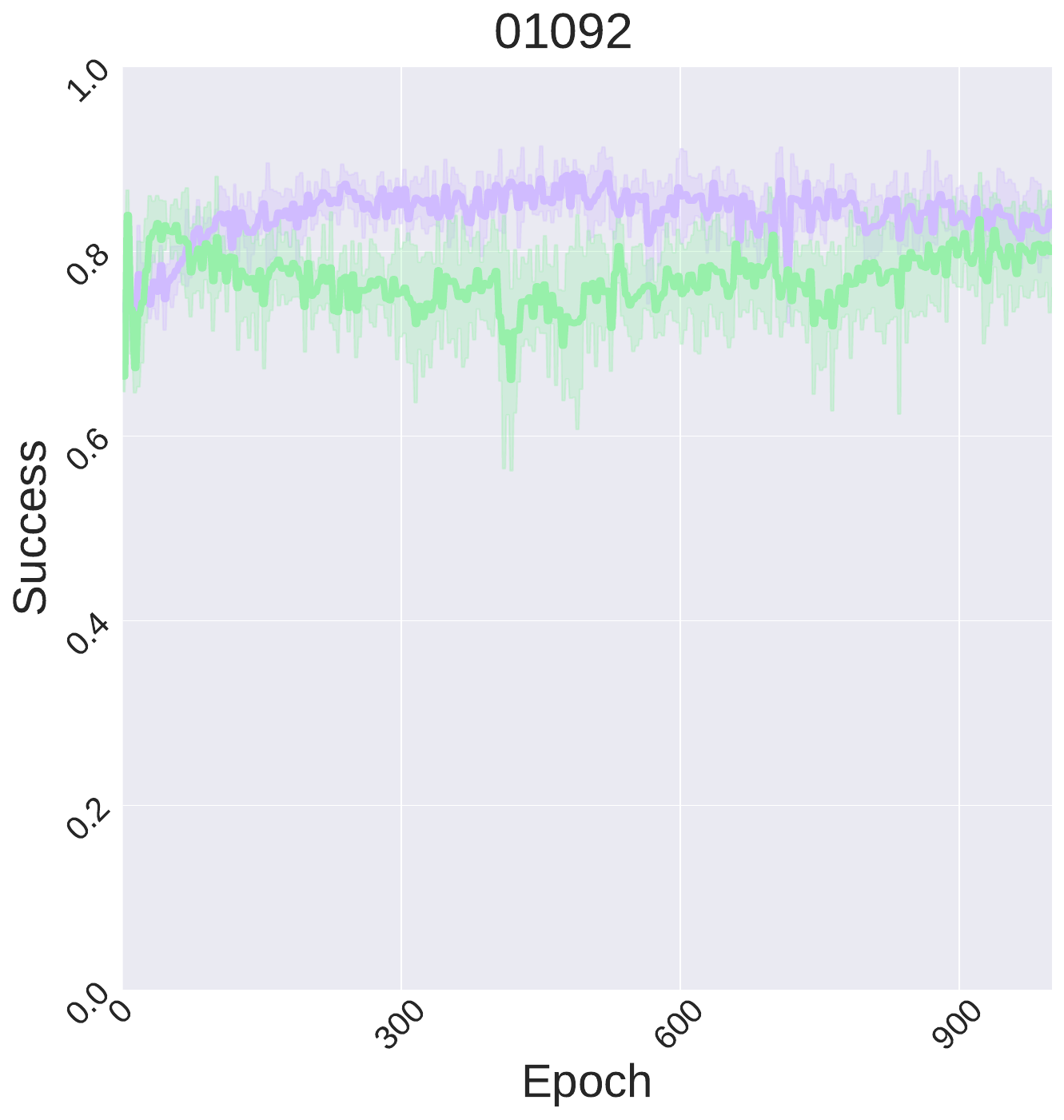}
\end{subfigure}
\begin{subfigure}{.3\textwidth}
  \centering
  \includegraphics[width=\linewidth]{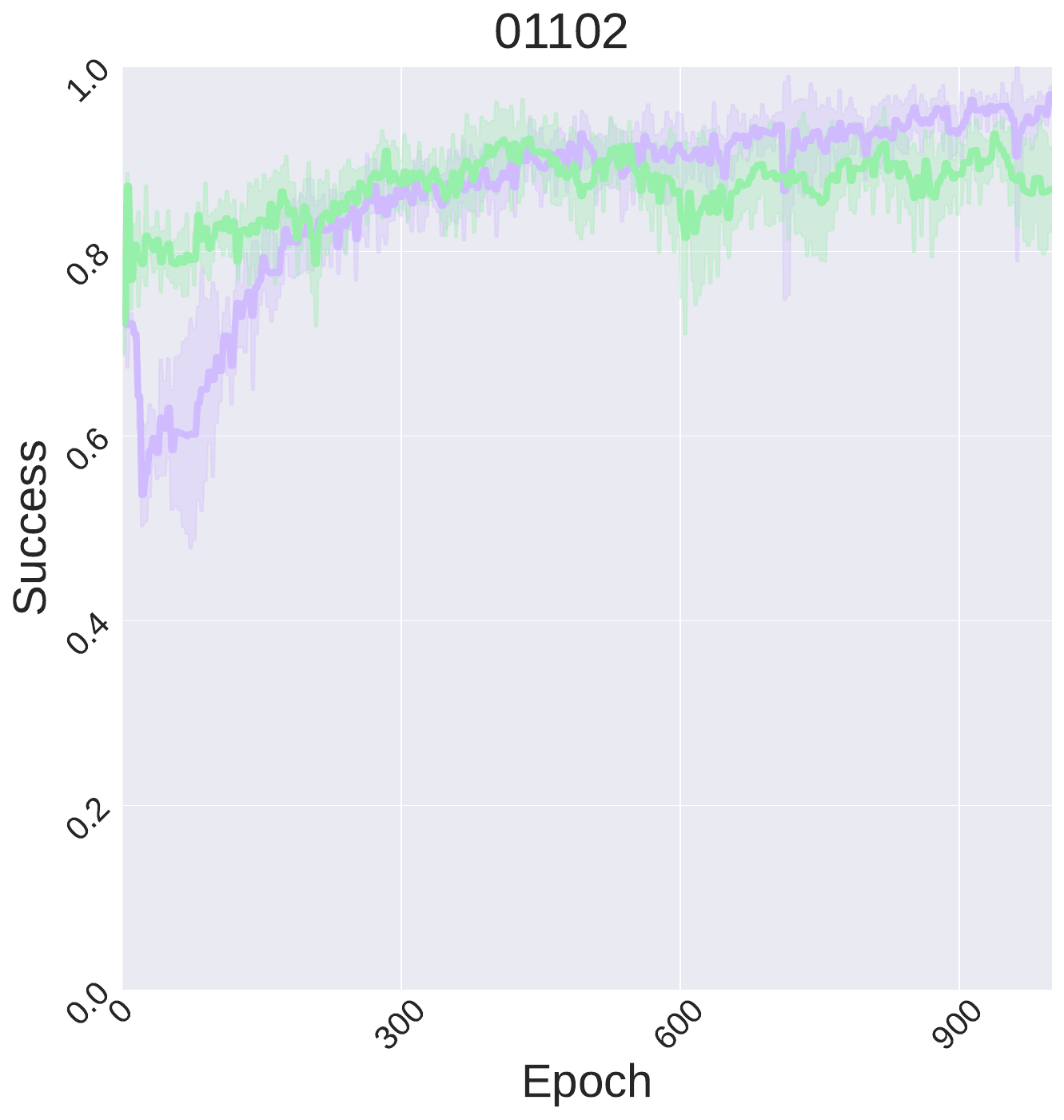}
\end{subfigure}%
\begin{subfigure}{.3\textwidth}
  \centering
  \includegraphics[width=\linewidth]{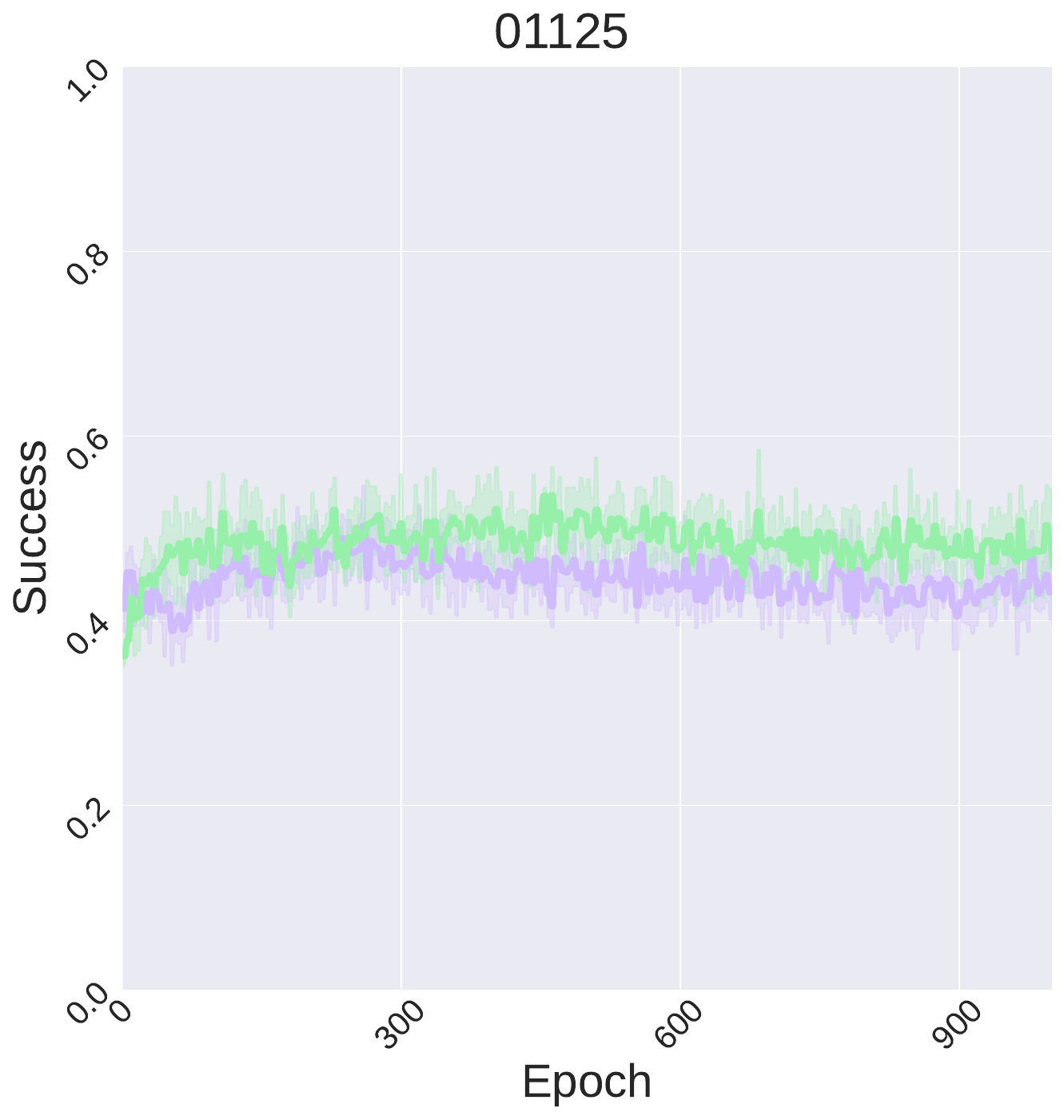}
\end{subfigure}%
\begin{subfigure}{.3\textwidth}
  \centering
  \includegraphics[width=\linewidth]{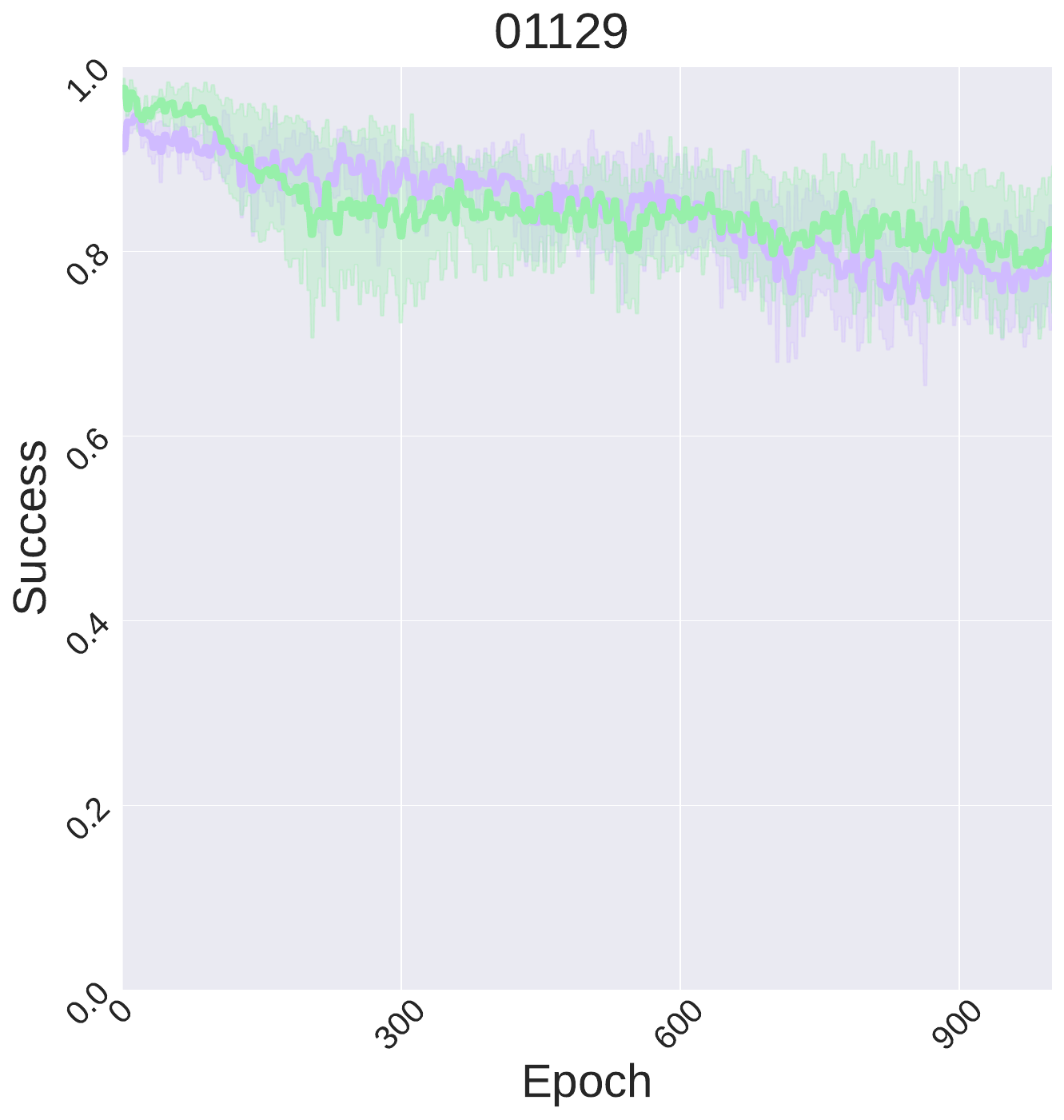}
\end{subfigure}
\begin{subfigure}{.3\textwidth}
  \centering
  \includegraphics[width=\linewidth]{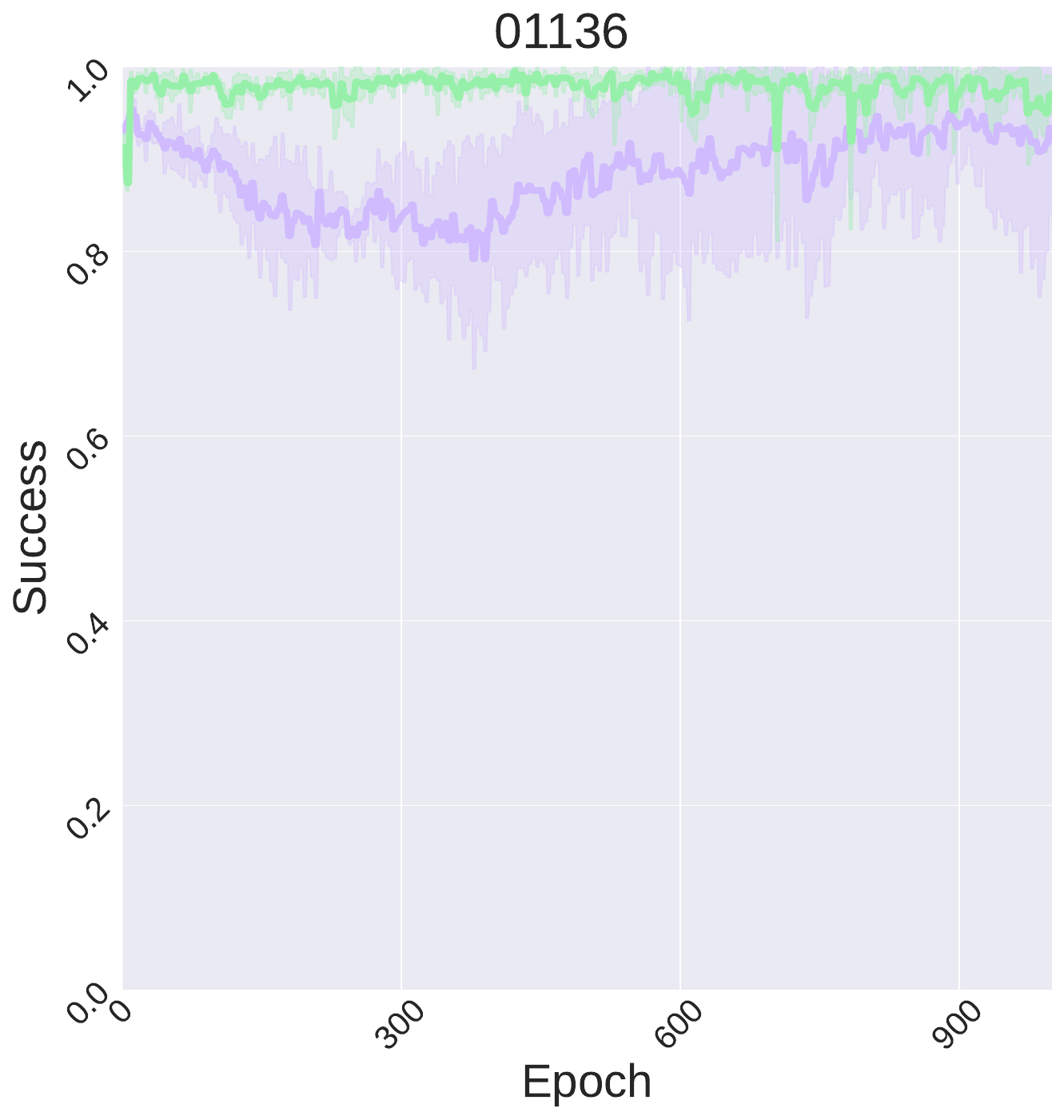}
\end{subfigure}%
\caption{\textbf{Learning curves on test tasks.}  The $x$-axis and $y$-axis represent training epochs (where each epoch consists of 128 environment steps over 256 parallel environments) and success rate, respectively. The solid line shows the mean success rate over 5 runs with different random seeds, and the shaded area denotes the standard deviation. SRSA takes the top-5 relevant skills based on transfer success prediction and selects one skill for policy initialization based on the ground-truth zero-shot success rate of applying the skill on the target task. SRSA-top1 directly retrieves the skill with the highest transfer success prediction.}
\label{fig:skill_adaptation_top5}
\end{figure}

\end{document}